\newtheorem{definition}{Definition}
\newtheorem{theorem}{Theorem}
\newtheorem{remark}{Remark}
\newtheorem{assumption}{Assumption}
\newtheorem{lemma}{Lemma}
\title{Sharper Utility Bounds for Differentially Private Models}
\author{\Large \textbf{Yilin Kang, Yong Liu, Jian Li, Weiping Wang}\\ % All authors must be in the same font size and format. Use \Large and \textbf to achieve this result when breaking a line
%\textsuperscript{\rm 1}Institute of Information Engineering, Chinese Academy of Sciences\\ %If you have multiple authors and multiple affiliations use superscripts in text and roman font to identify them. For example, Sunil Issar,\textsuperscript{\rm 2} J. Scott Penberthy\textsuperscript{\rm 3} George Ferguson,\textsuperscript{\rm 4} Hans Guesgen\textsuperscript{\rm 5}. %Note that the comma should be placed BEFORE the superscript for optimum readability
%\{kangyilin, liuyong, wangweiping\}@iie.ac.cn% email address must be in roman text type, not monospace or sans serif
}
\begin{document}

\maketitle

\begin{abstract}
In this paper, by introducing Generalized Bernstein condition, we propose the first $\mathcal{O}\big(\frac{\sqrt{p}}{n\epsilon}\big)$ high probability excess population risk bound for differentially private algorithms under the assumptions $G$-Lipschitz, $L$-smooth, and Polyak-{\L}ojasiewicz condition, based on gradient perturbation method.
If we replace the properties $G$-Lipschitz and $L$-smooth by $\alpha$-H{\"o}lder smoothness (which can be used in non-smooth setting), the high probability bound comes to $\mathcal{O}\big(n^{-\frac{\alpha}{1+2\alpha}}\big)$ w.r.t $n$, which cannot achieve $\mathcal{O}\left(1/n\right)$ when $\alpha\in(0,1]$.
To solve this problem, we propose a variant of gradient perturbation method, \textbf{max$\{1,g\}$-Normalized Gradient Perturbation} (m-NGP).
We further show that by normalization, the high probability excess population risk bound under assumptions $\alpha$-H{\"o}lder smooth and Polyak-{\L}ojasiewicz condition can achieve $\mathcal{O}\big(\frac{\sqrt{p}}{n\epsilon}\big)$, which is the first $\mathcal{O}\left(1/n\right)$ high probability excess population risk bound w.r.t $n$ for differentially private algorithms under non-smooth conditions.
Moreover, we evaluate the performance of the new proposed algorithm m-NGP, the experimental results show that m-NGP improves the performance of the differentially private model over real datasets.
It demonstrates that m-NGP improves the utility bound and the accuracy of the DP model on real datasets simultaneously.
\end{abstract}

\section{1. Introduction}

Machine learning has been widely used and found effective in many fields in recent years \cite{singha2021role,swapna2021diabetes,ponnusamy2021smart}.
When training machine learning models, tremendous data was collected, and the data often contains sensitive information of individuals, which may leakage personal privacy \cite{reza2017membership,nicholas2019thesecret}.
Under these circumstances, the privacy of machine learning models is of great importance.

Differential Privacy (DP) \cite{dwork2006calibrating,dwork2014algorithmic} is a theoretically rigorous tool to prevent sensitive information leakage.
It introduces random noise when training and blocks adversaries from inferring any single individual included in the dataset by observing the model.
The mathematical definition of DP is well accepted and relative technologies are performed by Google \cite{erlingsson2014rappor}, Apple \cite{mcmillan2016apple} and Microsoft \cite{ding2017collecting}.
As such, DP has attracted attentions from researchers and has been applied to numerous machine learning problems \cite{ullman2019efficiently,xu2019ganobfuscator,bernstein2019differentially,wang2019principal,heikkila2019differentially,kulkarni2021differentially,bun2021differentially,nguyen2021differentially}.

There are mainly three approaches to guarantee differential privacy: output perturbation \cite{chaudhuri2011differentially}, objective perturbation \cite{chaudhuri2011differentially}, and gradient perturbation \cite{song2013stochastic}.
Considering that gradient descent is a widely used optimization method, the gradient perturbation method can be used for a wide range of applications.
Besides, adding random noise to the gradient allows the model to escape local minima \cite{raginsky2017nonconvex}, so we focus on the gradient perturbation method to guarantee differential privacy in this paper.

In this paper, we aim to minimize the population risk, and measure the utility of the differentially private model by the excess population risk.
To get the excess population risk, an important step is to analyze the generalization error (the reason is demonstrated in Section 3).
Complexity theory \cite{bartlett2002localized} and algorithm stability theory \cite{bousquet2002stability} are popular tools to analyze the generalization error.
On one hand, \cite{chaudhuri2011differentially} applied the complexity theory and achieved an $\mathcal{O}\big(\max\{\frac{1}{\sqrt{n}},\sqrt[2/3]{\frac{p}{n\epsilon}}\}\big)$ high probability excess population risk bound under the assumption of strongly convex;
\cite{kifer2012private} achieved $\mathcal{O}\big(\frac{\sqrt{p}}{n\epsilon}\big)$ expected excess population risk bound via complexity theory.
On the other hand, the sharpest known high probability generalization bounds for DP algorithms analyzed via stability theory under different assumptions  \cite{wu2017bolt,bassily2019private,feldman2020private,bassily2020stability,wang2021differentially} are $\mathcal{O}\big(\frac{\sqrt{p}}{n\epsilon}+\frac{1}{\sqrt{n}}\big)$ or $\mathcal{O}\big(\frac{\sqrt[4]{p}}{\sqrt{n}\epsilon}\big)$, containing an inevitable $\mathcal{O}\big(\frac{1}{\sqrt{n}}\big)$ term, which is a bottleneck on the utility analysis.
Thus, we are focusing on the following question, which is still an open problem:

\textit{Can we achieve the high probability excess risk bounds with rate  $\mathcal{O}(\frac{\sqrt{p}}{n\epsilon})$ for DP models via uniform stability?}

By introducing \textit{Generalized Bernstein condition} \cite{koltchinskii2006local}, this paper answers the question positively.
We remove the $\mathcal{O}\big(\frac{1}{\sqrt{n}}\big)$ term in the generalization error and provide the first high probability excess population risk bound with order $\mathcal{O}\big(\frac{\sqrt{p}}{n\epsilon}\big)$ in the setting of DP.
Comparing with previous high probability bounds, the improvement is approximately up to $\mathcal{O}\left(\sqrt{n}\right)$.
The contributions of this paper include:
(1) We prove that by introducing Generalized Bernstein condition \cite{koltchinskii2006local}, the high probability excess population risk can be improved to $\mathcal{O}\big(\frac{\sqrt{p}}{n\epsilon}\big)$, under Lipschitz and smooth assumptions.
To our knowledge, this is the first $\mathcal{O}\big(\frac{\sqrt{p}}{n\epsilon}\big)$ high probability excess population risk bound for DP model.
(2) We relax the assumptions $G$-Lipschitz and $L$-smooth, by introducing $\alpha$-H{\"o}lder smooth.
Under these assumptions, we prove that the high probability excess population risk bound comes to $\mathcal{O}\big(\frac{\sqrt[4]{p}}{\sqrt{\epsilon}}n^\frac{-\alpha}{1+2\alpha}\big)$.
Considering that $\alpha\in(0,1]$, the result cannot achieve $\mathcal{O}\big(\frac{1}{n}\big)$ w.r.t $n$, but better than previous one w.r.t $p$ and $\epsilon$.
(3) To overcome the bottleneck, we design a variant of gradient perturbation method, called \textbf{max $\{1,g\}$-Normalized Gradient Perturbation} (m-NGP) algorithm.
Via this new proposed algorithm, we prove that under the assumptions $\alpha$-H{\"o}lder smooth and PL condition, the high probability excess population risk bound can be improved to $\mathcal{O}\big(\frac{\sqrt{p}}{n\epsilon}\big)$.
To the best of our knowledge, this is the first $\mathcal{O}\big(\frac{\sqrt{p}}{n\epsilon}\big)$ high probability excess population risk bound for non-smooth loss in the field of differential privacy.
(4) To evaluate the performance of our proposed max $\{1,g\}$-Normalized Gradient Perturbation algorithm, we perform experiments on several real datasets.
The experimental results show that m-NGP improves the accuracy and the convergence rate of the differentially private model on real datasets.

The rest of the paper is organized as follows.
Related work is given in Section 2.
Preliminaries are introduced in Section 3.
In Section 4, we propose sharper utility bounds under different assumptions and design a variant of gradient perturbation method, \textbf{max $\{1,g\}$-Normalized Gradient Perturbation}.
The experimental results are shown in Section 5.
Finally, we conclude the paper in Section 6.

\begin{table*}[t]
	\caption{Previous excess population risk bounds and ours under different assumptions}
	\label{tab1}
	\begin{center}
		\begin{small}
			\begin{sc}
				\begin{tabular}{ccccc}
					\toprule
					& Assumptions  & Method & Utility Bound & Type \\
					\midrule
					\cite{bassily2019private} & Lipschitz, smooth, convex & Gradient & $\mathcal{O}\left(\frac{\sqrt{p}}{n\epsilon}+\frac{1}{\sqrt{n}}\right)$ & E. \\
					\hline \\
					\cite{feldman2020private} & Lipschitz, convex & Gradient & $\mathcal{O}\left(\frac{\sqrt{p}}{n\epsilon}+\frac{1}{\sqrt{n}}\right)$ & E. \\
					\hline \\
					\cite{feldman2020private} & Lipschitz, strongly convex & Gradient & $\mathcal{O}\left(\frac{d}{n^2\epsilon^2}+\frac{1}{n}\right)$ & E. \\
					\hline \\
					\cite{bassily2020stability} & Lipschitz, convex & Gradient & $\mathcal{O}\left(\frac{\sqrt{p}}{n\epsilon}+\frac{1}{\sqrt{n}}\right)$ & H.P.\\
					\hline \\
					\cite{wang2021differentially} & $\alpha$-H{\"o}lder smooth, convex & Gradient & $\mathcal{O}\left(\frac{\sqrt{p}}{n\epsilon}+\frac{1}{\sqrt{n}}\right)$ & H.P. \\
					\hline \\
					\cite{wang2021differentially} & $\alpha$-H{\"o}lder smooth, convex & Output & $\mathcal{O}\left(\frac{\sqrt[4]{p}}{\sqrt{n}\epsilon}\right)$ & H.P. \\
					\hline \\
					Ours & Lipschitz, smooth, PL condition & Gradient & $\mathcal{O}\left(\frac{\sqrt{p}}{n\epsilon}\right)$ & H.P. \\
					\hline \\
					Ours & $\alpha$-H{\"o}lder smooth, PL condition & Gradient & $\mathcal{O}\left(\frac{\sqrt[4]{p}}{n^{\frac{\alpha}{1+2\alpha}}\epsilon^{\frac{1}{2}}}\right)$ & H.P. \\
					\hline
					Ours (m-NGP) & $\alpha$-H{\"o}lder smooth, PL condition & Gradient & $\mathcal{O}\left(\frac{\sqrt{p}}{n\epsilon}\right)$ & H.P. \\
					\bottomrule
				\end{tabular}
			\end{sc}
		\end{small}
	\end{center}
\end{table*}

\section{2. Related Work}

\cite{dwork2006calibrating} proposed the mathematical definition of differential privacy for the first time.
Then, it was developed to protect the privacy in the field of machine learning (e.g. Empirical Risk Minimization (ERM)) via output perturbation, objective perturbation, and gradient perturbation methods.
For DP-ERM formulations, \cite{chaudhuri2011differentially} first proposed output perturbation and objective perturbation methods, and \cite{song2013stochastic} first proposed the gradient perturbation method.
Based on these works, \cite{kifer2012private,bassily2014private,abadi2016deep,wang2017differentially,zhang2017efficient,wu2017bolt,bassily2019private,feldman2020private,bassily2020stability} further improved the results under different assumptions.

Among the works mentioned above, some of them analyzed the privacy guarantees \cite{song2013stochastic,abadi2016deep}, some of them discussed the excess empirical risk bound \cite{wang2017differentially,zhang2017efficient,wu2017bolt}.
Some works discussed the excess population risk under expectation, from different points of view, such as complexity theory, optimization theory, and stability theory: \cite{kifer2012private} achieved an $\mathcal{O}\big(\frac{\sqrt{p}}{n\epsilon}\big)$ excess population risk bound via complexity theory under expectation condition; \cite{bassily2014private} achieved similar expected excess population risk bound under convexity assumption, via optimization theory; \cite{wang2019differentially} proposed an $\mathcal{O}\big(\frac{p}{\log(n)\epsilon^2}\big)$ excess population risk bound under non-convex condition in expectation, via Langevin Dynamics method \cite{gelfand1991recursive} and the stability of Gibbs algorithm; and \cite{feldman2020private} gives expected population risk bound of the order $\mathcal{O}\big(\frac{1}{n}+\frac{p}{n^2}\big)$, under strongly convex condition.

However, the behavior of the algorithm within a single or few runs cannot be well captured by expectation bounds, which is related to the probabilistic nature.
In addition, in practical applications such as deep learning, it is often the case that the algorithm runs only once since the training process may take a long time.
Therefore, obtaining a high probability bound is essential to ensure the performance of the algorithm on a single or few runs.
So we focus on the high probability bound in this paper.
Meanwhile, we concentrate on stability theory.
Among many notions of stability, uniform stability is arguably the most popular one, which yields exponential generalization bounds.
Via uniform stability, the high probability excess population risk bounds under different assumptions given by previous works all contain an $\mathcal{O}\big(\frac{1}{\sqrt{n}}\big)$ term, details can be found in Table \ref{tab1}.
The reason is that when analyzing the generalization error, the technical routes follow works \cite{bousquet2002stability,hardt2016train}.
Besides, when analyzing the stability, previous works always do not consider the injected noise (e.g. \cite{wang2021differentially}), or assume the random noise injected into adjacent datasets is the same.
However, this is not reasonable, because `adjacent dataset' is also the basis of DP.
With the same random noise, it is hard to say that `DP is guaranteed'.

In this paper, we consider the random noise injected into adjacent datasets and analyze the stability under the noisy version.
By introducing the \textit{Generalized Bernstein condition} \cite{koltchinskii2006local}, we remove the $\mathcal{O}\big(\frac{1}{\sqrt{n}}\big)$ term when combining the stability and the generalization error, and further improve the excess population risk bound of differentially private models.
The improved convergence rate is up to $\mathcal{O}\big(\frac{\sqrt{p}}{n\epsilon}\big)$, which positively answers the question given in Section 1: Can the high probability excess population risk bound achieve $\mathcal{O}\left(1/n\right)$ w.r.t $n$.
The improvements are shown in Table \ref{tab1}.
For `TYPE', E. means expectation bound and H.P. means the high probability bound.

Table \ref{tab1} first shows that by adding more assumptions, we achieve a better high probability excess population risk bound, $\mathcal{O}\big(\frac{\sqrt{p}}{n\epsilon}\big)$, which is state-of-the-art to the best of our knowledge.
Then, we relax the assumptions and achieve $\mathcal{O}\big(\frac{\sqrt[4]{p}}{n^{\frac{\alpha}{1+2\alpha}}\epsilon^{\frac{1}{2}}}\big)$ high probability bound, but it cannot achieve the same bound ($\mathcal{O}\left(1/n\right)$ w.r.t $n$) under the condition that the loss function is Lipschitz, smooth, and satisfies PL condition.
To overcome this problem, we propose an algorithm called m-NGP, and achieve the $\mathcal{O}\big(\frac{\sqrt{p}}{n\epsilon}\big)$ result under the same assumptions: $\alpha$-H{\"o}lder smooth and PL condition.

Moreover, although it is hard to directly compare the PL condition with convexity, PL condition can be applied to many non-convex conditions (more information can be found in Section 4.2).
Besides, PL condition is weaker compared with strongly convex condition, and one of the best population risks under strongly convex condition is $\mathcal{O}\big(\frac{1}{n}+\frac{p}{n^2}\big)$ \cite{feldman2020private} (line 2 in Table \ref{tab1}).
However, the result is an expectation one, different from ours.
In this paper, we analyze the excess population risk bound of DP algorithm under high probability and PL cases, different from previous scenarios.

\section{3. Preliminaries}

In this paper, we assume that there are $n$ data instances in dataset $D$, i.e. $D=\{z_1,\cdots,z_n\}$ where $z=(x,y)$ with input $x\in\mathcal{X}$ and label $y\in\mathcal{Y}$, and $\mathcal{Z}=\mathcal{X}\times\mathcal{Y}$.
The data space is denoted by $\mathcal{D}$ and the parameter space is denoted by $\mathcal{C}$, the loss function $\ell$ is defined as $\ell(\cdot,\cdot):\mathcal{D}\times\mathcal{C}\rightarrow\mathbb{R}$.
Databases $D,D'\in\mathcal{D}^n$ differing by one data instance are denoted as $D\sim D'$, called \textit{adjacent databases}.
For a given vector $\bold{x}=[x_1,\cdots,x_d]^T$, its $\ell_p$-norm is $\|\bold{x}\|_p=(\sum\nolimits_{i=1}^{d}|x_i|^p)^\frac{1}{p}$.
And $A\lesssim B$ represents that there exists some constant $c>0$, $A\leq cB$.

\begin{definition}[Differential Privacy \cite{dwork2006calibrating}]\label{DP}
	A randomized algorithm: $\mathcal{A}:\mathcal{D}^n\rightarrow\mathbb{R}^p$ is ($\epsilon,\delta$)-differential privacy (DP) if for all $D\sim D'$ and events $S\in range(\mathcal{A})$:
	\begin{equation*}
	\mathbb{P}\left[\mathcal{A}(D)\in S\right]\leq e^\epsilon\mathbb{P}\left[\mathcal{A}(D')\in S\right]+\delta.
	\end{equation*}
\end{definition}

Definition \ref{DP} implies that the adversaries cannot infer whether an individual participates when training the machine learning model, because essentially the same distributions will be drawn over any adjacent datasets.
Some kind of attacks, such as membership inference attack, attribute inference attack, and memorization attack, can be thwarted by DP \cite{backes2016membership,bargav2019evaluating,nicholas2019thesecret}.

Throughout this paper, we focus on gradient perturbation method to guarantee ($\epsilon,\delta$)-DP, the paradigm is based on gradient descent: at iteration $t$,
\begin{equation}\label{DPGD}
\hat{\theta}_t\leftarrow\hat{\theta}_{t-1}-\eta_t\left(\nabla_\theta R_n(\hat{\theta}_{t-1})+b_t\right),
\end{equation}
where $\eta_t$ is the learning rate at iteration $t$, $b_t$ is the random noise injected into the gradient, $\hat{\theta}$ is corresponding model with privacy, and $R_n(\theta)$ is the empirical risk, defined as $R_n(\theta)\coloneqq\frac{1}{n}\sum_{i=1}^{n}\ell(z_i,\theta)$.

In this paper, we focus on minimizing the population risk $R(\theta)=\mathbb{E}_{z\sim \mathcal{D}}\left[\ell(z,\theta)\right]$.
In the setting of differential privacy, the excess population risk is defined by $R(\hat{\theta})-\min_{\theta\in\mathcal{C}}R(\theta)$, which can be decomposed into:
\begin{equation}\label{DeEPR}
\begin{aligned}
&R(\hat{\theta}_n)-\min_{\theta\in\mathcal{C}}R(\theta) \\
&=\underbrace{R(\hat{\theta}_n)-R_n(\hat{\theta}_n)}_{A}+\underbrace{R_n(\hat{\theta}_n)-R_n(\theta^*)}_{B} \\
&\quad+R_n(\theta^*)-R(\theta^*),
\end{aligned}
\end{equation}
where $\theta^*=\arg\min_{\theta\in\mathcal{C}}R(\theta),\theta_n^*=\arg\min_{\theta\in\mathcal{C}}R_n(\theta)$.
In (\ref{DeEPR}), part $A$ is exactly the generalization.
Via the definition of $\theta_n^*$, we have $R_n(\hat{\theta}_n)-R_n(\theta^*)\leq R_n(\hat{\theta}_n)-R_n(\theta_n^*)$, which bounds part $B$ by the optimization error (also called the excess empirical risk).
In this way, we answer the question mentioned in Section 1: Why generalization error is an important step towards the excess population risk.

To get the generalization error, algorithm stability theory is a popular tool, we introduce uniform stability and uniform argument stability here.
\begin{definition}[Uniform Stability \cite{bousquet2002stability}]\label{US}
	An algorithm $\theta_n$ is $\gamma$-uniformly stable if for any $S=\{z_1,\cdots,z_{i},\cdots,z_n\}$ and $S'=\{z_1,\cdots,z_i',\cdots,z_n\}$, where $i=1,\cdots,n$, it holds that
	\begin{equation*}
	\left|\ell(z,\theta_n(S))-\ell(z,\theta_n(S'))\right|\leq\gamma.
	\end{equation*}
\end{definition}
In this paper, we use notation $\theta_n(S)$ for both algorithm and model parameter.
By Definition \ref{US}, it is easy to follow that the uniform stability measures the upper bound of the difference (on the loss function) between the models derived from adjacent datasets.

\begin{definition}[Uniform Argument Stability \cite{bassily2020stability}]\label{UAS}
	Algorithm $\theta_n$ is $\gamma$-uniformly argument stable if for any $S=\{z_1,\cdots,z_{i},\cdots,z_n\}$ and $S'=\{z_1,\cdots,z_i',\cdots,z_n\}$, where $i=1,\cdots,n$, it holds that
	\begin{equation*}
	\left\Vert\theta_n(S)-\theta_n(S')\right\Vert_2\leq\gamma.
	\end{equation*}
\end{definition}

Definition \ref{UAS} shows that the uniform argument stability measures the upper bound of the difference (on the model parameter) between the models derived from adjacent datasets.

Furthermore, we introduce some assumptions.

\begin{assumption}[$G$-Lipschitz]\label{a1}
	The loss function $\ell:\mathcal{D}\times\mathcal{C}\rightarrow\mathbb{R}$ is $G$-Lipschitz over $\theta$ if for any $z\in\mathcal{D}$ and $\theta_1,\theta_2\in\mathcal{C}$, we have:
	$|\ell(z,\theta_1)-\ell(z,\theta_2)|\leq G\|\theta_1-\theta_2\|_2$.
\end{assumption}

With Assumption \ref{a1}, one can easily get that if the loss function is $G$-Lipchitz, then $\gamma$-uniformly argument stability implies $G\gamma$-uniformly stability.

\begin{assumption}[$L$-smooth]\label{a2}
	The loss function $\ell:\mathcal{D}\times\mathcal{C}\rightarrow\mathbb{R}$ is $L$-smooth over $\theta$ if for any $z\in\mathcal{D}$ and $\theta_1,\theta_2\in\mathcal{C}$, we have:
	$\|\nabla_\theta\ell(z,\theta_1)-\nabla_\theta\ell(z,\theta_2)\|_2\leq L\|\theta_1-\theta_2\|_2$.
\end{assumption}
If loss function $\ell(\cdot,\cdot)$ is differentiable, smoothness yields: $\ell(z,\theta_1)-\ell(z,\theta_2)\leq\langle\nabla_\theta\ell(z,\theta_2),\theta_1-\theta_2\rangle+\frac{L}{2}\left\Vert\theta_1-\theta_2\right\Vert_2^2$.

Assumptions $G$-Lipschitz and $L$-smooth are commonly used in the utility analysis of DP machine learning \cite{chaudhuri2011differentially,kifer2012private,abadi2016deep,bassily2019private,feldman2020private,bassily2020stability}.
To relax the Lipschitz and smoothness assumptions, we introduce the $\alpha$-H{\"o}lder smoothness of the loss function:
\begin{assumption}[$\alpha$-H{\"o}lder smooth]\label{a3}
	Let $\alpha\in(0,1]$.
	The loss function $\ell:\mathcal{D}\times\mathcal{C}\rightarrow\mathbb{R}$ is $\alpha$-H{\"o}lder smooth over $\theta$ with parameter $H$ if for any $z\in\mathcal{D}$ and $\theta_1,\theta_2\in\mathcal{C}$, we have:
	$\|\nabla_\theta\ell(z,\theta_1)-\nabla_\theta\ell(z,\theta_2)\|_2\leq H\|\theta_1-\theta_2\|_2^\alpha$.
\end{assumption}
\begin{lemma}[\cite{ying2017unregularized}]\label{t1}
	If the loss function $\ell(\cdot,\cdot)$ is differentiable, then Assumption \ref{a3} yields  $\ell(z,\theta_1)-\ell(z,\theta_2)\leq\langle\nabla_\theta\ell(z,\theta_2),\theta_1-\theta_2\rangle+\frac{H}{\alpha+1}\left\Vert\theta_1-\theta_2\right\Vert_2^{\alpha+1}$.
\end{lemma}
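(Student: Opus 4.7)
The plan is to reduce the Hölder-smooth descent inequality to a one-dimensional integral along the line segment joining $\theta_2$ to $\theta_1$, and then use Assumption~\ref{a3} to control the deviation of the gradient from its anchor value at $\theta_2$. This is the natural generalization of the classical descent lemma for $L$-smooth functions (which corresponds to $\alpha=1$), with the integration constant $\tfrac{1}{\alpha+1}$ arising from replacing a linear factor by $t^\alpha$.

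First, I would parametrize the segment by $\theta(t) = \theta_2 + t(\theta_1 - \theta_2)$ for $t \in [0,1]$. Since $\ell(z,\cdot)$ is differentiable, the fundamental theorem of calculus applied to $t \mapsto \ell(z,\theta(t))$ gives
\begin{equation*}
\ell(z,\theta_1) - \ell(z,\theta_2) = \int_0^1 \langle \nabla_\theta \ell(z,\theta(t)), \, \theta_1 - \theta_2 \rangle \, dt.
\end{equation*}
Second, I would add and subtract $\nabla_\theta \ell(z,\theta_2)$ inside the inner product. The constant part integrates out to $\langle \nabla_\theta \ell(z,\theta_2), \theta_1 - \theta_2 \rangle$, leaving the remainder
\begin{equation*}
\int_0^1 \langle \nabla_\theta \ell(z,\theta(t)) - \nabla_\theta \ell(z,\theta_2), \, \theta_1 - \theta_2 \rangle \, dt.
\end{equation*}
Third, I would bound this remainder by Cauchy--Schwarz together with Assumption~\ref{a3}, which yields $\|\nabla_\theta \ell(z,\theta(t)) - \nabla_\theta \ell(z,\theta_2)\|_2 \le H\|\theta(t)-\theta_2\|_2^\alpha = H t^\alpha \|\theta_1-\theta_2\|_2^\alpha$. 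Multiplying by $\|\theta_1 - \theta_2\|_2$ and integrating $\int_0^1 t^\alpha \, dt = \tfrac{1}{\alpha+1}$ produces exactly the claimed $\tfrac{H}{\alpha+1}\|\theta_1-\theta_2\|_2^{\alpha+1}$ remainder term.

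Honestly, there is no serious obstacle: the identity is algebraic once the segment parametrization is chosen, and every inequality invoked (Cauchy--Schwarz, Hölder-continuity of the gradient, monotone integration) is routine. The only place where a careless reader might slip is in tracking the exponent: the factor $\|\theta(t)-\theta_2\|_2^\alpha$ must contribute a $t^\alpha$ that is then multiplied by the leftover $\|\theta_1-\theta_2\|_2$ factor from Cauchy--Schwarz, so one must be sure that the exponent on $\|\theta_1-\theta_2\|_2$ ends up as $\alpha+1$ and not $\alpha$ or $2\alpha$. With that bookkeeping done, the stated inequality follows directly, and it specializes to the standard $L$-smooth descent lemma when $\alpha=1$.
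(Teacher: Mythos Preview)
Your argument is correct and is the standard proof of the H\"older-smooth descent inequality. Note, however, that the paper does not actually prove this lemma: it is stated as a cited result from \cite{ying2017unregularized} and invoked without proof, so there is no ``paper's own proof'' to compare against. Your derivation via the segment parametrization, the fundamental theorem of calculus, and Cauchy--Schwarz combined with Assumption~\ref{a3} is exactly the canonical route and would serve perfectly well as the omitted proof.
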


By the definition, it is easy to follow that if $\alpha=1$, it is equivalent to $H$-smooth; and if $\alpha\rightarrow0$, it satisfies the Lipschitz property given in Assumption \ref{a1}.
Besides, with bounded parameter space, i.e. $\|\mathcal{C}\|_2\leq M_\mathcal{C}$, $\alpha$-H{\"o}lder smoothness immediately implies $\max\{2HM_\mathcal{C},H\}$-Lipschitz.
Moreover, Assumption \ref{a3} instantiates many non-smooth loss functions.
For example, the $q$-norm hinge loss $\ell(z,\theta)=\left(\max\left(0,1-y\langle\theta,z\rangle\right)\right)^q$ for classification and the $q$-th power absolute distance loss $\ell(z,\theta)=|y-\langle\theta,z\rangle|^q$ for regression \cite{lei2020fine}, whose $\ell$ are ($q-1$)-H{\"o}lder smooth if $q\in(1,2]$ \cite{li2021improved}.
Lemma \ref{t1} shows that H{\"o}lder smoothness shares similar property with smoothness defined in Assumption \ref{a2}.

\section{4. Sharper Utility Bounds for Differentially Private Models}

\subsection{4.1. Privacy Guarantees}
Before analyzing the excess population risk bound, we first discuss the privacy guarantees in this section.
\cite{abadi2016deep} proposed the moments accountant method to measure the privacy costs of DP model training by stochastic gradient descent (SGD), \cite{wang2017differentially} further analyzed it under the setting of gradient descent (GD).
In this paper, we focus more on the utility analysis, to improve the excess population risk, so we directly apply it to the gradient perturbation method.

\begin{lemma}[\cite{wang2017differentially}]\label{l1}
	In gradient perturbation method in (\ref{DPGD}), if Assumption \ref{a1} holds, then for $\epsilon,\delta>0$, it is ($\epsilon,\delta$)-DP if the Gaussian random noise $b_t\sim\mathcal{N}(0,\sigma^2I_p)$, and for some constant $c$,
	\begin{equation*}
	\sigma^2=c\frac{G^2T\log(1/\delta)}{n^2\epsilon^2}.
	\end{equation*}
\end{lemma}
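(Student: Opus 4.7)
The plan is to derive the noise scale by combining a per-iteration sensitivity bound with tight composition across the $T$ gradient steps. Since the cited result from \cite{wang2017differentially} is an adaptation of the moments accountant \cite{abadi2016deep} to full-batch gradient descent, I would follow that template.

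First I would compute the $\ell_2$-sensitivity of the gradient oracle at each iteration. For adjacent datasets $D\sim D'$ differing in one record,
\begin{equation*}
\bigl\|\nabla R_n(\theta;D)-\nabla R_n(\theta;D')\bigr\|_2 \;=\; \tfrac{1}{n}\bigl\|\nabla\ell(z_i,\theta)-\nabla\ell(z_i',\theta)\bigr\|_2 \;\leq\; \tfrac{2G}{n},
\end{equation*}
using that Assumption \ref{a1} implies $\|\nabla\ell(z,\theta)\|_2\leq G$. Hence at every step of (\ref{DPGD}), the object we perturb with $\mathcal{N}(0,\sigma^2 I_p)$ has sensitivity at most $\Delta=2G/n$, so each step is an instance of the Gaussian mechanism, and by post-processing the released iterate $\hat\theta_t$ inherits the same privacy parameters.

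Next I would handle composition via the moments accountant. A single Gaussian-mechanism release with sensitivity $\Delta$ and noise variance $\sigma^2$ has $\lambda$-th log moment bounded by $\lambda(\lambda+1)\Delta^2/(2\sigma^2)$. Composing $T$ such releases (each using fresh independent noise) adds the moments, giving a total $\lambda$-th moment $\leq T\lambda(\lambda+1)\Delta^2/(2\sigma^2)$. Converting moments to $(\epsilon,\delta)$ via the Markov step $\delta=\exp(\alpha_T(\lambda)-\lambda\epsilon)$ and optimizing over $\lambda$, the standard calculation (identical to Theorem 1 of \cite{abadi2016deep}) shows that for the overall release to be $(\epsilon,\delta)$-DP it suffices to take
\begin{equation*}
\sigma^2 \;\geq\; c\,\frac{T\Delta^2\log(1/\delta)}{\epsilon^2} \;=\; c\,\frac{G^2\,T\log(1/\delta)}{n^2\epsilon^2}
\end{equation*}
for a universal constant $c$, which is exactly the stated bound.

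The main obstacle is only bookkeeping: one has to verify that the composition step really applies to full-batch gradient descent (as opposed to the subsampled SGD setting of the original moments accountant), and that the per-round sensitivity $2G/n$ is truly uniform in $\theta$, both of which follow directly from Assumption \ref{a1} and the deterministic nature of the update in (\ref{DPGD}). Because \cite{wang2017differentially} already carried out this adaptation, the proof reduces to citing that calculation after establishing the $2G/n$ sensitivity bound above.
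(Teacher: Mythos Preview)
The paper does not give its own proof of this lemma: it is stated as a citation of \cite{wang2017differentially} (itself building on the moments accountant of \cite{abadi2016deep}), and the text explicitly says the result is ``directly applied'' rather than re-derived. Your sketch---bounding the per-step $\ell_2$-sensitivity of $\nabla R_n$ by $2G/n$ via Assumption~\ref{a1}, invoking the Gaussian mechanism per iteration, and then composing $T$ steps with the moments accountant to get $\sigma^2 = cG^2T\log(1/\delta)/(n^2\epsilon^2)$---is exactly the argument that underlies the cited result, so it is correct and matches the intended route.
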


\begin{remark}\label{r1}
	Lemma \ref{l1} only assumes the loss function to be $G$-Lipschitz.
	If we assume that $\ell(\cdot,\cdot)$ is $\alpha$-H{\"o}lder smooth with parameter $H$, then $G$ can be replaced by $\max\{2HM_\mathcal{C},H\}$ as discussed above.
\end{remark}

\subsection{4.2. Analysis of the excess population risk}

Before analyzing the excess population risk, we first introduce some assumptions.

Most of the previous works assumed that the loss function is convex (or strongly convex) when analyzing the empirical and population risks.
In this paper, we use the Polyak-{\L}ojasiewicz (PL) condition to replace convexity.
\begin{assumption}[Polyak-{\L}ojasiewicz condition]\label{a5}
	Function $f(\theta)$ satisfies the Polyak-{\L}ojasiewicz (PL) condition if there exists $\mu>0$ and for every $\theta$,
	\begin{equation*}
	\left\Vert\nabla_\theta f(\theta)\right\Vert_2^2\geq2\mu\left(f(\theta)-f(\theta^*)\right),
	\end{equation*}
	where $\theta^*=\arg\min_{\theta\in\mathcal{C}}f(\theta)$.
	
	In this paper, we assume the empirical risk and the population risk both satisfy the PL condition\footnote{PL condition can be directly derived from strongly convex \cite{karimi2016linear}, so all the results given in this paper hold when it comes to the strongly convex conditions.}.
\end{assumption}

The Polyak-{\L}ojasiewicz condition is one of the weakest curvature conditions \cite{karimi2016linear,li2021improved}, weaker than `one-point convexity' \cite{kleinberg2018an}, `star convexity' \cite{zhou2019sgd}, and `quasar convexity' \cite{hinder2020near}.
It is widely used in the analysis of non-convex learning \cite{wang2017differentially,charles2018stability,lei2020sharper,lei2021learning} and many popular non-convex objective functions satisfy the PL condition, such as: matrix
factorization \cite{liu2016quadratic}, robust regression \cite{liu2016quadratic}, neural networks with one hidden layer \cite{li2017convergence}, mixture of two Gaussians \cite{balakrishnan2017statistical}, ResNets with linear
activations \cite{hardt2017identity}, linear dynamical systems \cite{hardt2018gradient}, phase retrieval \cite{sun2018a}, and blind deconvolution \cite{li2019rapid}.

\begin{remark}\label{r8}
	\cite{karimi2016linear} shows that the Polyak-{\L}ojasiewicz inequality directly implies the following inequality: $R_n(\theta)-R_n(\theta_n^*)\geq\frac{\mu}{2}\|\theta-\theta_n^*\|_2^2$, which is called the Quadratic Growth (QG) condition.
	In the following, we use QG to bound the argument stability of the gradient perturbation algorithm.
\end{remark}

Then, with Assmption \ref{a5}, we discuss the uniform argument stability of the private model.

\begin{lemma}\label{l9}
	In gradient perturbation method (\ref{DPGD}), if Assumptions \ref{a1}, \ref{a5} hold, then
	\begin{equation*}
	\left\Vert\hat{\theta}_n-\hat{\theta}_n'\right\Vert_2\leq2\sqrt{2}\sqrt{\frac{R_n(\hat{\theta}_n)-R_n(\theta_n^*)}{\mu}}+\frac{4G}{\mu n},
	\end{equation*}
	where $\hat{\theta}_n$ and $\hat{\theta}_n'$ denote the models derived from adjacent datasets $S$ and $S'$.
\end{lemma}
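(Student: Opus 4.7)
My plan is to decompose $\|\hat{\theta}_n-\hat{\theta}_n'\|_2$ via the triangle inequality through the two empirical minimizers $\theta_n^*$ and $\theta_n^{*\prime}=\arg\min_{\theta\in\mathcal{C}}R_n'(\theta)$ (where $R_n'$ is the empirical risk built on $S'$), and then bound each resulting piece by combining the Quadratic Growth consequence of PL highlighted in Remark~\ref{r8} with the Lipschitz Assumption~\ref{a1}. Concretely I write
\begin{equation*}
\|\hat{\theta}_n-\hat{\theta}_n'\|_2 \le \|\hat{\theta}_n-\theta_n^*\|_2 + \|\theta_n^*-\theta_n^{*\prime}\|_2 + \|\theta_n^{*\prime}-\hat{\theta}_n'\|_2
\end{equation*}
and treat the three terms separately.

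For the two end terms, QG applied to $R_n$ gives $\tfrac{\mu}{2}\|\hat{\theta}_n-\theta_n^*\|_2^2 \le R_n(\hat{\theta}_n)-R_n(\theta_n^*)$, hence $\|\hat{\theta}_n-\theta_n^*\|_2 \le \sqrt{2(R_n(\hat{\theta}_n)-R_n(\theta_n^*))/\mu}$, and the analogous inequality holds on the primed side under QG for $R_n'$ (guaranteed by Assumption~\ref{a5}, which is stated for the empirical risk in general). Because the gradient perturbation recursion in (\ref{DPGD}) treats $S$ and $S'$ symmetrically, I would then absorb the primed excess empirical risk into the unprimed one, so that the two end terms jointly contribute $2\sqrt{2(R_n(\hat{\theta}_n)-R_n(\theta_n^*))/\mu} = 2\sqrt{2}\sqrt{(R_n(\hat{\theta}_n)-R_n(\theta_n^*))/\mu}$; this is exactly the leading factor in the claim.

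For the middle ``minimizer-stability'' term I use the classical two-sided optimality argument. Adding $R_n(\theta_n^{*\prime})-R_n(\theta_n^*)\ge 0$ and $R_n'(\theta_n^*)-R_n'(\theta_n^{*\prime})\ge 0$ and rearranging gives
\begin{equation*}
0 \le R_n(\theta_n^{*\prime})-R_n(\theta_n^*) \le (R_n-R_n')(\theta_n^{*\prime}) - (R_n-R_n')(\theta_n^*).
\end{equation*}
Since $R_n-R_n' = \tfrac{1}{n}(\ell(z_i,\cdot)-\ell(z_i',\cdot))$, applying Assumption~\ref{a1} twice bounds the right-hand side by $\tfrac{2G}{n}\|\theta_n^{*\prime}-\theta_n^*\|_2$. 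Combining with QG on the left-hand side, $\tfrac{\mu}{2}\|\theta_n^{*\prime}-\theta_n^*\|_2^2 \le \tfrac{2G}{n}\|\theta_n^{*\prime}-\theta_n^*\|_2$, yields $\|\theta_n^*-\theta_n^{*\prime}\|_2 \le 4G/(\mu n)$, which is the additive constant in the stated bound.

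Summing the three pieces reproduces the inequality. The main obstacle, in my view, lies in Step~3: one cannot invoke Lipschitzness in the data (which is not assumed), so the proof must exploit the two-sided optimality of $\theta_n^*$ and $\theta_n^{*\prime}$ so that the cross-losses cancel into differences of the form $\ell(z,\theta_n^{*\prime})-\ell(z,\theta_n^*)$, to which the $\theta$-Lipschitz constant $G$ does apply. Everything else—the triangle inequality and the two applications of QG—is routine once Remark~\ref{r8} is in hand.
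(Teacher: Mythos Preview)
Your proposal is correct and follows essentially the same route as the paper's proof: the triangle inequality through $\theta_n^*$ and $\theta_n^{*\prime}$, QG (from PL) to bound the two end terms, the same symmetry argument to merge them into $2\sqrt{2}\sqrt{(R_n(\hat\theta_n)-R_n(\theta_n^*))/\mu}$, and the optimality-plus-Lipschitz bound $\|\theta_n^*-\theta_n^{*\prime}\|_2\le 4G/(\mu n)$ for the middle term. The only cosmetic difference is that the paper writes the middle step as the identity $R_n(\theta_n^{*\prime})-R_n(\theta_n^*)=\tfrac{1}{n}(\ell(z_j,\theta_n^{*\prime})-\ell(z_j,\theta_n^*))+\tfrac{1}{n}(\ell(z_j',\theta_n^{*\prime})-\ell(z_j',\theta_n^*))+R_{n'}(\theta_n^{*\prime})-R_{n'}(\theta_n^*)$ and then drops the last (nonpositive) term, which is algebraically the same as your two-sided optimality rearrangement.
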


The proof is given in Appendix A.1.
By the QG condition (implied by PL inequality) discussed in Remark \ref{r8}, Lemma \ref{l9} connects the uniform argument stability with the empirical risk.
And as a result, we only need to consider the noise when analyzing the empirical risk.
Besides, when it comes to the stability of DP models, previous results often assume that the noise added to the gradient in each iteration is the same for adjacent datasets $S$ and $S'$ (e.g. \cite{wang2021differentially}).
This is not reasonable because noise injection is an independent process, so we expand it in this paper.

And to remove the $\mathcal{O}(1/\sqrt{n})$ term in previous results, we further need the Generalized Bernstein condition when analyzing the excess population risk.
\begin{assumption}[Generalized Bernstein condition \cite{koltchinskii2006local}]\label{a4}
	We say the loss function $\ell$ satisfies the generalized Bernstein condition if for some $B>0$ for any $\theta\in\mathcal{C}$, we have:
	\begin{equation*}
	\mathbb{E}\left[\left(\ell(z,\theta)-\ell(z,\theta^*)\right)^2\right]\leq B\left(R(\theta)-R(\theta^*)\right).
	\end{equation*}
\end{assumption}

\begin{remark}\label{r7}
	Here, we discuss the connections between Assumptions \ref{a1}, \ref{a5} and \ref{a4}.
	Via Assumption \ref{a1}, we have $\mathbb{E}\left[\left(\ell(z,\theta)-\ell(z,\theta^*)\right)^2\right]\leq G^2\|\theta-\theta^*\|_2^2$.
	And Remark \ref{r8} shows that if $R(\theta)$ satisfies the Polyak-{\L}ojasiewicz condition, we have $\frac{\mu}{2}\|\theta-\theta^*\|_2^2\leq R(\theta)-R(\theta^*)$.
	Combining these inequalities together, we observe that if Assumptions \ref{a1} and \ref{a5} hold, then the loss function $\ell(\cdot)$ satisfies Assumption \ref{a4} with parameter $B=\frac{2G^2}{\mu}$.
\end{remark}

With the stability of the private model and Assumption \ref{a4}, now we come to the excess population risk.

\begin{theorem}\label{t2}
	If Assumptions \ref{a1}, \ref{a2}, and \ref{a5} hold, the loss function is bounded, i.e. $0\leq\ell(\cdot,\cdot)\leq M_\ell$, taking $\sigma$ given by Lemma \ref{l1}, $T=\mathcal{O}\left(\log(n)\right)$, $\eta_1=\cdots=\eta_T=\frac{1}{L}$, if $\zeta\in(\exp(-p/8),1)$, then with probability at least $1-\zeta$:
	\begin{equation*}
	\begin{aligned}
	&R(\hat{\theta}_n)-R(\theta^*) \\
	&\leq c_1\frac{G\log^{1.5}(n)\sqrt{p\log(1/\delta)}}{n\epsilon}\left(1+\left(\frac{8\log(T/\zeta)}{p}\right)^{1/4}\right) \\
	&\quad+c_2\frac{G^2p\log(n)\log(1/\delta)}{n^2\epsilon^2}\left(1+\left(\frac{8\log(T/\zeta)}{p}\right)^{1/4}\right)^2 \\
	&\quad+c_3\frac{\log(n)}{n}.
	\end{aligned}
	\end{equation*}
	for some constants $c_1,c_2,c_3>0$.
\end{theorem}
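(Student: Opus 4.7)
The plan is to decompose the excess population risk as in (\ref{DeEPR}) and analyze the three pieces separately, then combine them by exploiting the Generalized Bernstein condition (Assumption \ref{a4}), which holds for free by Remark \ref{r7}. Part $B$ is the optimization error of noisy GD under Assumptions \ref{a1}, \ref{a2}, \ref{a5}. Starting from $L$-smoothness applied to (\ref{DPGD}) with $\eta_t = 1/L$ and using PL to turn $-\tfrac{1}{2L}\|\nabla R_n\|_2^2$ into a $(1-\mu/L)$-contraction, I would obtain
\begin{equation*}
R_n(\hat{\theta}_t) - R_n(\theta_n^*) \leq \bigl(1-\tfrac{\mu}{L}\bigr)\bigl(R_n(\hat{\theta}_{t-1})-R_n(\theta_n^*)\bigr) + \tfrac{1}{2L}\|b_t\|_2^2.
\end{equation*}
Iterating for $T = \Theta(\log n)$ makes the initial contraction term $\mathcal{O}(1/n)$, while the noise accumulates as a geometric-weighted sum $\sum_t(1-\mu/L)^{T-t}\|b_t\|_2^2$; a $\chi_p^2$ tail inequality together with a union bound over $t \le T$ (valid as long as $\zeta \ge \exp(-p/8)$, so the cross-term $2\sqrt{p\log(T/\zeta)}$ in the chi-square concentration stays controllable and gives the factor $(1+(8\log(T/\zeta)/p)^{1/4})$ after a square-root expansion) yields part $B$ of the order $G^2 p \log n \log(1/\delta)/(n^2\epsilon^2)$ times this tail factor squared.

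Next, I would bound the generalization (part $A$) via stability. Lemma \ref{l9} turns the high-probability empirical-risk bound above into a uniform argument stability $\gamma_\theta = \mathcal{O}(\sqrt{\mathrm{opt}/\mu}+G/(\mu n))$, which Assumption \ref{a1} promotes to $\gamma = G\gamma_\theta$; because $\sqrt{\mathrm{opt}}$ carries a factor $\sqrt{p}/(n\epsilon)$, this already has the target rate. A plain McDiarmid-style stability bound of \cite{bousquet2002stability} would still pay an extra $M_\ell\sqrt{\log(1/\zeta)/n}$ additive bottleneck; this is exactly what Assumption \ref{a4} removes. Since the variance of $\ell(z,\hat{\theta}_n) - \ell(z,\theta^*)$ is controlled by $B\bigl(R(\hat{\theta}_n)-R(\theta^*)\bigr)$, a localized Bernstein concentration combined with the stability $\gamma$ produces a self-bounding inequality
\begin{equation*}
R(\hat{\theta}_n)-R_n(\hat{\theta}_n) \lesssim \gamma\log n + \tfrac{1}{2}\bigl(R(\hat{\theta}_n)-R(\theta^*)\bigr) + \tfrac{\log n}{n},
\end{equation*}
whose self-bounding term is absorbed on the left of (\ref{DeEPR}). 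A Bernstein bound on the i.i.d.\ sum $R_n(\theta^*)-R(\theta^*)$ contributes $\mathcal{O}(\log n / n)$, feeding the $c_3\log(n)/n$ term.

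Plugging $\sigma^2$ from Lemma \ref{l1} and $T = \Theta(\log n)$ into the three estimates produces exactly the three displayed terms: the dominant $\gamma\log n$ scales as $G^2\sqrt{p\log(n)\log(1/\delta)}\log n/(\sqrt{\mu}\,n\epsilon)$, matching the first summand (with $\mu$ absorbed into $c_1$); the optimization bound becomes the second; the i.i.d.\ concentration gives the third. The main obstacle will be the Bernstein/localization step, because $\hat{\theta}_n$ is a random, data-dependent output rather than an ERM, so a uniform union bound over $\mathcal{C}$ is unavailable. Instead one has to use uniform stability to interchange $\hat{\theta}_n(S)$ with the leave-one-out variant $\hat{\theta}_n(S^{(i)})$ (independent of $z_i$) and then invoke a self-bounding Bernstein inequality exploiting Assumption \ref{a4} to obtain fluctuations of order $\gamma\log n$ rather than $\sqrt{\gamma/n}$. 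Careful bookkeeping of the independence between the injected noise $(b_t)_t$ and the data, plus a union bound over the $T=\log n$ iterations, is what keeps the polylogarithmic dependence to the $\log^{1.5}(n)$ factor appearing in the statement.
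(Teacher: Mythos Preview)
Your plan is essentially correct and matches the paper's strategy: bound the optimization error via the smooth+PL contraction with a $\chi^2$ tail on the noise (this is the paper's Lemma~\ref{l6}), feed it into the argument-stability bound of Lemma~\ref{l9}, and then couple stability with the Generalized Bernstein condition to kill the $n^{-1/2}$ bottleneck via a self-bounding inequality, arriving at
\[
R(\hat\theta_n)-R(\theta^*)\;\lesssim\;\mathrm{opt}+\log(n)\sqrt{\mathrm{opt}}+\tfrac{\log n}{n},
\]
which is exactly the paper's intermediate inequality~(\ref{finalhat}).

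The one place where the paper's implementation differs from your sketch is the concentration step. You propose a leave-one-out swap $\hat\theta_n(S)\to\hat\theta_n(S^{(i)})$ followed by a direct high-probability Bernstein bound. The paper instead works entirely in $L_q$ moments, following the Klochkov--Zhivotovskiy/Bousquet et al.\ machinery: it introduces an \emph{independent copy} $\hat\theta_n'$ of the whole run, proves a moment bound on $R_n(\hat\theta_n)-R(\hat\theta_n)-\tfrac1n\sum_i\mathbb{E}[\ell(z_i,\hat\theta_n')\mid z_i]+\mathbb{E}R(\hat\theta_n)$ via Lemmas~\ref{l3} and~\ref{l8}, and then---to make the remaining sum $\sum_i(\mathbb{E}'\ell(z_i,\hat\theta_n')-\ell(z_i,\theta^*))$ amenable to a Bernstein moment inequality---splits each summand as $X_i'+X_i''$ where $X_i''$ uses the \emph{non-private} $\theta_n'$ (so it is zero-mean and Lemma~\ref{l4} applies) and $X_i'$ is absorbed back into the optimization error. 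This detour through the non-private iterate is precisely how the paper handles the obstacle you flag (that injected noise makes the private summands non-centered); your leave-one-out route would hit the same issue and would need an analogous fix. Either implementation works, but the moment-bound route cleanly delivers the $\log^{1.5}(n)$ factor via the $q\log n$ term in Lemma~\ref{l3} and avoids having to condition on the random stability parameter.
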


Detailed proof can be found in Appendix A.3, we give a proof sketch here.
First, by Lemma \ref{l9} and Lipschitzness, we get the uniform stability of gradient perturbation (\ref{DPGD}).
Then, we analyze the generalization error via stability theory.
By novel decomposition method, via Assumption \ref{a4} and its moments bound, we couple term $R(\hat{\theta}_n)-R_n(\hat{\theta}_n)$ and term $R_n(\theta^*)-R(\theta^*)$ in (\ref{DeEPR}) together, to remove the $\mathcal{O}\left(1/\sqrt{n}\right)$ term in the generalization error.
In this way, a better excess population risk bound is achieved.
The proof is motivated by \cite{klochkov2021stability} in the non-private case.
The key difference is that in the setting of DP, random noise is injected into the algorithm.
In \cite{klochkov2021stability}, a key step to analyze the generalization error is summing $X_i=\mathbb{E}'\left[\ell(z_i,\theta_n')-\ell(z_i,\theta^*)\right]$ for $i=1,\cdots,n$, where $\theta_n'$ is derived from an independent copy of the original dataset and $\mathbb{E}'$ means the expectation over the independent copy.
When summing, $X_i$ is required to be zero mean.
However, in the cases of DP, if we replace $\theta_n'$ by $\hat{\theta}_n'$, then $X_i$ are not zero mean.
Besides, for output perturbation, a common way to decompose the excess population risk is $R(\hat{\theta}_n)-R(\theta^*)\leq R(\hat{\theta}_n)-R(\theta_n)+R(\theta_n)-R_n(\theta_n)+R_n(\theta_n)-R_n(\theta_n^*)+R_n(\theta^*)-R(\theta^*)$, which naturally solves the problem mentioned above (the generalization error is discussed over the non-private model).
However, when it comes to the gradient perturbation method, we cannot solve the problem in this way, because the random noise is coupled with the gradient.
So, we decouple the noise terms and overcome the challenge by the moment Bernstein inequality.

\begin{remark}\label{r6}
	If omitting $\log(\cdot)$ and constant terms, the excess population risk bound comes to:
	\begin{equation*}
	\mathcal{O}\left(\frac{p}{n^2\epsilon^2}+\frac{\sqrt{p}}{n\epsilon}+\frac{1}{n}\right)=\mathcal{O}\left(\frac{\sqrt{p}}{n\epsilon}\right).
	\end{equation*}
	The result is better than previous ones containing $\mathcal{O}(\frac{1}{\sqrt{n}})$ terms.
	And it is the first high probability excess population risk bound over DP algorithm overcoming the $\mathcal{O}(n^{-1/2})$ bottleneck, to the best of our knowledge.
\end{remark}

%\begin{corollary}\label{c1}
%	If the loss function $\ell(\cdot,\cdot)$ is $\lambda$-strongly convex and bounded by $M_\ell$, then with Assumptions \ref{a1}, \ref{a2}, taking $T=\mathcal{O}(n^{2/7})$ and $\eta_t=\frac{1}{Lt}$ for each iteration $t$, then with high probability, we have
%	\begin{equation*}
%	R(\hat{\theta}_n)-R(\theta^*)=\mathcal{O}\left(\frac{p^{\frac{1}{2}}}{n^{\frac{4}{7}}\epsilon}\right).
%	\end{equation*}
%\end{corollary}
%The result holds because of the connections between $G$-Lipshictz, $\lambda$-strongly convex, PL inequality, and the Generalized Bernstein condition, which can be easily derived by the discussion given in Remark \ref{r2}.

Then, we replace Assumptions \ref{a1} ($G$-Lipschitz) and \ref{a2} ($L$-smooth) by Assumption \ref{a3} ($\alpha$-H{\"o}lder smooth).

\begin{theorem}\label{t3}
	If Assumptions \ref{a3} and \ref{a5} hold, the loss function and the parameter space are bounded, i.e. $0\leq\ell(\cdot,\cdot)\leq M_\ell$, $\|\mathcal{C}\|_2\leq M_\mathcal{C}$.
	Taking $\sigma$ given by Lemma \ref{l1}, $T=\mathcal{O}\left(n^{\frac{2}{1+2\alpha}}\right)$, and $\eta_t=\frac{2}{\mu(t+\kappa)}$, where $\kappa\geq\frac{2H^{1/\alpha}}{\mu}$, if $\zeta\in(\exp(-p/8),1)$, then with probability at least $1-\zeta$:
	\begin{equation*}
	\begin{aligned}
	&R(\hat{\theta}_n)-R(\theta^*) \\
	&\leq c_1\frac{G'\log(n)\sqrt[4]{p\log(1/\delta)}}{n^{\frac{\alpha}{1+2\alpha}}\epsilon^{1/2}}\left(1+\left(\frac{8\log(T/\zeta)}{p}\right)^{1/4}\right)^{\frac{1}{2}} \\
	&\quad+c_2\frac{G'^2\sqrt{p\log(1/\delta)}}{n^{\frac{2\alpha}{1+2\alpha}}\epsilon}\left(1+\left(\frac{8\log(T/\zeta)}{p}\right)^{1/4}\right) \\
	&\quad+c_3\frac{\log(n)}{n},
	\end{aligned}
	\end{equation*}
	for some constants $c_1,c_2$ and $c_3$, where $G'=\max\{2HM_\mathcal{C},H\}$.
\end{theorem}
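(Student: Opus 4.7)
The plan is to follow the scaffolding of Theorem~\ref{t2} but to replace its smoothness-based empirical-risk analysis with one tailored to $\alpha$-H\"older smoothness, while keeping the generalization-error argument essentially unchanged. We again decompose the excess population risk via (\ref{DeEPR}) into a generalization term $A$ and an optimization term $B$, and we again rely on three already-available pieces: privacy of the iterates from Lemma~\ref{l1}, argument stability from Lemma~\ref{l9}, and the Generalized Bernstein condition which we obtain ``for free'' from Remark~\ref{r7}.

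First I would check that all the inputs to the Theorem~\ref{t2} machinery still make sense. Assumption~\ref{a3} together with the bounded parameter space $\|\mathcal{C}\|_2\le M_\mathcal{C}$ yields $G'$-Lipschitzness with $G'=\max\{2HM_\mathcal{C},H\}$, so Lemma~\ref{l1} applies with $G$ replaced by $G'$ (this is exactly Remark~\ref{r1}), and Lemma~\ref{l9} applies with the same substitution. Combining Lipschitzness with Assumption~\ref{a5} (as in Remark~\ref{r7}) gives Assumption~\ref{a4} with $B=2G'^2/\mu$. Consequently, the novel decomposition used in Theorem~\ref{t2}, which couples the generalization term $A$ with $R_n(\theta^*)-R(\theta^*)$ via a moment Bernstein inequality and cancels the $\mathcal{O}(1/\sqrt{n})$ contribution, carries over verbatim. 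This reduces Theorem~\ref{t3} to producing a high-probability bound on the empirical excess risk $R_n(\hat{\theta}_n)-R_n(\theta_n^*)$ of the iterates (\ref{DPGD}) under H\"older smoothness, PL, and the stated choice of $\eta_t$ and $T$.

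Next I would do the empirical-risk analysis. Using Lemma~\ref{t1} on the update $\hat{\theta}_t=\hat{\theta}_{t-1}-\eta_t(\nabla R_n(\hat{\theta}_{t-1})+b_t)$ gives
\begin{equation*}
R_n(\hat{\theta}_t)-R_n(\hat{\theta}_{t-1})\le-\eta_t\langle\nabla R_n(\hat{\theta}_{t-1}),\nabla R_n(\hat{\theta}_{t-1})+b_t\rangle+\tfrac{H\eta_t^{\alpha+1}}{\alpha+1}\|\nabla R_n(\hat{\theta}_{t-1})+b_t\|_2^{\alpha+1}.
\end{equation*}
Taking conditional expectation over the independent Gaussian $b_t$ kills the cross term, and a standard Gaussian moment estimate gives $\mathbb{E}\|\nabla R_n(\hat{\theta}_{t-1})+b_t\|_2^{\alpha+1}\lesssim G'^{\alpha+1}+\sigma^{\alpha+1}p^{(\alpha+1)/2}$. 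Applying the PL inequality $\|\nabla R_n\|_2^2\ge 2\mu(R_n-R_n(\theta_n^*))$ converts this into a recursion of the form $a_t\le(1-c_1\mu\eta_t)a_{t-1}+c_2\eta_t^{\alpha+1}(G'^{\alpha+1}+\sigma^{\alpha+1}p^{(\alpha+1)/2})$ on $a_t:=\mathbb{E}[R_n(\hat{\theta}_t)-R_n(\theta_n^*)]$. The standard induction argument with $\eta_t=2/(\mu(t+\kappa))$, $\kappa\ge 2H^{1/\alpha}/\mu$, then yields $a_T\lesssim T^{-1}+(G'^{\alpha+1}+\sigma^{\alpha+1}p^{(\alpha+1)/2})T^{-\alpha}$, and substituting $\sigma^2\asymp G'^2T\log(1/\delta)/(n\epsilon)^2$ and optimizing in $T$ yields the choice $T\asymp n^{2/(1+2\alpha)}$ and an empirical-risk bound of order $G'^2\sqrt{p\log(1/\delta)}/(n^{2\alpha/(1+2\alpha)}\epsilon)$, matching the second summand of the claimed bound. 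Converting the expectation bound into the high-probability statement (with its $\log(T/\zeta)/p$ correction) is done by the standard Gaussian-tail argument on $\|b_t\|_2$ over the $T$ iterations.

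Finally, to obtain the theorem I would feed this high-probability empirical-risk bound back into Lemma~\ref{l9} to obtain a uniform argument stability of order $\sqrt{a_T/\mu}+G'/(\mu n)$, promote it to uniform stability via $G'$-Lipschitzness, and plug into the generalization machinery from Theorem~\ref{t2}. The dominant $\sqrt{a_T}$-type term, together with the coupling induced by Assumption~\ref{a4}, produces the leading $\sqrt[4]{p}$ factor in the first summand, while the optimization error gives the $\sqrt{p}$ summand and the $\log(n)/n$ residual survives unchanged. The main technical obstacle, and the only real departure from Theorem~\ref{t2}, is the empirical-risk recursion: the fractional exponent $\alpha+1$ means one can no longer follow the constant-step-size linear-convergence route used under $L$-smoothness, and one has to verify carefully that the diminishing schedule $\eta_t=2/(\mu(t+\kappa))$ with $\kappa\ge 2H^{1/\alpha}/\mu$ is precisely the one that keeps the ``descent up to H\"older remainder'' inequality usable in every iteration and produces the optimal trade-off at $T\asymp n^{2/(1+2\alpha)}$.
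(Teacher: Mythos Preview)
Your high-level scaffolding is exactly what the paper does: it reuses the Theorem~\ref{t2} generalization machinery to reduce everything to a bound of the form
\[
R(\hat{\theta}_n)-R(\theta^*)\ \le\ c_1\bigl(R_n(\hat{\theta}_n)-R_n(\theta_n^*)\bigr)+c_2\log(n)\sqrt{R_n(\hat{\theta}_n)-R_n(\theta_n^*)}+c_3\tfrac{\log(n)}{n},
\]
and then plugs in a high-probability optimization-error bound derived under H\"older smoothness (the paper's Lemma~\ref{l7}). So the reduction step is right, and the conclusion you reach about where the three summands in the theorem come from is correct.

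Where your proposal diverges is the optimization-error analysis, and there is a genuine gap. The paper never takes an expectation over $b_t$: it applies Young's inequality with exponents $2/(1-\alpha)$ and $2/(1+\alpha)$ to convert $\|\nabla R_n(\hat\theta_{t-1})+b_t\|_2^{\alpha+1}$ into a constant residual $\frac{H\eta_t^{\alpha+1}(1-\alpha)}{2(\alpha+1)}$ plus a quadratic $\frac{H\eta_t^{\alpha+1}}{2}\|\cdot\|_2^2$, bounds the cross term $\langle\nabla R_n,b_t\rangle$ by $G'\|b_t\|_2$ via Cauchy--Schwarz, and then invokes Lemma~\ref{l5} to control each $\|b_t\|_2$ with high probability. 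The whole recursion is therefore a deterministic inequality on the high-probability event, and the dominant noise contribution comes from the \emph{cross term}, scaling as $\eta_t G'\sigma\sqrt{p}$ (linear in $\eta_t$, not $\eta_t^{\alpha+1}$); this is precisely what produces the $\sqrt{p\log(1/\delta)}/\epsilon$ factor in Lemma~\ref{l7} after telescoping and choosing $T\asymp n^{2/(1+2\alpha)}$.

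Your plan instead ``kills the cross term'' by taking conditional expectation and bounds $\mathbb{E}\|\nabla R_n+b_t\|_2^{\alpha+1}$ by a Gaussian moment. That yields an \emph{expectation} recursion on $a_t=\mathbb{E}[R_n(\hat\theta_t)-R_n(\theta_n^*)]$, and the sentence ``converting the expectation bound into the high-probability statement is done by the standard Gaussian-tail argument on $\|b_t\|_2$'' does not work: a tail bound on the individual $\|b_t\|_2$ gives you no control on the deviation of $R_n(\hat\theta_T)-R_n(\theta_n^*)$ from its mean once the noise has already been averaged out. If you instead condition on the high-probability event from the start, the cross term does \emph{not} vanish, your recursion acquires an extra $\eta_t G'\|b_t\|_2$ term, and you are back to the paper's analysis (or you would need a separate martingale concentration argument for $\sum_t c_t\langle\nabla R_n(\hat\theta_{t-1}),b_t\rangle$, which you have not supplied). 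Either way, the route through Young's inequality and the pointwise cross-term bound is the missing ingredient; it is also the specific step the paper flags as ``the challenge'' and the reason the residual $\frac{H\eta_t^{\alpha+1}(1-\alpha)}{2(\alpha+1)}$ appears, which later motivates the m-NGP algorithm.
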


Detailed proof can be found in Appendix A.4.
The proof is similar to Theorem \ref{t2}.
Considering that the stability parameter given by Lemma \ref{l9} is related to the optimization error, so the key is to obtain the optimization error.
The challenge is that the properties $G$-Lipschitz and $L$-smooth are replaced by the assumption $\alpha$-H{\"o}lder smooth when analyzing it.
To overcome the challenge, we use Lemma \ref{t1} to bound the optimization error and Young's inequality is used to normalize the exponential rate.
By choosing proper learning rate, we get an acceptable excess population risk bound.
Details are shown in the proof of Lemma \ref{l7}.

By Theorem \ref{t3}, it is easy to follow that with high probability, the excess population risk satisfies:
\begin{equation*}
R(\hat{\theta}_n)-R(\theta^*)=\mathcal{O}\left(\frac{p^{1/4}}{\epsilon^{1/2}}n^{\frac{-\alpha}{1+2\alpha}}\right).
\end{equation*}

\begin{remark}\label{r9}
	By the definition of $\alpha$-H{\"older} smooth, we have $\alpha\in(0,1]$, so our result is worse than the gradient based result given by \cite{wang2021differentially} ($\mathcal{O}(1/\sqrt{n})$ w.r.t $n$).
	One of the reasons is that \cite{wang2021differentially} does not consider the noise injected into the model when analyzing stability.
	However, as discussed before, the noise addition is independent to datasets, so we expand the stability to the noisy version in this paper, which generalize previous settings.
	Besides, our result is of $\mathcal{O}(\epsilon^{-1/2})$ w.r.t $\epsilon$, and previous results are of the order $\mathcal{O}(\epsilon^{-1})$.
	In practice, $\epsilon$ is always set less than $1$ to guarantee meaningful privacy, so our result is more superior when it comes to conditions that privacy requirements are strict (low $\epsilon$ conditions).
\end{remark}

%\begin{corollary}\label{c2}
%	If the loss function $\ell(\cdot,\cdot)$ is $\lambda$-strongly convex and bounded by $M_\ell$, then with Assumptions \ref{a3}, taking $T=\mathcal{O}\left(n^{\frac{2}{1+2\alpha}}\right)$, and $\eta_t=\frac{2}{\mu(t+\kappa)}$, where $\kappa\geq\frac{2H^{1/\alpha}}{\mu}$, the excess population risk bound shares the same result with Theorem \ref{t3}, i.e. with high probability, we have
%	\begin{equation*}
%	R(\hat{\theta}_n)-R(\theta^*)=\mathcal{O}\left(\frac{\sqrt{p}}{\epsilon}n^{\frac{-2\alpha}{1+2\alpha}}\right).
%	\end{equation*}
%\end{corollary}

%The proof sketch is similar to Corollary \ref{c1}, in which PL condition is naturally satisfied given strongly convexity and the Generalized Bernstein condition derives from $G'$-Lipschitz and $\lambda$-strongly convex.

Via the discussion mentioned above, we observe that the result given in Theorem \ref{t3} is far worse than which given in Theorem \ref{t2}, if we replace Lipschitzness and smoothness by $\alpha$-H{\"o}lder smoothness.
The reason is that when applying Young's inequality in the optimization error analysis, an additional term $\frac{H\eta_t^{\alpha+1}(1-\alpha)}{2(\alpha+1)}$ appears, leading a loose excess population risk bound.

Motivated by this, we design a variant of gradient perturbation method, called \textbf{max$\bf\{1,g\}$-Normalized Gradient Perturbation} DP algorithm, to overcome the loose excess population risk bound.
Details are shown in Algorithm \ref{alg1}.

\begin{algorithm}[tb]
	\caption{max$\bf\{1,g\}$-Normalized Gradient Perturbation}
	\label{alg1}
	\begin{algorithmic}[1]
		\State {\bfseries Input:} Dataset $D$, learning rate at iteration $t$: $\eta_t$, the variance of the Gaussian noise injected to the gradient: $\sigma$.
		\State Initialize $\theta_0$.
		\For{$t=0$ {\bfseries to} $T-1$}
		\State $G_t\leftarrow R_n(\hat{\theta}_t)+b_t$, $b_t\sim\mathcal{N}\left(0,\sigma^2I_p\right)$.		
		\If{$\left\Vert G_t\right\Vert_2<1$}
		\State $G_t\leftarrow G_t/\left\Vert G_t\right\Vert_2$.
		\EndIf
		\State $\hat{\theta}_{t+1}\leftarrow\hat{\theta}_{t}-\eta_{t}G_t$.
		\EndFor
		\State Return $\hat{\theta}_n=\hat{\theta}_T$.
	\end{algorithmic}
\end{algorithm}

\begin{remark}\label{r4}
	The difference between Algorithm \ref{alg1} and (\ref{DPGD}) is that in lines 5 and 6, we normalize the $\ell_2$-norm of the gradient to $1$ if it is less than $1$.
	In this way, we can `bypass' the Young's inequality when scaling $\|\theta_t-\theta_n^*\|_2^{1+\alpha}$ (derived from Lemma \ref{t1}), further remove term $\frac{H\eta_t^{\alpha+1}(1-\alpha)}{2(\alpha+1)}$ in the theoretical analysis.
	Details can be found in Appendix A.4.
\end{remark}

Then, we improve the excess population risk bound given in Theorem \ref{t3}.

\begin{theorem}\label{t4}
	If Assumptions \ref{a3}, \ref{a5} hold, the loss function and the parameter space are bounded, i.e. $0\leq\ell(\cdot,\cdot)\leq M_\ell$, $\|\mathcal{C}\|_2\leq M_\mathcal{C}$.
	Taking $\sigma$ given by Lemma \ref{l1}, $T=\mathcal{O}\left(\log(n)\right)$, and $\eta_1=\cdots=\eta_T=\eta$, where $\eta=\left(\frac{1}{H}\right)^{1/\alpha}$, if $\zeta\in(\exp(-p/8),1)$, then with probability at least $1-\zeta$,
	\begin{equation*}
	\begin{aligned}
	&R(\hat{\theta}_n)-R(\theta^*) \\
	&\leq c_1\frac{G'\log^{1.5}(n)\sqrt{p\log(1/\delta)}}{n\epsilon}\left(1+\left(\frac{8\log(T/\zeta)}{p}\right)^{1/4}\right) \\
	&\quad+c_2\frac{G'^2\log(n)p\log(1/\delta)}{n^2\epsilon^2}\left(1+\left(\frac{8\log(T/\zeta)}{p}\right)^{1/4}\right)^2 \\
	&\quad+c_3\frac{\log(n)}{n},
	\end{aligned}
	\end{equation*}
	for some constants $c_1,c_2,c_3>0$, where $G'=\max\{2HM_\mathcal{C},H\}$.
\end{theorem}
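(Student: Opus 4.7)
The plan is to mirror the three-step template of Theorem~\ref{t2} --- excess empirical risk, uniform stability via Lemma~\ref{l9}, then excess population risk via the Generalized Bernstein argument --- while using the $\max\{1,g\}$-normalization of Algorithm~\ref{alg1} to obtain a linear-type convergence under $\alpha$-H{\"o}lder smoothness and PL, in place of the sublinear rate that drove Theorem~\ref{t3}.

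The core step is the new optimization analysis. Applying Lemma~\ref{t1} to two consecutive iterates yields
\begin{equation*}
R_n(\hat\theta_{t+1}) - R_n(\hat\theta_t) \leq -\eta\langle\nabla R_n(\hat\theta_t),G_t\rangle + \tfrac{H\eta^{1+\alpha}}{\alpha+1}\|G_t\|_2^{1+\alpha}.
\end{equation*}
The pivotal observation behind lines 5--6 of Algorithm~\ref{alg1}, flagged in Remark~\ref{r4}, is that the normalization forces $\|G_t\|_2\geq 1$ deterministically, so since $\alpha\in(0,1]$ one automatically has $\|G_t\|_2^{1+\alpha}\leq\|G_t\|_2^{2}$ in both branches of the \textbf{if}. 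This lets me bound the curvature term by $\|G_t\|_2^2$ \emph{without} invoking Young's inequality, eliminating the additive residue $\tfrac{H\eta^{\alpha+1}(1-\alpha)}{2(\alpha+1)}$ that degraded Theorem~\ref{t3}. Taking $\eta=(1/H)^{1/\alpha}$ then balances coefficients so that the curvature term absorbs into half of the linear term, and the recursion collapses to a standard perturbed-GD inequality. Invoking PL (Assumption~\ref{a5}) on the $\|\nabla R_n\|_2^2$ term produces the contraction $R_n(\hat\theta_{t+1})-R_n(\theta_n^*)\leq(1-c\mu\eta)(R_n(\hat\theta_t)-R_n(\theta_n^*))+O(\eta\|b_t\|_2^2)$, and Gaussian concentration of $\sum_t\|b_t\|_2^2$ together with unrolling for $T=\mathcal{O}(\log n)$ rounds delivers
\begin{equation*}
R_n(\hat\theta_T)-R_n(\theta_n^*) = \tilde{\mathcal{O}}\!\left(\tfrac{G'^2 p\log(1/\delta)}{n^2\epsilon^2}\right),
\end{equation*}
with $G'=\max\{2HM_\mathcal{C},H\}$ the Lipschitz constant used in Remark~\ref{r1} to calibrate $\sigma$.

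Plugging this empirical-risk bound into Lemma~\ref{l9} yields uniform argument stability $\gamma=\tilde{\mathcal{O}}(G'\sqrt{p\log(1/\delta)}/(n\epsilon))$, and the $G'$-Lipschitzness of $\ell$, inherited from $\alpha$-H{\"o}lder smoothness on the bounded parameter space, promotes this to $G'\gamma$-uniform stability. From here I re-run the generalization argument of Theorem~\ref{t2} essentially verbatim: Assumptions~\ref{a3} and~\ref{a5} imply the Generalized Bernstein condition (Assumption~\ref{a4}) with $B=2G'^2/\mu$, exactly as in Remark~\ref{r7} with $G$ replaced by $G'$, so the moment Bernstein inequality couples the two deviation terms in~(\ref{DeEPR}) and removes the $\mathcal{O}(1/\sqrt n)$ generalization bottleneck. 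Summing the three contributions --- an $\mathcal{O}(\sqrt p/(n\epsilon))$ generalization piece, an $\mathcal{O}(p/(n^2\epsilon^2))$ optimization piece, and an $\mathcal{O}(\log(n)/n)$ Bernstein tail --- produces the stated bound.

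The main technical obstacle sits in the first step. Normalization is nonlinear, so on the event $\{\|\nabla R_n(\hat\theta_t)+b_t\|_2<1\}$ the cross term $\langle\nabla R_n,G_t\rangle$ equals $\langle\nabla R_n,\nabla R_n+b_t\rangle/\|\nabla R_n+b_t\|_2$, rather than the clean $\|\nabla R_n\|_2^2+\langle\nabla R_n,b_t\rangle$ that the unnormalized update would provide. I would dispatch this by splitting on that event: when $\|\nabla R_n+b_t\|_2\geq 1$ the contraction above goes through verbatim, while on the complementary event the elementary bound $\|\nabla R_n(\hat\theta_t)\|_2\leq 1+\|b_t\|_2$ combined with PL controls $R_n(\hat\theta_t)-R_n(\theta_n^*)$ by $O((1+\|b_t\|_2^2)/\mu)$, which Gaussian concentration of $\|b_t\|_2^2$ folds into the same $\tilde{\mathcal{O}}(\sigma^2 p/\mu)$ noise floor. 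Making this case split respect the contraction uniformly in $t$ is what turns the a-priori suspicious ``divide a small gradient by its norm'' rule into the same convergence rate as in the smooth case, and is the technical heart of Theorem~\ref{t4}.
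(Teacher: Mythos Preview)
Your proposal is correct and follows essentially the same route as the paper: exploit the normalization in Algorithm~\ref{alg1} to ensure $\|G_t\|_2\geq 1$ so that $\|G_t\|_2^{1+\alpha}\leq\|G_t\|_2^2$, bypass Young's inequality, obtain a linear contraction under PL with $\eta=(1/H)^{1/\alpha}$ and $T=\mathcal{O}(\log n)$, and then feed the resulting $\tilde{\mathcal{O}}(G'^2p/(n^2\epsilon^2))$ optimization error into the same Lemma~\ref{l9}/Bernstein pipeline used for Theorem~\ref{t2}. Your case-split treatment of the normalized event is in fact more careful than the paper's own argument, which writes the descent inequality as if the update were always $-\eta(\nabla R_n(\hat\theta_t)+b_t)$ and invokes normalization only to justify replacing the exponent $1+\alpha$ by $2$.
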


Detailed proof is given in Appendix A.5.
The proof is similar to Theorems \ref{t2} and \ref{t3}, the key difference is that by \textit{gradient normalization}, Young's inequality is abandoned in the theoretical analysis (as discussed in Remark \ref{r4}), which implies a better excess population risk bound.

\begin{remark}\label{r5}
	We introduce normalization from theoretical view, and the experiments show that it works in practice (details can be found in Section 5).
	Here, we provide some intuitions on why normalization works.
	For gradient perturbation method, it is easy to observe that the random noise $b_t$ is sampled independently at each iteration $t$, so it is reasonable to suppose $b_t$ does not change too much when it comes to different iterations.
	As a result, if the $\ell_2$-norm of the gradient is too small, the random noise will play a more important role and the gradient property will be concealed beneath the noise.
	Besides, the random noise itself would help the generalization error \cite{li2020on}.
	So in m-NGP, we apply normalization to scale the $\ell_2$-norm of the gradient to 1 if it is less than 1, strengthens the gradient along with the random noise.
\end{remark}

By Theorem \ref{t4}, it is easy to follow that with high probability, $R(\hat{\theta}_n)-R(\theta^*)=\mathcal{O}\big(\frac{\sqrt{p}}{n\epsilon}\big)$.
The bound is of the same order as the result given in Theorem \ref{t2}.
This is also the first $\mathcal{O}\left(1/n\right)$ high probability excess population risk bound over DP algorithm w.r.t $n$ without smoothness.

%\begin{corollary}\label{c3}
%	If the loss function $\ell(\cdot,\cdot)$ is $\lambda$-strongly convex and bounded by $M_\ell$, then with Assumptions \ref{a3}, taking $T=\mathcal{O}\left(\log(n)\right)$, and $\eta_1=\cdots=\eta_T=\eta$, where $\left(\frac{2}{H}-\frac{2^{-1/\alpha}}{\mu H^{(\alpha-1)/\alpha}}\right)^{1/\alpha}<\eta<\left(\frac{2}{H}\right)^{1/\alpha}$, the excess population risk bound shares the same result with Theorem \ref{t4}, i.e. with high probability, we have
%	\begin{equation*}
%	R(\hat{\theta}_n)-R(\theta^*)=\mathcal{O}\left(\frac{\sqrt{p}}{n\epsilon}\right).
%	\end{equation*}
%\end{corollary}

%The proof is similar to Corollaries \ref{c1} and \ref{c2}, the key is deriving the generalized Bernstein condition and the PL condition from properties $\lambda$-strongly convex and $G'$-Lipschitz.

\begin{remark}\label{r3}
	Theorems \ref{t3} and \ref{t4} require Assumptions \ref{a3} and \ref{a5}, in this remark, we give some examples satisfy these assumptions.
	As discussed in Section 3, the $q$-norm hinge loss and $q$-th power absolute distance loss can be seemed as squared piecewise-linear loss functions when $q=2$, so they satisfy H{\"o}lder smoothness.
	For one-layer neural networks with squared error loss and leaky ReLU activations, the same phenomenon holds because the neural network can be seemed as a matrix multiplication.
	Beside, \cite{charles2018stability} shows that several interesting machine learning setups satisfy Assumption \ref{a5}, such as 1-layer neural networks with a squared error loss and leaky ReLU activations, loss functions of least squares minimizations, squared piecewise-linear functions with regularized term, etc.
	As a result, loss functions sataisfy those assumptions including but not limited to: (1) logistic regression and least squared minimization; (2) some of the squared piecewise linear functions; (3) some of the neural networks such as one-layer neural networks with squared error loss and leaky ReLU activations.
	The examples listed above are only part of the loss functions who satisfy those assumptions.
	We do not only focus on least squared minimization and logistic regression models, but extend the condition from smoothness to H{\"o}lder smoothness, and from strongly convex (convex) to the PL condition (some of the non-convex cases).
\end{remark}

\begin{figure*}[ht]
	\vskip 0.2in
	\begin{center}
		\centering{
			\subfigure[Iris]{\includegraphics[width=0.4\textwidth]{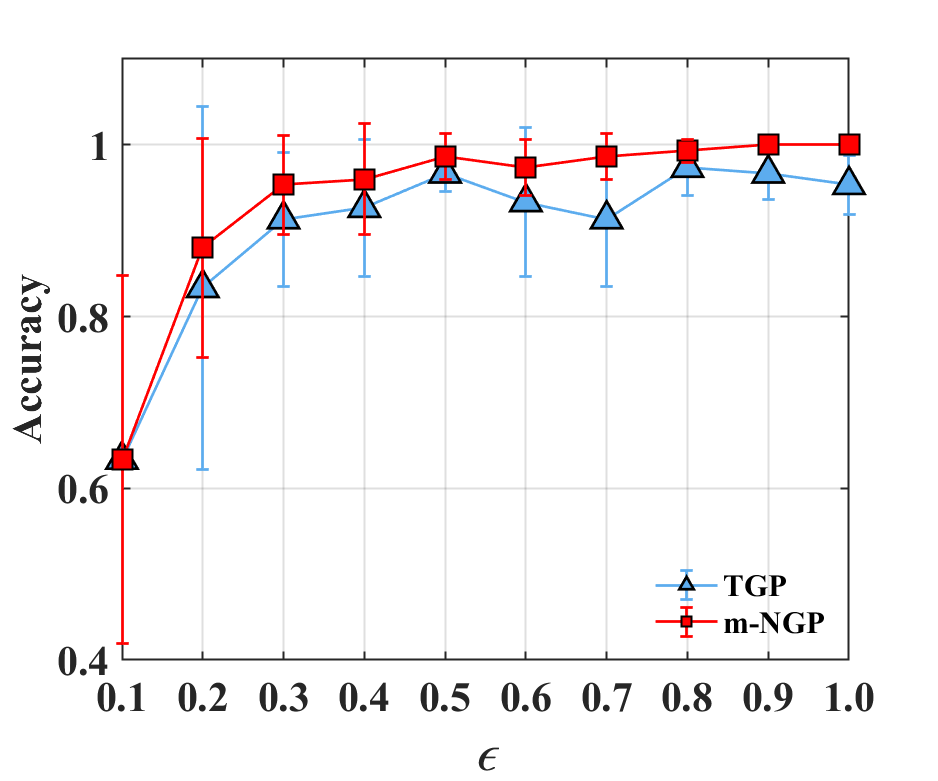}}
			%\subfigure[Credit Card Fraud]{\includegraphics[width=0.3\textwidth]{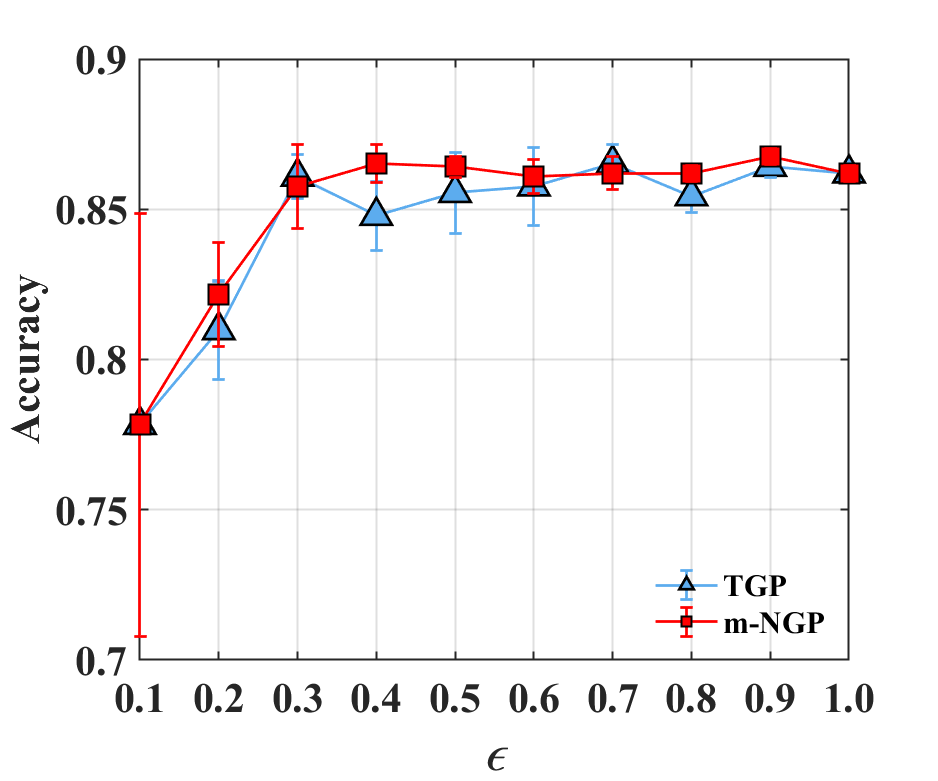}}
			\subfigure[Adult]{\includegraphics[width=0.4\textwidth]{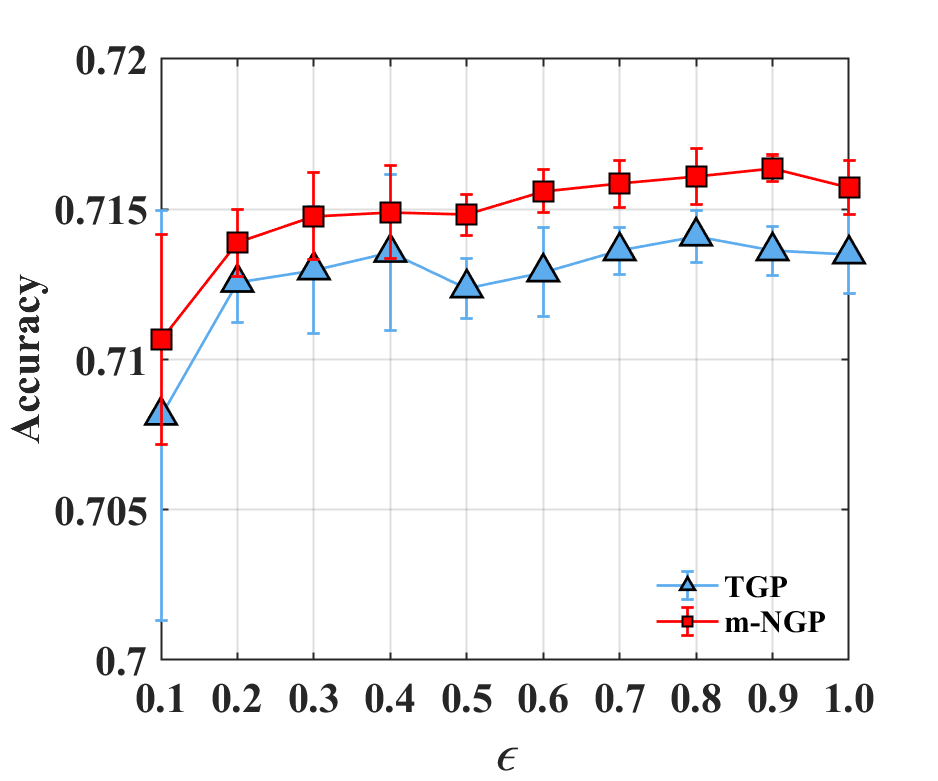}}
			\subfigure[Iris]{\includegraphics[width=0.4\textwidth]{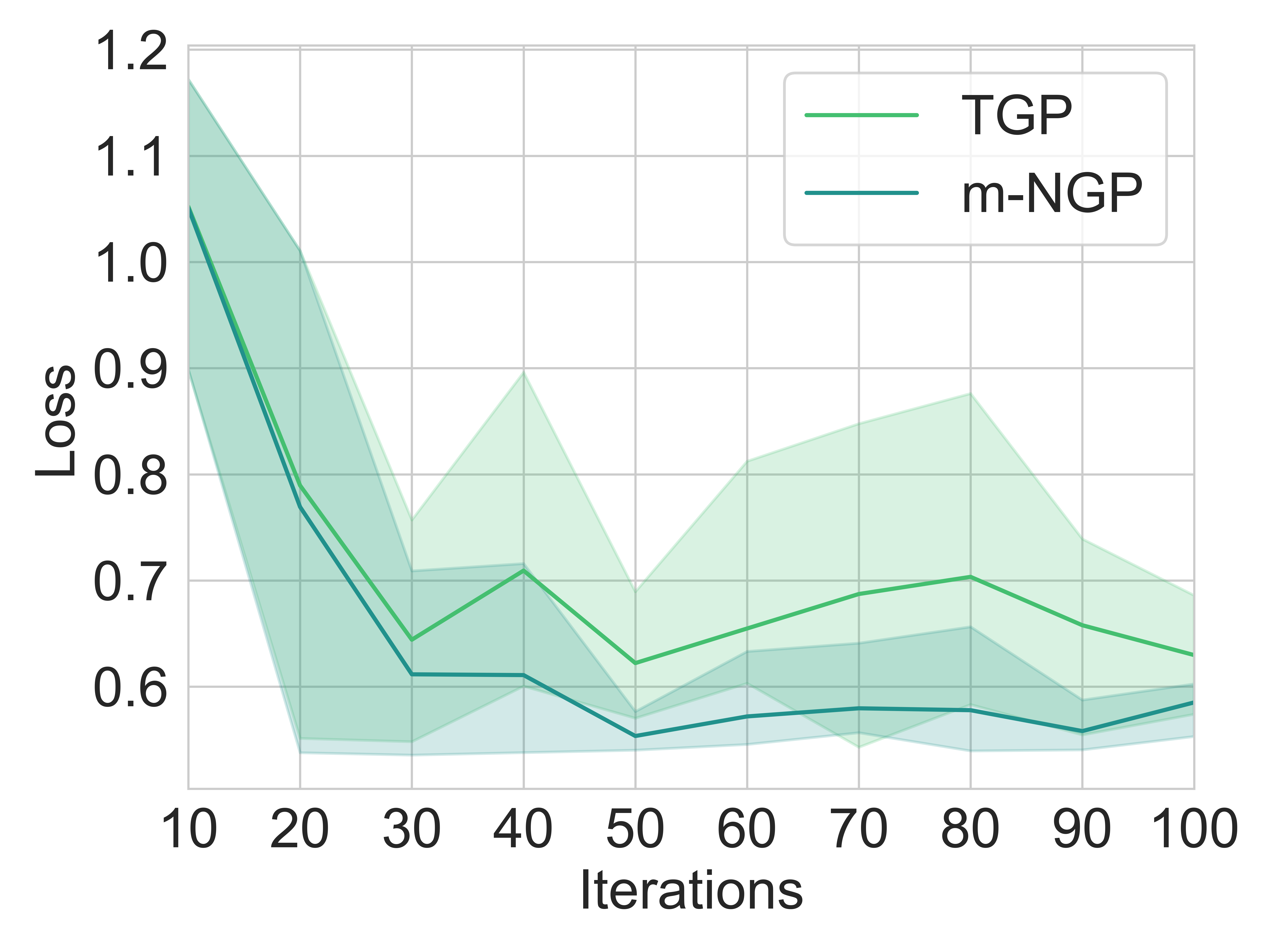}}
			%\subfigure[Credit Card Fraud]{\includegraphics[width=0.3\textwidth]{CC}}
			\subfigure[Adult]{\includegraphics[width=0.4\textwidth]{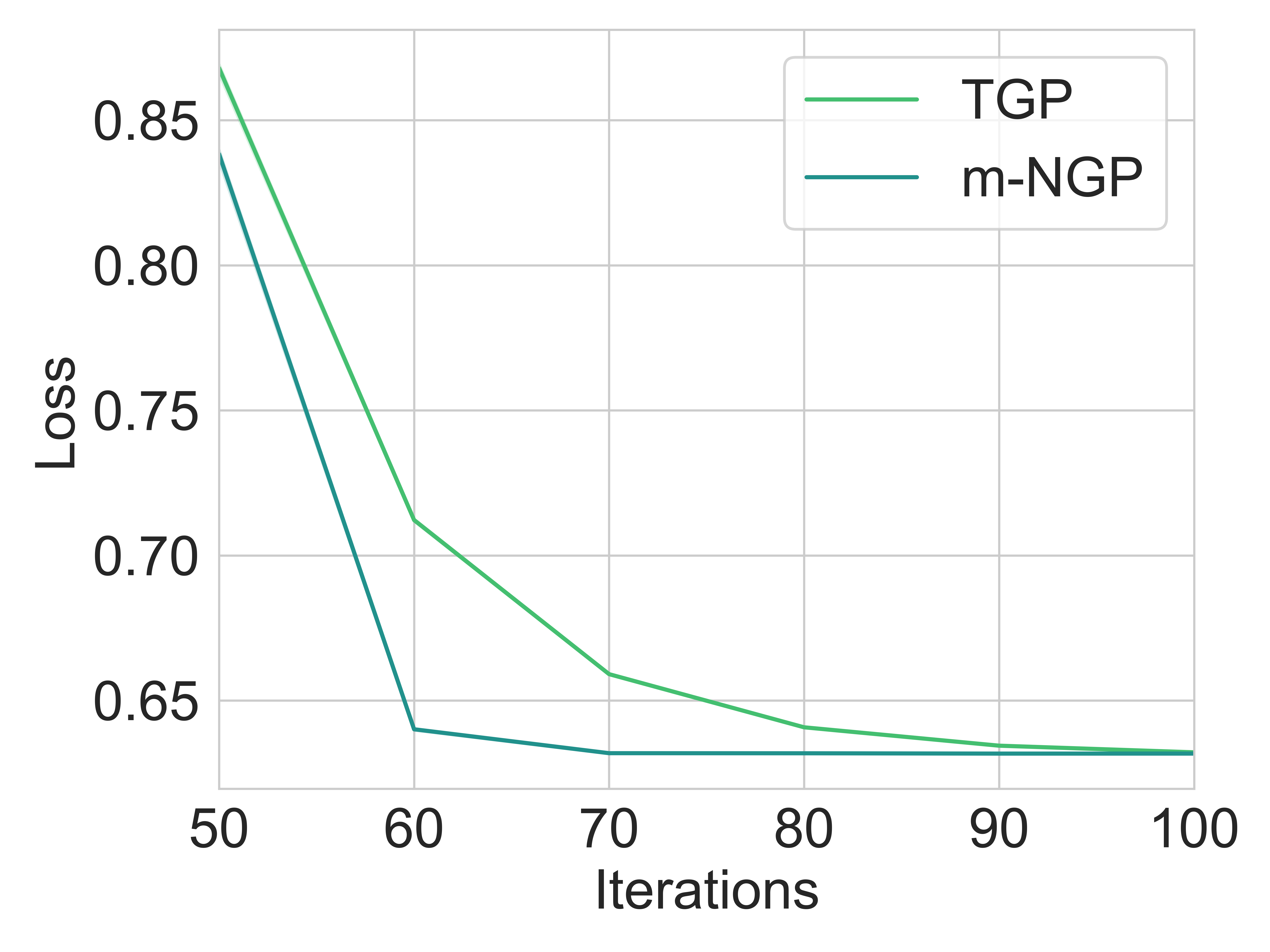}}
		}
	\end{center}
	\caption{Comparisons between Traditional Gradient Perturbation (TGP) and max$\bf\{1,g\}$-Normalized Gradient Perturbation (m-NGP).}
	\label{fig3}
	\vskip -0.2in
\end{figure*}

\section{5. Experiments}

In this section, we perform experiments on real datasets to evaluate our proposed our proposed m-NGP algorithm.

The experiments are performed on classification task over datasets Iris \cite{dua2017uci}, Breast Cancer \cite{mangasarian1990cancer}, Credit Card Fraud \cite{bontempi2018ulb}, Bank \cite{moro2014a}, and Adult \cite{dua2017uci}, the number of total data instances are 150, 699, 984, 41188, and 45222, respectively.
We split the training and testing sets randomly and evaluate the accuracy on the testing set and the convengence rate on the training set.
In all the experiments, the privacy budget $\delta$ is set $\frac{1}{n}$ and we choose $\epsilon=0.1$ to $1.0$.

We apply regularized logistic regression to the classification task, which satisfies the assumptions mentioned before, the experimental results are shown in Figure \ref{fig3}.
We show the results over datasets Iris and Adult in this section, experiments on other datasets are shown in Appendices B.1 and B.2.
For convergence rate, the shadow area represents the maximum and minimum loss over mutiple experiments, reflecting the variance.
The shadow area in part (d) of Figure \ref{fig3} is not obvious, the reason is that the variances are small.
Over most datasets, the accuracy and the convergence rate of m-NGP is better than traditional gradient perturbation method, which is in line with the theoretical analysis.
Moreover, in Appendix B.3, we perform experiments to demonstrate the effects brought by the dimension parameter $p$, the experimental results follow the theoretical results: with increasing $p$, the accuracy becomes worse in general.

\section{6. Conclusions}

In this paper, we first propose a state-of-the-art $\mathcal{O}\big(\frac{\sqrt{p}}{n\epsilon}\big)$ high probability excess population risk bound for gradient perturbation based DP algorithms, under the assumptions of $G$-Lipschitz, $L$-smooth, and Polyak-{\L}ojasiewicz condition.
The result positively answers the open problem: \textit{Can we achieve high probability excess risk bound with rate $\mathcal{O}(1/n)$ w.r.t $n$ for DP models via uniform stability?}
Then, we extend the result to a more general case, requiring $\alpha$-H{\"o}lder smoothness and Polyak-{\L}ojasiewicz condition.
However, the result is not as satisfactory as before, we achieve an $\mathcal{O}\big(n^{\frac{-\alpha}{1+2\alpha}}\big)$ high probability excess population risk bound, which cannot achieve an $\mathcal{O}\left(1/n\right)$ bound.
To get a better result, we further propose a new algorithm: max$\{1,g\}$-Normalized Gradient Perturbation (m-NGP).
Detailed theoretical analysis shows that m-NGP can achieve $\mathcal{O}\big(\frac{\sqrt{p}}{n\epsilon}\big)$ high probability excess population risk bound, under the assumptions of $\alpha$-H{\"o}lder smoothness and Polyak-{\L}ojasiewicz condition, which is the first $\mathcal{O}\left(1/n\right)$ high probability bound w.r.t $n$ under non-smoothness differentially private cases.
Experimental results show that the accuracy of m-NGP algorithm is better than traditional gradient perturbation method.
Thus, our proposed max$\{1,g\}$-Normalized Gradient Perturbation method improves the excess population risk bound and the accuracy of the DP model over real datasets, simultaneously.

\bibliographystyle{aaai}
\bibliography{icml2022}

\onecolumn
\begin{appendix}

\section{A. Details of Proofs}

\subsection{A.1. Proof of Lemma \ref{l9}}
\begin{lemma}\label{l9a}
	In gradient perturbation method (\ref{DPGD}), if Assumptions \ref{a1}, \ref{a5} hold, then
	\begin{equation*}
	\left\Vert\hat{\theta}_n-\hat{\theta}_n'\right\Vert_2\leq2\sqrt{2}\sqrt{\frac{R_n(\hat{\theta}_n)-R_n(\theta_n^*)}{\mu}}+\frac{4G}{\mu n},
	\end{equation*}
	where $\hat{\theta}_n$ and $\hat{\theta}_n'$ denote the models derived from adjacent datasets $S$ and $S'$.
\end{lemma}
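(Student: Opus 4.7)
The plan is to split the distance $\|\hat\theta_n-\hat\theta_n'\|_2$ via the triangle inequality through the empirical minimizers on the two adjacent datasets, and then exploit the Quadratic Growth (QG) consequence of the PL condition (Remark \ref{r8}) on each side, together with a short Lipschitz/minimality argument that controls the displacement of the minimizers themselves.

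Concretely, let $R_n$ and $R_n'$ denote the empirical risks on $S$ and $S'$, and let $\theta_n^{*}=\arg\min_\theta R_n(\theta)$ and $\theta_n'^{*}=\arg\min_\theta R_n'(\theta)$. I would first write
\[
\|\hat\theta_n-\hat\theta_n'\|_2\;\le\;\|\hat\theta_n-\theta_n^{*}\|_2+\|\theta_n^{*}-\theta_n'^{*}\|_2+\|\theta_n'^{*}-\hat\theta_n'\|_2 .
\]
The first and third terms are handled identically: by QG applied to $R_n$ and to $R_n'$ respectively, each is bounded by $\sqrt{2(R_n(\hat\theta_n)-R_n(\theta_n^{*}))/\mu}$ and $\sqrt{2(R_n'(\hat\theta_n')-R_n'(\theta_n'^{*}))/\mu}$.

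For the middle term I would use the classical minimizer-stability trick. QG for $R_n$ at $\theta_n'^{*}$ gives $\tfrac{\mu}{2}\|\theta_n^{*}-\theta_n'^{*}\|_2^{2}\le R_n(\theta_n'^{*})-R_n(\theta_n^{*})$. Using $R_n-R_n'=\tfrac{1}{n}\bigl(\ell(z_i,\cdot)-\ell(z_i',\cdot)\bigr)$, the minimality $R_n'(\theta_n'^{*})\le R_n'(\theta_n^{*})$, and applying $G$-Lipschitzness twice, yields
\[
R_n(\theta_n'^{*})-R_n(\theta_n^{*})\;\le\;\tfrac{2G}{n}\,\|\theta_n^{*}-\theta_n'^{*}\|_2 ,
\]
so dividing by $\|\theta_n^{*}-\theta_n'^{*}\|_2$ gives $\|\theta_n^{*}-\theta_n'^{*}\|_2\le 4G/(\mu n)$. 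This accounts for the additive $4G/(\mu n)$ term in the lemma.

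The main obstacle is the asymmetry in the stated bound: the RHS carries only $R_n(\hat\theta_n)-R_n(\theta_n^{*})$, whereas the natural triangle-inequality bound also produces the $R_n'$-optimization error $R_n'(\hat\theta_n')-R_n'(\theta_n'^{*})$. To reconcile this I would invoke the symmetry of the roles of $S$ and $S'$: since the algorithm is identical on both datasets, we may take WLOG $R_n'(\hat\theta_n')-R_n'(\theta_n'^{*})\le R_n(\hat\theta_n)-R_n(\theta_n^{*})$ (otherwise swap the labels of $S$ and $S'$), giving two equal QG contributions and hence the coefficient $2\sqrt{2}=2\cdot\sqrt{2}$ in front of the square-root term. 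Collecting the three pieces produces exactly the claimed bound.
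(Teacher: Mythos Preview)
Your proposal is correct and mirrors the paper's proof almost step for step: the same triangle-inequality split through $\theta_n^{*}$ and $\theta_n'^{*}$, the same QG bounds on the two outer pieces, the same Lipschitz-plus-minimality argument giving $\|\theta_n^{*}-\theta_n'^{*}\|_2\le 4G/(\mu n)$, and the same appeal to symmetry to collapse the two optimization-error terms into $2\sqrt{2}\sqrt{(R_n(\hat\theta_n)-R_n(\theta_n^{*}))/\mu}$. The paper simply writes ``due to the symmetry, the two inequalities can be integrated into one'' where you spell out the WLOG swap; otherwise the arguments are identical.
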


\begin{proof}
	By triangle inequality, we have:
	\begin{equation}\label{e6}
	\left\Vert\hat{\theta}_n-\hat{\theta}_n'\right\Vert_2\leq\left\Vert\hat{\theta}_n-\theta_n^*(S)\right\Vert_2+\left\Vert\theta_n^*(S)-\theta_n^*(S')\right\Vert_2+\left\Vert\theta_n^*(S')-\hat{\theta}_n'\right\Vert_2.
	\end{equation}
	
	Recalling that PL condition implies QG condition:
	\begin{equation}\label{e2}
	R_n(\theta)-R_n(\theta_n^*)\geq\frac{\mu}{2}\left\Vert\theta-\theta_n^*\right\Vert_2^2.
	\end{equation}
	
	So we have:
	\begin{equation*}
	\begin{aligned}
	\left\Vert\hat{\theta}_n-\theta_n^*(S)\right\Vert_2&\leq\sqrt{\frac{2}{\mu}\left(R_n(\hat{\theta}_n)-R_n(\theta_n^*)\right)}, \\
	\left\Vert\theta_n^*(S')-\hat{\theta}_n'\right\Vert_2&\leq\sqrt{\frac{2}{\mu}\left(R_n(\hat{\theta}_n')-R_n(\theta_n^*(S'))\right)}.
	\end{aligned}
	\end{equation*}
	
	Due to the symmetry, the two inequalities can be integrated into one, i.e.
	\begin{equation}\label{e5}
	\begin{aligned}
	\left\Vert\hat{\theta}_n-\theta_n^*(S)\right\Vert_2+\left\Vert\theta_n^*(S')-\hat{\theta}_n'\right\Vert_2&\leq2\sqrt{2}\sqrt{\frac{R_n(\hat{\theta}_n)-R_n(\theta_n^*)}{\mu}}.
	\end{aligned}
	\end{equation}
	
	Now we turn to term $\left\Vert\theta_n^*(S)-\theta_n^*(S')\right\Vert_2$.
	
	For $R_n(\theta_n^*(S'))-R_n(\theta_n^*(S))$, if assuming the different data instance is $z_j$, we have:
	\begin{equation}\label{e1}
	\begin{aligned}
	R_n(\theta_n^*(S'))-R_n(\theta_n^*(S))&=\frac{1}{n}\left(\ell\left(z_j,\theta_n^*(S')\right)-\ell\left(z_j,\theta_n^*(S)\right)\right)+\frac{1}{n}\sum_{i\neq j}\ell\left(z_i,\theta_n^*(S')\right)-\ell\left(z_i,\theta_n^*(S)\right) \\
	&=\frac{1}{n}\left(\ell\left(z_j,\theta_n^*(S')\right)-\ell\left(z_j,\theta_n^*(S)\right)\right)+\frac{1}{n}\left(\ell\left(z_j',\theta_n^*(S')\right)-\ell\left(z_j',\theta_n^*(S)\right)\right) \\
	&\quad+R_{n'}(\theta_n^*(S'))-R_{n'}(\theta_n^*(S)) \\
	&\leq\frac{2G}{n}\left\Vert\theta_n^*(S')-\theta_n^*(S)\right\Vert_2+R_{n'}(\theta_n^*(S'))-R_{n'}(\theta_n^*(S)) \\
	&\leq\frac{2G}{n}\left\Vert\theta_n^*(S')-\theta_n^*(S)\right\Vert_2,
	\end{aligned}
	\end{equation}
	where $R_{n'}(\theta)$ is the empirical risk over dataset $S'$, the first inequality holds because $G$-Lipschitzness and the last inequality holds because $R_{n'}(\theta_n^*(S'))-R_{n'}(\theta_n^*(S))\leq0$ due to the definition of $\theta_n^*(S')$.
	
	By inequality (\ref{e2}),
	\begin{equation}\label{e3}
	R_n(\theta_n^*(S'))-R_n(\theta_n^*(S))\geq\frac{\mu}{2}\left\Vert\theta_n^*(S')-\theta_n^*(S)\right\Vert_2^2.
	\end{equation}
	
	Combining inequalities (\ref{e1}) and (\ref{e3}), we have:
	\begin{equation*}
	\frac{\mu}{2}\left\Vert\theta_n^*(S')-\theta_n^*(S)\right\Vert_2^2\leq\frac{2G}{n}\left\Vert\theta_n^*(S')-\theta_n^*(S)\right\Vert_2,
	\end{equation*}
	which implies
	\begin{equation}\label{e4}
	\left\Vert\theta_n^*(S')-\theta_n^*(S)\right\Vert_2\leq\frac{4G}{\mu n}.
	\end{equation}
	
	Plugging inequalities (\ref{e5}) and (\ref{e4}) back into (\ref{e6}), we have:
	\begin{equation*}
	\left\Vert\hat{\theta}_n-\hat{\theta}_n'\right\Vert_2\leq2\sqrt{2}\sqrt{\frac{R_n(\hat{\theta}_n)-R_n(\theta_n^*)}{\mu}}+\frac{4G}{\mu n},
	\end{equation*}
	which completes the proof.
	
\end{proof}

\subsection{A.2. The Optimization Error}

As discussed before, the excess population risk can be decomposed into:
\begin{equation}\label{EPR}
\begin{aligned}
R(\hat{\theta}_n)-R(\theta^*)&=R(\hat{\theta}_n)-R_n(\hat{\theta}_n)+R_n(\hat{\theta}_n)-R_n(\theta^*)+R_n(\theta^*)-R(\theta^*) \\
&\leq R(\hat{\theta}_n)-R_n(\hat{\theta}_n)+R_n(\hat{\theta}_n)-R_n(\theta_n^*)+R_n(\theta^*)-R(\theta^*).
\end{aligned}
\end{equation}

In Lemma \ref{l9}, the stability is also related to the optimization error (excess empirical risk), so we first discuss the optimization error of the private model $\hat{\theta}_n$, i.e. $R_n(\hat{\theta}_n)-R_n(\theta_n^*)$, under different assumptions.

To get the optimization error bound, we need the following lemma given in \cite{yang2021stability}.
\begin{lemma}[\cite{yang2021stability}]\label{l5}
	If Gaussian random noise $b\sim\mathcal{N}(0,\sigma^2I_p)$, then for $\zeta\in(\exp(-p/8),1)$, we have with probability $1-\zeta$,
	\begin{equation*}
	\|b\|_2\leq\sigma\sqrt{p}\left(1+\left(\frac{8\log(1/\zeta)}{p}\right)^{1/4}\right).
	\end{equation*}
\end{lemma}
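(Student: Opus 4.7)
The plan is to reduce the bound to a standard concentration estimate on the norm of an isotropic Gaussian vector. Write $b = \sigma z$ with $z \sim \mathcal{N}(0, I_p)$, so that $\|z\|_2^2$ follows a $\chi^2_p$ distribution. I would apply the Laurent--Massart tail bound $\Pr(\|z\|_2^2 \geq p + 2\sqrt{pt} + 2t) \leq e^{-t}$, valid for all $t \geq 0$. Setting $t = \log(1/\zeta)$ and taking square roots gives, with probability at least $1 - \zeta$,
\[
\|b\|_2 \leq \sigma\sqrt{p}\,\sqrt{1 + 2\sqrt{t/p} + 2t/p}.
\]

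The rest is a purely algebraic simplification to match the stated form. The target inequality $\sqrt{1 + 2\sqrt{t/p} + 2t/p} \leq 1 + (8t/p)^{1/4}$, after squaring, reduces to $2\sqrt{t/p} + 2t/p \leq 2(8t/p)^{1/4} + (8t/p)^{1/2}$. The half-power terms are handled automatically since $(8t/p)^{1/2} = 2\sqrt{2}\sqrt{t/p}$ dominates $2\sqrt{t/p}$, and the leftover linear term is absorbed by the quarter-power contribution: $2t/p \leq 2(8t/p)^{1/4}$ whenever $(t/p)^3 \leq 8$, i.e.\ $t/p \leq 2$. The hypothesis $\zeta > \exp(-p/8)$ gives $t/p < 1/8$, so this condition is comfortably satisfied.

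An equivalent route is to apply Gaussian Lipschitz concentration directly to the $1$-Lipschitz map $z \mapsto \|z\|_2$, using $\mathbb{E}\|z\|_2 \leq \sqrt{p}$ from Jensen's inequality, yielding $\|b\|_2 \leq \sigma(\sqrt{p} + \sqrt{2\log(1/\zeta)})$ followed by the same algebraic step. Neither route hides any real difficulty; the only mildly awkward part is recognizing why the bound is phrased with the weaker fourth-root term $(8t/p)^{1/4}$ rather than the sharper $\sqrt{2t/p}$, which is essentially cosmetic---it is presumably chosen so that the noise bound meshes with the $p^{1/4}$ factors appearing elsewhere in Theorems \ref{t2}--\ref{t4}, and the small-$\zeta$ constraint is precisely what keeps the exchange valid.
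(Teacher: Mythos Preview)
Your argument is correct. The Laurent--Massart bound gives exactly the displayed inequality, and your algebraic verification that $\sqrt{1+2\sqrt{u}+2u}\le 1+(8u)^{1/4}$ for $u=t/p\le 1/8$ is clean and accurate (indeed it holds for all $u\le 2$, as you note). The alternative via Gaussian Lipschitz concentration of $z\mapsto\|z\|_2$ also works and is arguably the shorter path.

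As for comparison: the paper does not actually prove this lemma. It is quoted as an auxiliary result from \cite{yang2021stability} and used as a black box in the optimization-error analyses (Lemmas~\ref{l6} and~\ref{l7} and the proof of Theorem~\ref{t4}). So there is no ``paper's own proof'' to set your proposal against; you have simply supplied a self-contained derivation of a cited tail bound, and it is sound.
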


\begin{lemma}\label{l6}
	If the Assumptions \ref{a1}, \ref{a2}, \ref{a5} hold and the DP model is trained by $T$-iterations gradient perturbation method (\ref{DPGD}), then taking $T=\mathcal{O}\left(\log(n)\right)$, $\eta_1=\cdots=\eta_T=\frac{1}{L}$, if $\zeta\in(\exp(-p/8),1)$, with probability at least $1-\zeta$,
	\begin{equation*}
	R_n(\hat{\theta}_n)-R_n(\theta_n^*)\lesssim\frac{G^2p\log(n)\log(1/\delta)}{n^2\epsilon^2}\left(1+\left(\frac{8\log(T/\zeta)}{p}\right)^{1/4}\right)^2.
	\end{equation*}
\end{lemma}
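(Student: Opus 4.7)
The plan is to use smoothness to obtain a one-step descent inequality, then apply the PL condition to convert it into a geometric contraction on the suboptimality gap with a stochastic noise residual, and finally unroll and bound the noise on a single high-probability event.

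First I would write the update as $\hat{\theta}_t = \hat{\theta}_{t-1} - \eta_t (\nabla R_n(\hat{\theta}_{t-1}) + b_t)$ with $\eta_t = 1/L$, and invoke $L$-smoothness of $R_n$ (which follows from Assumption \ref{a2} by averaging) to get
\begin{equation*}
R_n(\hat{\theta}_t) \leq R_n(\hat{\theta}_{t-1}) - \tfrac{1}{L}\langle \nabla R_n(\hat{\theta}_{t-1}), \nabla R_n(\hat{\theta}_{t-1}) + b_t\rangle + \tfrac{1}{2L}\left\Vert \nabla R_n(\hat{\theta}_{t-1}) + b_t \right\Vert_2^2.
\end{equation*}
Expanding the square, the cross terms cancel and I am left with $R_n(\hat{\theta}_t) \leq R_n(\hat{\theta}_{t-1}) - \tfrac{1}{2L}\|\nabla R_n(\hat{\theta}_{t-1})\|_2^2 + \tfrac{1}{2L}\|b_t\|_2^2$. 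Subtracting $R_n(\theta_n^*)$ from both sides and applying the PL inequality (Assumption \ref{a5}) to the gradient-norm term yields the contraction
\begin{equation*}
R_n(\hat{\theta}_t) - R_n(\theta_n^*) \leq \left(1 - \tfrac{\mu}{L}\right)\left(R_n(\hat{\theta}_{t-1}) - R_n(\theta_n^*)\right) + \tfrac{1}{2L}\left\Vert b_t \right\Vert_2^2.
\end{equation*}

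Next I would unroll this recursion $T$ steps, which gives
\begin{equation*}
R_n(\hat{\theta}_T) - R_n(\theta_n^*) \leq \left(1 - \tfrac{\mu}{L}\right)^T \left(R_n(\hat{\theta}_0) - R_n(\theta_n^*)\right) + \frac{1}{2L}\sum_{t=1}^{T}\left(1 - \tfrac{\mu}{L}\right)^{T-t}\left\Vert b_t\right\Vert_2^2.
\end{equation*}
The initial gap is bounded by $M_\ell$ since $0 \leq \ell \leq M_\ell$, and choosing $T = \Theta(\log n)$ large enough makes $(1-\mu/L)^T \cdot M_\ell = \mathcal{O}(1/n^2)$, which is absorbed by the target rate. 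To control the noise sum, I apply Lemma \ref{l5} to each $b_t$ with failure probability $\zeta/T$ and take a union bound over $t = 1,\dots,T$, yielding $\|b_t\|_2^2 \leq \sigma^2 p \bigl(1 + (8\log(T/\zeta)/p)^{1/4}\bigr)^2$ simultaneously for all $t$ with probability at least $1-\zeta$. The geometric series sums to at most $L/\mu$, so the noise contribution is bounded by $\tfrac{\sigma^2 p}{2\mu}\bigl(1+(8\log(T/\zeta)/p)^{1/4}\bigr)^2$.

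Finally, substituting $\sigma^2 = c\,G^2 T\log(1/\delta)/(n^2\epsilon^2)$ from Lemma \ref{l1} together with $T = \mathcal{O}(\log n)$ gives the claimed bound. The only delicate step is the high-probability control of the cumulative noise: because the $b_t$ are independent Gaussians, a naive Markov bound on $\sum \|b_t\|^2$ would give only expected-value guarantees, and the per-iteration tail bound of Lemma \ref{l5} combined with a union bound is what converts this into a uniform high-probability statement at the cost of only the $(8\log(T/\zeta)/p)^{1/4}$ factor that appears in the theorem. Everything else is routine smoothness-plus-PL geometric convergence.
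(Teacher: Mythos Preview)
Your proposal is correct and follows essentially the same route as the paper: smoothness descent lemma, cancellation of the cross term with $\eta=1/L$, PL to get the $(1-\mu/L)$ contraction, unrolling, bounding the initial gap by $M_\ell$, a per-iteration application of Lemma~\ref{l5} with a union bound over $t=1,\dots,T$, summing the geometric series to $L/\mu$, and finally substituting $\sigma^2$ from Lemma~\ref{l1} with $T=\mathcal{O}(\log n)$. The paper's proof is step-for-step the same argument.
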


\begin{proof}
	
	Note that we assume the loss function is $L$-smooth (Assumption \ref{a2}, denoted by $L$) and satisfies the PL condition (Assumption \ref{a5}, denoted by $PL$), at iteration $t$, taking $\eta_t=\frac{1}{L}$, we have:
	\begin{equation}\label{E1}
	\begin{aligned}
	R_n(\hat{\theta}_{t+1})-R_n(\hat{\theta}_t)&\overset{(L)}{\leq}\langle\nabla_\theta R_n(\hat{\theta}_t),\hat{\theta}_{t+1}-\hat{\theta}_t\rangle+\frac{L}{2}\left\Vert\hat{\theta}_{t+1}-\hat{\theta}_t\right\Vert_2^2 \\
	&=-\eta_t\langle\nabla_\theta R_n(\hat{\theta}_t),\nabla_\theta R_n(\hat{\theta}_t)+b_{t+1}\rangle+\frac{L\eta_t^2}{2}\left\Vert\nabla_\theta R_n(\hat{\theta}_t)+b_{t+1}\right\Vert_2^2 \\
	&=-\frac{1}{L}\left\Vert\nabla_\theta R_n(\hat{\theta}_t)\right\Vert_2^2+\frac{1}{2L}\left\Vert\nabla_\theta R_n(\hat{\theta}_t)\right\Vert_2^2+\frac{1}{2L}\|b\|_2^2 \\
	&=-\frac{1}{2L}\left\Vert\nabla_\theta R_n(\hat{\theta}_t)\right\Vert_2^2+\frac{1}{2L}\|b_{t+1}\|_2^2 \\
	&\overset{(PL)}{\leq}-\frac{\mu}{L}\left(R_n(\hat{\theta}_t)-R_n(\theta_n^*)\right)+\frac{1}{2L}\|b_{t+1}\|_2^2.
	\end{aligned}
	\end{equation}
	
	Adding $R_n(\hat{\theta}_t)-R_n(\theta_n^*)$ to both sides, we have:
	\begin{equation*}
	R_n(\hat{\theta}_{t+1})-R_n(\theta_n^*)\leq\left(1-\frac{\mu}{L}\right)\left(R_n(\hat{\theta}_t)-R_n(\theta_n^*)\right)+\frac{1}{2L}\|b_{t+1}\|_2^2.
	\end{equation*}
	
	Summing over $T$ iterations, we have:
	\begin{equation}\label{OEnonoise}
	\begin{aligned}
	R_n(\hat{\theta}_n)-R_n(\theta_n^*)&\leq\left(1-\frac{\mu}{L}\right)^T\left(R_n(\hat{\theta}_0)-R_n(\theta_n^*)\right)+\frac{1}{2L}\sum_{t=0}^{T-1}\left(1-\frac{\mu}{L}\right)^t\|b_{t+1}\|_2^2.
	\end{aligned}
	\end{equation}
	
	With Lemma \ref{l5}, with probability at least $1-\xi$, we have:
	\begin{equation*}
	\|b_t\|_2^2\leq\sigma^2p\left(1+\left(\frac{8\log(T/\xi)}{p}\right)^{1/4}\right)^2,
	\end{equation*}
	for all $t=1,\cdots,T$.
	
	Then, with the high probability upper bound of $b_t$, inequality (\ref{OEnonoise}) comes to:
	\begin{equation}
	\begin{aligned}
	R_n(\hat{\theta}_n)-R_n(\theta_n^*)&\leq\left(1-\frac{\mu}{L}\right)^T\left(R_n(\hat{\theta}_0)-R_n(\theta_n^*)\right)+\frac{1}{2L}\sum_{t=0}^{T-1}\left(1-\frac{\mu}{L}\right)^t\sigma^2p\left(1+\left(\frac{8\log(T/\zeta)}{p}\right)^{1/4}\right)^2 \\
	&\leq\left(1-\frac{\mu}{L}\right)^TM_\ell+\frac{\left(1-\left(1-\frac{\mu}{L}\right)^{T-1}\right)\sigma^2p}{2\mu}\left(1+\left(\frac{8\log(T/\zeta)}{p}\right)^{1/4}\right)^2 \\
	&\leq\left(1-\frac{\mu}{L}\right)^TM_\ell+\frac{\sigma^2p}{2\mu}\left(1+\left(\frac{8\log(T/\zeta)}{p}\right)^{1/4}\right)^2,
	\end{aligned}
	\end{equation}
	where the second inequality holds because $0\leq\ell(\cdot,\cdot)\leq M_\ell$, and $0<\frac{\mu}{L}<1$ because of the definitions of $\mu$ and $L$ \cite{wang2017differentially}.
	
	Taking $\sigma=c\frac{G\sqrt{T\log(1/\delta)}}{n\epsilon}$ given in Lemma \ref{l1}, and taking $T=\mathcal{O}(\log(n))$, then if $\zeta\in(\exp(-p/8),1)$, with probability at least $1-\zeta$, we have:
	\begin{equation*}
	R_n(\hat{\theta}_n)-R_n(\theta_n^*)\lesssim\frac{G^2p\log(n)\log(1/\delta)}{2\mu n^2\epsilon^2}\left(1+\left(\frac{8\log(T/\zeta)}{p}\right)^{1/4}\right)^2.
	\end{equation*}
	
	The result follows.
	
\end{proof}

\begin{lemma}\label{l7}
	If the loss function is $\alpha$-H{\"o}lder smooth with parameter $H$, satisfies the PL inequality with parameter $2\mu$ and the DP model is trained by $T$-iterations gradient perturbation method (\ref{DPGD}), then taking $T=\mathcal{O}\left(n^{\frac{2}{1+2\alpha}}\right)$, $\eta_t=\frac{2}{\mu(t+\kappa)}$, where $\kappa\geq\frac{2H^{1/\alpha}}{\mu}$, if $\zeta\in(\exp(-p/8),1)$, with probability at least $1-\zeta$,
	\begin{equation*}
	R_n(\hat{\theta}_n)-R_n(\theta_n^*)\lesssim\frac{G'^2\sqrt{p\log(1/\delta)}}{n^{\frac{2\alpha}{1+2\alpha}}\epsilon}\left(1+\left(\frac{8\log(T/\zeta)}{p}\right)^{1/4}\right).
	\end{equation*}
\end{lemma}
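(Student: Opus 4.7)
The plan is to carry out a convergence analysis paralleling the proof of Lemma~\ref{l6}, but with the quadratic descent lemma replaced by the H{\"o}lder smoothness expansion of Lemma~\ref{t1}. First I would start from
\[
R_n(\hat{\theta}_{t+1}) - R_n(\hat{\theta}_t) \leq \langle \nabla R_n(\hat{\theta}_t),\, \hat{\theta}_{t+1} - \hat{\theta}_t\rangle + \frac{H}{\alpha+1}\left\Vert \hat{\theta}_{t+1} - \hat{\theta}_t\right\Vert_2^{\alpha+1}
\]
and plug in the update $\hat{\theta}_{t+1} - \hat{\theta}_t = -\eta_t(\nabla R_n(\hat{\theta}_t) + b_{t+1})$. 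This produces a descent contribution $-\eta_t\|\nabla R_n(\hat{\theta}_t)\|_2^2$, a mean-zero cross term $\xi_{t+1}:=-\eta_t\langle \nabla R_n(\hat{\theta}_t), b_{t+1}\rangle$, and a H{\"o}lder remainder proportional to $\eta_t^{\alpha+1}\|\nabla R_n(\hat{\theta}_t) + b_{t+1}\|_2^{\alpha+1}$, which I would split via $\|a+b\|_2^{\alpha+1} \leq 2^{\alpha}(\|a\|_2^{\alpha+1} + \|b\|_2^{\alpha+1})$ into a gradient part and a noise part.

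Next, I would apply Young's inequality $\|\nabla R_n(\hat{\theta}_t)\|_2^{\alpha+1}\leq\frac{\alpha+1}{2}\|\nabla R_n(\hat{\theta}_t)\|_2^2 + \frac{1-\alpha}{2}$ to the gradient part, producing a quadratic piece that merges into the leading $-\eta_t\|\nabla R_n(\hat{\theta}_t)\|_2^2$ and leaving behind the deterministic residual $\frac{H(1-\alpha)}{2(\alpha+1)}\eta_t^{\alpha+1}$ flagged in Remark~\ref{r4}. The hypothesis $\kappa\geq 2H^{1/\alpha}/\mu$ keeps $\eta_t$ small enough that the net coefficient of $\|\nabla R_n(\hat{\theta}_t)\|_2^2$ remains at most $-\eta_t/2$. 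Invoking the PL inequality (Assumption~\ref{a5}) on this residual gradient norm then converts the one-step inequality into a contraction of the form
\[
R_n(\hat{\theta}_{t+1}) - R_n(\theta_n^*) \leq (1-\mu\eta_t)\bigl(R_n(\hat{\theta}_t) - R_n(\theta_n^*)\bigr) + \xi_{t+1} + c\,\eta_t^{\alpha+1}\bigl(1 + \|b_{t+1}\|_2^{\alpha+1}\bigr).
\]

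Unrolling with $\eta_t = 2/(\mu(t+\kappa))$, the factor $\prod_{s=t+1}^{T-1}(1-\mu\eta_s)$ telescopes to $\mathcal{O}((t+\kappa)^{2}/(T+\kappa)^{2})$, so after summation the deterministic residual contributes $\mathcal{O}(T^{-\alpha})$ and the noise residual contributes $\mathcal{O}(\sigma^{\alpha+1} p^{(\alpha+1)/2}/T^{\alpha})$ once Lemma~\ref{l5} is applied uniformly over $t=1,\dots,T$ with probability $1-\zeta$. The martingale sum in $\xi_{t+1}$ is controlled by a Bernstein or Azuma inequality to give a high-probability bound of comparable order. Substituting $\sigma^2 = \Theta(G'^2T\log(1/\delta)/(n^2\epsilon^2))$ from Lemma~\ref{l1} with $G' = \max\{2HM_\mathcal{C},H\}$ (Remark~\ref{r1}) and balancing the deterministic and stochastic rates yields the optimal $T = \Theta(n^{2/(1+2\alpha)})$, which produces exactly the claimed bound.

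The main obstacle I expect is the Young-inequality bookkeeping together with the martingale concentration for the cross terms $\xi_t$. Because the H{\"o}lder exponent $\alpha+1 \neq 2$, the clean cross-term cancellation that made the proof of Lemma~\ref{l6} work at $\eta_t = 1/L$ is no longer available, so one must choose $\kappa$ just large enough to preserve a strict contraction after absorbing the H{\"o}lder gradient piece, while simultaneously tracking the accumulated $\eta_t^{\alpha+1}(1+\|b_{t+1}\|_2^{\alpha+1})$ residuals through the telescoped product with a uniform high-probability guarantee; this is precisely the slack that the m-NGP variant of Algorithm~\ref{alg1} is designed to remove.
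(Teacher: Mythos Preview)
Your outline tracks the paper's argument closely through the H{\"o}lder descent step, the Young-inequality linearization producing the $\frac{H(1-\alpha)}{2(\alpha+1)}\eta_t^{\alpha+1}$ residual, the use of $\kappa\geq 2H^{1/\alpha}/\mu$ to retain a $-\eta_t/2$ coefficient on $\|\nabla R_n\|_2^2$, the PL contraction, and the $(t+\kappa)(t+\kappa-1)$ telescoping. Two differences are worth flagging.

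\textbf{Where you split vs.\ where the paper splits.} You first separate $\|\nabla R_n+b\|_2^{\alpha+1}\leq 2^\alpha(\|\nabla R_n\|_2^{\alpha+1}+\|b\|_2^{\alpha+1})$ and then apply Young only to $\|\nabla R_n\|_2^{\alpha+1}$. The paper instead applies Young directly to $\|\nabla R_n+b\|_2^{\alpha+1}$, obtaining $\frac{1-\alpha}{2}+\frac{\alpha+1}{2}\|\nabla R_n+b\|_2^2$, and then expands the square. Both routes produce the same deterministic residual, so this is cosmetic.

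\textbf{The cross term is where your bookkeeping diverges.} The paper does \emph{not} treat $-\eta_t\langle\nabla R_n(\hat\theta_t),b_{t+1}\rangle$ as a martingale. It bounds it in absolute value by $\eta_t\|\nabla R_n\|_2\|b_{t+1}\|_2\leq \eta_t G'\|b_{t+1}\|_2$ and then invokes Lemma~\ref{l5}, giving a per-step contribution $\eta_t G'\sigma\sqrt{p}\bigl(1+(8\log(T/\zeta)/p)^{1/4}\bigr)$. After telescoping this becomes $\mathcal{O}\bigl(G'\sigma\sqrt{p}\bigr)=\mathcal{O}\bigl(G'^2\sqrt{Tp\log(1/\delta)}/(n\epsilon)\bigr)$, and it is precisely \emph{this} term that is balanced against $T^{-\alpha}$ to yield $T=\Theta(n^{2/(1+2\alpha)})$ and the stated rate $G'^2\sqrt{p\log(1/\delta)}/(n^{2\alpha/(1+2\alpha)}\epsilon)$. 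If you instead use Azuma on the martingale, the cross term collapses to order $G'\sigma/\sqrt{T}=\mathcal{O}(G'^2/(n\epsilon))$ with no $\sqrt{p}$ and no $T$-dependence; the remaining competitors are $T^{-\alpha}$ and your noise residual $T^{-\alpha}(\sigma\sqrt{p})^{\alpha+1}$, whose balance does \emph{not} give $T=n^{2/(1+2\alpha)}$ or the displayed bound. So your plan is internally inconsistent at the last step: with the martingale treatment you would obtain a different (arguably sharper) estimate, not ``exactly the claimed bound.'' To reproduce the lemma as stated, bound the cross term crudely as the paper does and let that linear-in-$\sigma$ term drive the balance.
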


\begin{proof}
	The proof is motivated by \cite{li2021improved}.
	
	Like the proof of Lemma \ref{l6}, by assuming that the loss function is $\alpha$-H{\"o}lder smooth (Assumption \ref{a3}, denoted by $\alpha$), via Lemma \ref{t1}, at iteration $t$,
	\begin{equation*}
	\begin{aligned}
	R_n(\hat{\theta}_{t+1})-R_n(\hat{\theta}_t)&\overset{(\alpha)}{\leq}\langle\nabla_\theta R_n(\hat{\theta}_t),\hat{\theta}_{t+1}-\hat{\theta}_t\rangle+\frac{H}{2}\left\Vert\hat{\theta}_{t+1}-\hat{\theta}_t\right\Vert_2^{\alpha+1} \\
	&\leq\langle\nabla_\theta R_n(\hat{\theta}_t),\hat{\theta}_{t+1}-\hat{\theta}_t\rangle+\frac{H}{\alpha+1}\left\Vert\hat{\theta}_{t+1}-\hat{\theta}_t\right\Vert_2^{\alpha+1} \\
	&=-\eta_t\langle\nabla_\theta R_n(\hat{\theta}_t),\nabla_\theta R_n(\hat{\theta}_t)+b_{t+1}\rangle+\frac{H\eta_t^{\alpha+1}}{\alpha+1}\left\Vert\nabla_\theta R_n(\hat{\theta}_t)+b_{t+1}\right\Vert_2^{\alpha+1} \\
	&\leq-\eta_t\left(\left\Vert\nabla_\theta R_n(\hat{\theta}_t)\right\Vert_2^2+\langle\nabla_\theta R_n(\hat{\theta}_t),b_{t+1}\rangle\right) \\
	&\quad+\frac{H\eta_t^{\alpha+1}}{\alpha+1}\left(\frac{1-\alpha}{2}+\frac{\alpha+1}{2}\left(\left\Vert\nabla_\theta R_n(\hat{\theta}_t)+b_{t+1}\right\Vert_2^{\alpha+1}\right)^{\frac{2}{\alpha+1}}\right) \\
	&\leq-\eta_t\left\Vert\nabla_\theta R_n(\hat{\theta}_t)\right\Vert_2^2+\left(H\eta_t^{\alpha+1}-\eta_t\right)\langle\nabla_\theta R_n(\hat{\theta}_t),b_{t+1}\rangle \\
	&\quad+\frac{H\eta_t^{\alpha+1}(1-\alpha)}{2(\alpha+1)}+\frac{H\eta_t^{\alpha+1}}{2}\left(\left\Vert\nabla_\theta R_n(\hat{\theta}_t)\right\Vert_2^2+\|b_{t+1}\|_2^2\right),
	\end{aligned}
	\end{equation*}
	where the third inequality is because of Young's inequality: if $p^{-1}+q^{-1}=1$ and $p>0$, then $uv\leq p^{-1}|u|^p+q^{-1}|v|^q$.
	Here we set $p^{-1}=(1-\alpha)/2$, $q^{-1}=(\alpha+1)/2$.
	And the last inequality holds because of Cauthy-Schwarz inequality.
	
	Noting that $\eta_t=\frac{2}{\mu(t+\kappa)}$ and $\kappa\geq\frac{2H^{1/\alpha}}{\mu}$, so we have: $\eta_t\leq\left(\frac{1}{H}\right)^{1/\alpha}$.
	
	As a result, we have:
	\begin{equation}\label{etaleq}
	H\eta_t^{\alpha+1}\leq H\left[\left(\frac{1}{H}\right)^{1/\alpha}\right]^\alpha\eta_t\leq\eta_t.
	\end{equation}
	
	As a result,
	\begin{equation}\label{HSt}
	\begin{aligned}
	R_n(\hat{\theta}_{t+1})-R_n(\hat{\theta}_t)&\leq-\eta_t\left\Vert\nabla_\theta R_n(\hat{\theta}_t)\right\Vert_2^2+\frac{H\eta_t^{\alpha+1}(1-\alpha)}{2(\alpha+1)}+\frac{H\eta^{\alpha+1}}{2}\left\Vert\nabla_\theta R_n(\hat{\theta}_t)\right\Vert_2^2 \\
	&\quad+\eta_t\|\nabla_\theta R_n(\hat{\theta}_t)\|_2\|b_{t+1}\|_2+\frac{H\eta_t^{\alpha+1}}{2}\|b_{t+1}\|_2^2 \\
	&\leq-\frac{\eta_t}{2}\left\Vert\nabla_\theta R_n(\hat{\theta}_t)\right\Vert_2^2+\frac{H\eta_t^{\alpha+1}(1-\alpha)}{2(\alpha+1)}+2\eta_t\|\nabla_\theta R_n(\hat{\theta}_t)\|_2\|b_{t+1}\|_2+\frac{\eta_t}{2}\|b_{t+1}\|_2^2 \\
	&\overset{(PL)}{\leq}-\frac{\eta_t}{4}\left\Vert\nabla_\theta R_n(\hat{\theta}_t)\right\Vert_2^2-\mu\eta_t\left(R_n(\hat{\theta}_t)-R_n(\theta_n^*)\right)+\frac{H\eta_t^{\alpha+1}(1-\alpha)}{2(\alpha+1)} \\
	&\quad+\eta_tG'\|b_{t+1}\|_2+\frac{\eta_t}{2}\|b_{t+1}\|_2^2,
	\end{aligned}
	\end{equation}
	where the second inequality is because of (\ref{etaleq}) and the last inequality holds because we assume that the loss function satisfies the PL condition with parameter $2\mu$, and $G'=\max\{2HM_\mathcal{C},H\}$, as discussed in Lemma \ref{l1}.
	
	Adding $\frac{\eta_t}{4}\left\Vert\nabla_\theta R_n(\hat{\theta}_t)\right\Vert_2^2-R_n(\theta_n^*)$ to both sides of (\ref{HSt}), we have
	\begin{equation*}
	\begin{aligned}
	R_n(\hat{\theta}_{t+1})-R_n(\theta_n^*)+\frac{\eta_t}{4}\left\Vert\nabla_\theta R_n(\hat{\theta}_t)\right\Vert_2^2&\leq\left(1-\mu\eta_t\right)\left(R_n(\hat{\theta}_t)-R_n(\theta_n^*)\right)+\frac{H\eta_t^{\alpha+1}(1-\alpha)}{2(\alpha+1)} \\
	&\quad+\eta_tG'\|b_{t+1}\|_2+\frac{\eta_t}{2}\|b_{t+1}\|_2^2.
	\end{aligned}
	\end{equation*}
	
	Taking $\eta_t=\frac{2}{\mu(t+\kappa)}$,
	\begin{equation*}
	\begin{aligned}
	R_n(\hat{\theta}_{t+1})-R_n(\theta_n^*)+\frac{1}{2\mu(t+\kappa)}\left\Vert\nabla_\theta R_n(\hat{\theta}_t)\right\Vert_2^2&\leq\frac{t+\kappa-2}{t+\kappa}\left(R_n(\hat{\theta}_t)-R_n(\theta_n^*)\right)+\frac{H\eta_t^{\alpha+1}(1-\alpha)}{2(\alpha+1)} \\
	&\quad+\eta_tG'\|b_{t+1}\|_2+\frac{\eta_t}{2}\|b_{t+1}\|_2^2.
	\end{aligned}
	\end{equation*}
	
	Multiply both side by $(t+\kappa)(t+\kappa-1)$,
	\begin{equation}\label{E2}
	\begin{aligned}
	&(t+\kappa)(t+\kappa-1)\left(R_n(\hat{\theta}_{t+1})-R_n(\theta_n^*)\right)+\frac{t+\kappa-1}{2\mu}\left\Vert\nabla_\theta R_n(\hat{\theta}_t)\right\Vert_2^2 \\
	&\leq(t+\kappa-1)(t+\kappa-2)\left(R_n(\hat{\theta}_t)-R_n(\theta_n^*)\right)+\left(t+\kappa\right)^{-\alpha}(t+\kappa-1)\frac{H(1-\alpha)}{2(\alpha+1)}\left(\frac{2}{\mu}\right)^{1+\alpha} \\
	&\quad+\frac{2G'(t+\kappa-1)}{\mu}\|b_{t+1}\|_2+\frac{t+\kappa-1}{\mu}\|b_{t+1}\|_2^2.
	\end{aligned}
	\end{equation}
	
	With Lemma \ref{l5}, for each $t=1,\cdots,T$, with probability at least $1-\xi$, we have:
	\begin{equation*}
	\begin{aligned}
	\|b_t\|_2&\leq\sigma\sqrt{p}\left(1+\left(\frac{8\log(1/\xi)}{p}\right)^{1/4}\right), \\
	\|b_t\|_2^2&\leq\sigma^2p\left(1+\left(\frac{8\log(1/\xi)}{p}\right)^{1/4}\right)^2.
	\end{aligned}
	\end{equation*}
	
	So with probability at least $1-\xi$:
	\begin{equation*}
	\begin{aligned}
	&(t+\kappa)(t+\kappa-1)\left(R_n(\hat{\theta}_{t+1})-R_n(\theta_n^*)\right)+\frac{t+\kappa-1}{2\mu}\left\Vert\nabla_\theta R_n(\hat{\theta}_t)\right\Vert_2^2 \\
	&\leq(t+\kappa-1)(t+\kappa-2)\left(R_n(\hat{\theta}_t)-R_n(\theta_n^*)\right)+\left(t+\kappa\right)^{-\alpha}(t+\kappa-1)\frac{H(1-\alpha)}{2(\alpha+1)}\left(\frac{2}{\mu}\right)^{1+\alpha} \\
	&\quad+\frac{2G'(t+\kappa-1)\sigma\sqrt{p}}{\mu}\left(1+\left(\frac{8\log(1/\xi)}{p}\right)^{1/4}\right)+\frac{(t+\kappa-1)\sigma^2p}{\mu}\left(1+\left(\frac{8\log(1/\xi)}{p}\right)^{1/4}\right)^2.
	\end{aligned}
	\end{equation*}
	
	By summing over $T$ iterations and taking $\xi=\zeta/T$, with probability at least $1-\zeta$, we have:
	\begin{equation}\label{HST}
	\begin{aligned}
	&(T+\kappa)(T+\kappa-1)\left(R_n(\hat{\theta}_{T+1})-R_n(\theta_n^*)\right)+\sum_{t=1}^{T}\frac{t+\kappa-1}{2\mu}\left\Vert\nabla_\theta R_n(\hat{\theta}_t)\right\Vert_2^2 \\
	&\leq\kappa(\kappa-1)\left(R_n(\hat{\theta}_1)-R_n(\theta_n^*)\right)+\sum_{t=1}^{T}\left(t+\kappa\right)^{-\alpha}(t+\kappa-1)\frac{H(1-\alpha)}{2(\alpha+1)}\left(\frac{2}{\mu}\right)^{1+\alpha} \\
	&\quad+\sum_{t=1}^{T}\frac{2G'(t+\kappa-1)\sigma\sqrt{p}}{\mu}\left(1+\left(\frac{8\log(T/\zeta)}{p}\right)^{1/4}\right) \\
	&\quad+\sum_{t=1}^T\frac{(t+\kappa-1)\sigma^2p}{\mu}\left(1+\left(\frac{8\log(T/\zeta)}{p}\right)^{1/4}\right)^2.
	\end{aligned}
	\end{equation}
	
	Here, for simplicity, we represent $t=0,\cdots,T-1$ by $t=1,\cdots,T$.
	
	Then we bound term $\sum_{t=1}^{T}\left(t+\kappa\right)^{-\alpha}(t+\kappa-1)\frac{H(1-\alpha)}{2(\alpha+1)}\left(\frac{2}{\mu}\right)^{1+\alpha}$.
	
	Note that:
	\begin{equation*}
	\sum_{t=1}^{T}\left(t+\kappa\right)^{-\alpha}(t+\kappa-1)\leq\sum_{t=1}^{T}\left(t+\kappa\right)^{1-\alpha}\leq\int_1^T(t+\kappa)^{1-\alpha}dt\leq\frac{(T+\kappa)^{2-\alpha}}{2-\alpha}.
	\end{equation*}
	
	Plugging the result above back into (\ref{HST}), and note that $0\leq\ell(\cdot,\cdot)\leq M_\ell$, we have:
	\begin{equation*}
	\begin{aligned}
	(T+\kappa)(T+\kappa-1)\left(R_n(\hat{\theta}_{T+1})-R_n(\theta_n^*)\right)&\leq\kappa(\kappa-1)M_\ell+\frac{(T+\kappa)^{2-\alpha}}{2-\alpha}\frac{H(1-\alpha)}{2(\alpha+1)}\left(\frac{2}{\mu}\right)^{1+\alpha} \\
	&\quad+\left(T\kappa+\frac{T(T-1)}{2}\right)\frac{2G'\sigma\sqrt{p}}{\mu}\left(1+\left(\frac{8\log(T/\zeta)}{p}\right)^{1/4}\right) \\
	&\quad+\left(T\kappa+\frac{T(T-1)}{2}\right)\frac{\sigma^2p}{\mu}\left(1+\left(\frac{8\log(T/\zeta)}{p}\right)^{1/4}\right)^2.
	\end{aligned}
	\end{equation*}
	
	As a result, taking $\sigma$ given in Lemma \ref{l1}, with probability at least $1-\zeta$, we have:
	\begin{equation*}
	R_n(\hat{\theta}_{T+1})-R_n(\theta_n^*)\lesssim T^{-\alpha}+\frac{G'^2\sqrt{Tp\log(1/\delta)}}{n\epsilon}\left(1+\left(\frac{8\log(T/\zeta)}{p}\right)^{1/4}\right).
	\end{equation*}
	
	Taking $T=\mathcal{O}\left(n^{\frac{2}{1+2\alpha}}\right)$, with probability at least $1-\zeta$, we have:
	\begin{equation*}
	R_n(\hat{\theta}_n)-R_n(\theta_n^*)\lesssim\frac{G'^2\sqrt{p\log(1/\delta)}}{n^{\frac{2\alpha}{1+2\alpha}}\epsilon}\left(1+\left(\frac{8\log(T/\zeta)}{p}\right)^{1/4}\right).
	\end{equation*}
	
	The result follows.
	
\end{proof}

\subsection{A.3. Proof of Theorem \ref{t2}}

Before the detailed proof, we first prove the following lemma \ref{l2}.
To get Lemma \ref{l2}, we need the following lemmas given in \cite{bousquet2020sharper}.
\begin{lemma}[\cite{bousquet2020sharper}]\label{l3}
	Assume that $z_1,\cdots,z_n$ are independent variables and the function $g_i:\mathcal{Z}^n\rightarrow\mathbb{R}$ satisfy the following properties for $i=1,\cdots,n$,
	\begin{itemize}
		\item $\mathbb{E}_{z_i}g_i(z_1,\cdots,z_n)=0$ almost surely;
		\item $\left|\mathbb{E}\left[g_i(z_1,\cdots,z_n)|z_i\right]\right|\leq K$ almost surely;
		\item $\left|g_i(z_1,\cdots,z_n)-g_i(z_1,\cdots,z_{j-1},z_j',z_{j+1},\cdots,z_n)\right|\leq\beta$.
	\end{itemize}
	Then the following inequality holds for all $q\geq2$,
	\begin{equation*}
	\left\Vert\sum_{i=1}^{n}g_i\right\Vert_q\leq12\sqrt{2}\beta qn\log(n)+4K\sqrt{qn}.
	\end{equation*}
\end{lemma}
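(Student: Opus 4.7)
The plan is to prove this moment inequality by a martingale-difference decomposition of $S = \sum_{i=1}^n g_i$ with respect to the natural filtration $\mathcal{F}_k = \sigma(z_1, \ldots, z_k)$, combined with a Burkholder/Pinelis-type moment inequality, while separating the contribution of the ``one-point conditional expectation'' term (controlled by $K$) from the ``stability'' term (controlled by $\beta$).

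First, write $S = \sum_{k=1}^n D_k$ with martingale increments $D_k = \mathbb{E}[S \mid \mathcal{F}_k] - \mathbb{E}[S \mid \mathcal{F}_{k-1}]$. Using $\mathbb{E}_{z_i} g_i = 0$, the contribution of $g_i$ to $D_k$ vanishes for $k > i$, so $D_k = \mathbb{E}[g_k \mid \mathcal{F}_k] + \sum_{i>k}\bigl(\mathbb{E}[g_i \mid \mathcal{F}_k] - \mathbb{E}[g_i \mid \mathcal{F}_{k-1}]\bigr)$. For $i>k$, by the bounded-differences hypothesis, replacing $z_k$ by an independent copy changes $g_i$ by at most $\beta$, and taking conditional expectations yields $|\mathbb{E}[g_i \mid \mathcal{F}_k] - \mathbb{E}[g_i \mid \mathcal{F}_{k-1}]| \leq \beta$. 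For $i=k$, the one-point hypothesis gives $|\mathbb{E}[g_k \mid \mathcal{F}_k]| = |\mathbb{E}[g_k \mid z_k]| \leq K$.

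Next, I would split $D_k = D_k^{(1)} + D_k^{(2)}$ where $D_k^{(1)} = \mathbb{E}[g_k \mid \mathcal{F}_k]$ absorbs the one-point expectation and $D_k^{(2)}$ collects the stability terms for $i > k$. Each $D_k^{(1)}$ is a bounded martingale difference with $|D_k^{(1)}| \leq 2K$, so a standard moment inequality for martingales with bounded increments (e.g.\ Pinelis or Rio) gives $\|\sum_k D_k^{(1)}\|_q \lesssim K\sqrt{qn}$, matching the second term of the bound. For $\sum_k D_k^{(2)}$, the raw bound $|D_k^{(2)}| \leq (n-k)\beta$ is too lossy; instead one applies a moment inequality to the sum $\sum_{k<i}\bigl(\mathbb{E}[g_i \mid \mathcal{F}_k] - \mathbb{E}[g_i \mid \mathcal{F}_{k-1}]\bigr)$ viewed as a telescoping martingale indexed by $k$, whose single-step increments are all uniformly bounded by $\beta$.

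The main obstacle is extracting the sharp $n\log(n)$ factor rather than a naive $n^{3/2}$: applying Pinelis to each $i$ separately and then summing costs $\beta q n^{3/2}$, so one needs a more refined argument that exploits symmetry/exchangeability across the $i$'s together with a peeling or maximal-inequality step (this is the technical core in \cite{bousquet2020sharper}, effectively giving a recursion in $n$ that contributes only a $\log n$ overhead). I would follow their induction-on-$n$ strategy: decouple via an independent copy of $z_k$, bound the ``symmetric'' part by the inductive hypothesis applied to the $n-1$-variable problem, and pay one logarithmic factor per halving of the index range, yielding the desired $12\sqrt{2}\beta q n \log(n)$ summand after tracking constants through the union of the two decomposition pieces.
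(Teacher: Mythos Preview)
The paper does not prove this lemma at all: it is quoted verbatim from \cite{bousquet2020sharper} and used as a black box in the proof of Lemma~\ref{l2}. So there is no ``paper's own proof'' to compare against.

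As for your sketch itself, the overall strategy (martingale decomposition plus an inductive/peeling argument to gain the $\log n$ factor) is the right one and matches what Bousquet et al.\ actually do. However, there is a concrete slip in your handling of the $K$-term. You write $D_k^{(1)}=\mathbb{E}[g_k\mid\mathcal{F}_k]$ and then claim $|\mathbb{E}[g_k\mid\mathcal{F}_k]|=|\mathbb{E}[g_k\mid z_k]|\leq K$. This identification is false: $\mathbb{E}[g_k\mid\mathcal{F}_k]$ integrates out only $z_{k+1},\ldots,z_n$, whereas the hypothesis controls $\mathbb{E}[g_k\mid z_k]$, which integrates out everything except $z_k$. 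The two differ by a quantity of order $(k-1)\beta$, so you cannot isolate the $K$-contribution this way.

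The clean fix (and what is done in \cite{bousquet2020sharper}) is to split \emph{before} the martingale step: write $g_i = h_i + \mathbb{E}[g_i\mid z_i]$ with $h_i = g_i - \mathbb{E}[g_i\mid z_i]$. The second sum $\sum_i \mathbb{E}[g_i\mid z_i]$ is a sum of independent, mean-zero, $K$-bounded random variables, so a standard moment bound (e.g.\ Marcinkiewicz--Zygmund or the version in Lemma~\ref{l4}) yields the $4K\sqrt{qn}$ term directly. The functions $h_i$ now satisfy all three hypotheses with $K=0$ and bounded-difference parameter $2\beta$, and it is to $\sum_i h_i$ that one applies the martingale decomposition and the recursive $\log n$ argument you outlined. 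With this reordering your plan goes through.
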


\begin{lemma}[\cite{bousquet2020sharper}]\label{l8}
	Under the uniform stability condition with parameter $\gamma$ and uniformly bounded loss function $\ell(\cdot,\cdot)\leq M_\ell$, we have for $g_i=\mathbb{E}_{z_i'}\left(\ell(z_i,\theta_n^{(i)})-\mathbb{E}_z\ell(z,\theta_n^{(i)})\right)$,
	\begin{equation*}
	\left|n\left(R_n(\theta_n)-R(\theta_n)\right)-\sum_{i=1}^{n}g_i\right|\leq2\gamma n.
	\end{equation*}
\end{lemma}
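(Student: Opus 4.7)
The plan is to prove the bound deterministically, via a direct decomposition that leverages only uniform stability and the triangle inequality; no concentration is needed here. First I would rewrite the empirical/population difference $n(R_n(\theta_n)-R(\theta_n)) = \sum_{i=1}^n \bigl(\ell(z_i,\theta_n) - \mathbb{E}_z\ell(z,\theta_n)\bigr)$, and then exploit the fact that $\theta_n = \theta_n(z_1,\ldots,z_n)$ does not depend on the independent resample $z_i'$, so for each $i$,
\begin{equation*}
\ell(z_i,\theta_n) - \mathbb{E}_z\ell(z,\theta_n) \;=\; \mathbb{E}_{z_i'}\!\left[\ell(z_i,\theta_n) - \mathbb{E}_z\ell(z,\theta_n)\right].
\end{equation*}
This lets me place both the empirical term and the target $g_i$ under the same operator $\mathbb{E}_{z_i'}$, so that taking the difference yields, for each $i$,
\begin{equation*}
\mathbb{E}_{z_i'}\!\left[\bigl(\ell(z_i,\theta_n)-\ell(z_i,\theta_n^{(i)})\bigr) - \bigl(\mathbb{E}_z\ell(z,\theta_n)-\mathbb{E}_z\ell(z,\theta_n^{(i)})\bigr)\right].
\end{equation*}

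Next I would apply uniform stability (Definition \ref{US}) to each of the two inner differences. Since $\theta_n$ and $\theta_n^{(i)}$ are produced from datasets that differ in exactly the $i$-th coordinate, uniform stability directly gives $|\ell(z_i,\theta_n)-\ell(z_i,\theta_n^{(i)})|\le\gamma$ pointwise. For the population term, I would pull the absolute value inside the outer $\mathbb{E}_z$, use the same pointwise uniform-stability bound to get $|\ell(z,\theta_n)-\ell(z,\theta_n^{(i)})|\le\gamma$ for every $z$, and conclude $|\mathbb{E}_z\ell(z,\theta_n)-\mathbb{E}_z\ell(z,\theta_n^{(i)})|\le\gamma$. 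A triangle inequality inside the $\mathbb{E}_{z_i'}$ therefore bounds each summand by $2\gamma$, and summing over $i=1,\ldots,n$ yields the claimed $2\gamma n$ bound after one final triangle inequality outside the sum.

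The only thing that requires care is the bookkeeping of which random variables each quantity depends on: $\theta_n$ must be independent of the specific $z_i'$ being averaged out in order to insert $\mathbb{E}_{z_i'}$ for free, and the outer expectation $\mathbb{E}_z$ is over a fresh test point independent of the training sample (and of $z_i'$). Once these independence relations are explicit, the inequality falls out without any probabilistic machinery, which is why the statement is phrased as a deterministic (sample-path) bound rather than a high-probability one. I would therefore not invoke Lemma \ref{l3} or any moment inequality here; those will be used downstream to control $\|\sum_i g_i\|_q$, whereas Lemma \ref{l8} is purely an approximation step enabling that subsequent concentration argument.
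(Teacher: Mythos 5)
Your proof is correct. The paper itself does not prove Lemma \ref{l8} --- it is imported verbatim from \cite{bousquet2020sharper} --- but your argument is exactly the standard one from that reference: write $n(R_n(\theta_n)-R(\theta_n))$ as a sum over $i$, insert $\mathbb{E}_{z_i'}$ for free since $\theta_n$ is independent of the resampled point, and bound each of the two resulting differences by $\gamma$ via uniform stability applied to the adjacent datasets producing $\theta_n$ and $\theta_n^{(i)}$ (pointwise in $z$ for the population term). The bookkeeping you flag is indeed the only delicate point, and you handle it correctly; the bound is deterministic and no concentration is needed, just as you say.
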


\begin{lemma}\label{l2}
	Defining the DP algorithm (model) training by $T$-iterations gradient perturbation method (like (\ref{DPGD})) $\hat{\theta}_n=\hat{\theta}(z_1,\cdots,z_n)$  and its independent copy $\hat{\theta}_n'=\hat{\theta}(z_1',\cdots,z_n')$.
	Then for all $q\geq2$,
	\begin{equation*}
	\left\Vert R_n(\hat{\theta}_n)-R(\hat{\theta}_n)-\frac{1}{n}\sum_{i=1}^{n}\mathbb{E}\left[\ell(z_i,\hat{\theta}_n')|z_i\right]+\mathbb{E} R(\hat{\theta}_n)\right\Vert_q\lesssim Gq\log(n)\left(\sqrt{\frac{R_n(\hat{\theta}_n)-R_n(\theta_n^*)}{\mu}}+\frac{G}{\mu n}\right).
	\end{equation*}
\end{lemma}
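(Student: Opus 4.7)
The plan is to rewrite the quantity inside the $L_q$ norm (scaled by $n$) as a centered sum $\sum_{i=1}^n \tilde{g}_i$ plus a remainder controlled by the uniform stability of the DP algorithm, and then apply the moment inequality of Lemma \ref{l3}. The construction follows the non-private template of \cite{klochkov2021stability} but must be adapted so that the swap used to produce $g_i$ is distributionally compatible with the independent copy $\hat{\theta}_n'$. Concretely, for each $i$ I introduce the swap model $\theta_n^{(i)}$ obtained by running the identical DP procedure on $(z_1,\ldots,z_{i-1},z_i',z_{i+1},\ldots,z_n)$ with an independent noise realisation $\xi^{(i)}$, where $z_i'$ is an iid ghost sample; then set $g_i := \mathbb{E}_{z_i',\xi^{(i)}}[\ell(z_i,\theta_n^{(i)}) - R(\theta_n^{(i)})]$, which is a deterministic function of $(z_1,\ldots,z_n)$. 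Two key identities hold: (a) $\mathbb{E}[g_i\mid z_{\neq i}] = 0$ because $z_i$ is independent of $\theta_n^{(i)}$, and (b) $\mathbb{E}[g_i\mid z_i] = \phi(z_i) - \mathbb{E} R(\hat{\theta}_n)$ with $\phi(z_i) := \mathbb{E}[\ell(z_i,\hat{\theta}_n')\mid z_i]$, because integrating out $z_{\neq i}$ turns $\theta_n^{(i)}$ into a distributional copy of $\hat{\theta}_n'$.

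I would then split $n$ times the target expression as $[n(R_n(\hat{\theta}_n) - R(\hat{\theta}_n)) - \sum_i g_i] + \sum_i(g_i - \mathbb{E}[g_i\mid z_i])$; the second sum equals $\sum_i g_i - \sum_i(\phi(z_i) - \mathbb{E}R(\hat{\theta}_n))$ by identity (b), so the total is precisely the desired $n(R_n(\hat{\theta}_n) - R(\hat{\theta}_n)) - \sum_i(\phi(z_i) - \mathbb{E}R(\hat{\theta}_n))$. Bracket (I) is the object bounded by Lemma \ref{l8}: combined with $G$-Lipschitzness and the noise-aware argument stability of Lemma \ref{l9}, it is pointwise at most $2G\gamma n$, where $\gamma := 2\sqrt{2}\sqrt{(R_n(\hat{\theta}_n) - R_n(\theta_n^*))/\mu} + 4G/(\mu n)$.

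For bracket (II) let $\tilde{g}_i := g_i - \mathbb{E}[g_i\mid z_i]$ and check the three hypotheses of Lemma \ref{l3}: (i) $\mathbb{E}_{z_i}\tilde{g}_i = 0$ a.s.\ follows from identity (a) together with the fact that $\mathbb{E}[g_i\mid z_i]$ is a function of $z_i$ alone with marginal expectation zero; (ii) $\mathbb{E}[\tilde{g}_i\mid z_i] = 0$ by construction, giving $K = 0$; (iii) swapping any $z_j$ changes $\theta_n^{(i)}$ by at most $\gamma$ in $\ell_2$-norm, so $G$-Lipschitzness of $\ell$ bounds the change of $g_i$ (and of $\mathbb{E}[g_i\mid z_i]$, via the same marginal identification with $\hat{\theta}_n'$) by $\lesssim G\gamma$, yielding $\beta\lesssim G\gamma$. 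Lemma \ref{l3} then gives $\|\sum_i\tilde{g}_i\|_q \lesssim G\gamma q n\log(n)$. Dividing by $n$ and combining with the pointwise bound on bracket (I) produces $\|R_n(\hat{\theta}_n) - R(\hat{\theta}_n) - \frac{1}{n}\sum_i\mathbb{E}[\ell(z_i,\hat{\theta}_n')\mid z_i] + \mathbb{E}R(\hat{\theta}_n)\|_q \lesssim G\gamma q\log(n)$, which upon inserting $\gamma$ matches the claimed inequality.

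The main obstacle is the DP-specific fact that $\theta_n^{(i)}$ uses independent noise from $\hat{\theta}_n$, which breaks the deterministic coupling behind the non-private analysis; I handle this by absorbing both noise sequences into the outer expectation so that the identification $\theta_n^{(i)}\overset{d}{=}\hat{\theta}_n'$ is marginal and the noise-aware stability of Lemma \ref{l9} applies directly. A delicate secondary point is the diagonal case $j = i$ in property (iii), where swapping $z_i$ touches the first argument of $\ell(z_i,\theta_n^{(i)})$ rather than the model; this can be handled either by absorbing the $z_i$-only contribution into a lower-order term via the boundedness of $\ell$, or by splitting $g_i$ into pieces whose dependences on $z_i$ and $z_{\neq i}$ are separated and applying Lemma \ref{l3} to each piece.
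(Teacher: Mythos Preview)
Your proposal is correct and follows essentially the same route as the paper: define $g_i=\mathbb{E}_{z_i'}[\ell(z_i,\theta_n^{(i)})-R(\theta_n^{(i)})]$, center it to $\tilde g_i=g_i-\mathbb{E}[g_i\mid z_i]$, apply Lemma~\ref{l3} with $K=0$, and bound the remainder $n(R_n(\hat\theta_n)-R(\hat\theta_n))-\sum_i g_i$ via Lemma~\ref{l8} and the $G$-Lipschitz argument stability from Lemma~\ref{l9}. Your treatment is in fact slightly more careful than the paper's, since you explicitly marginalise over the independent noise $\xi^{(i)}$ to justify the distributional identification $\theta_n^{(i)}\overset{d}{=}\hat\theta_n'$ and you flag the diagonal case $j=i$ in the bounded-difference verification, both of which the paper leaves implicit.
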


\begin{proof}
	Via Lemma \ref{l9a},
	\begin{equation*}
	\left\Vert\hat{\theta}_n-\hat{\theta}_n'\right\Vert_2\leq2\sqrt{2}\sqrt{\frac{R_n(\hat{\theta}_n)-R_n(\theta_n^*)}{\mu}}+\frac{4G}{\mu n}.
	\end{equation*}
	
	Recalling the definition of $\gamma$-uniformly stability:
	If for any $z,z',z_1,\cdots,z_n\in\mathcal{Z}$ and $i=1,\cdots,n$, it holds that
	\begin{equation*}
	\left|\ell(z,\theta_n\left(z_1,\cdots,z_n\right))-\ell(z,\theta_n\left(z_1,\cdots,z_{i-1},z',z_{i+1},\cdots,z_n\right))\right|\leq\gamma.
	\end{equation*}
	
	Then with $G$-Lipschitzness, we have:
	\begin{equation*}
	\left|\ell(z,\hat{\theta}_n)-\ell(z,\hat{\theta}_n')\right|\leq2\sqrt{2}G\sqrt{\frac{R_n(\hat{\theta}_n)-R_n(\theta_n^*)}{\mu}}+\frac{4G^2}{\mu n},
	\end{equation*}
	where $\hat{\theta}_n$ and $\hat{\theta}_n'$ are private models derived from any adjacent datasets.
	In the following, we use $\hat{\theta}_n^{(i)}$ to represent $\hat{\theta}_n'$, which means that the single different data instance is the $i^{th}$ one.
	
	Considering the function $g_i(z_1,\cdots,z_n)=\mathbb{E}_{z_i'}[\ell(z_i,\hat{\theta}_n^{(i)})]-\mathbb{E}_{z_i'}[R(\hat{\theta}_n^{(i)})]$, via the definition of $R(\hat{\theta}^{(i)})$, we have: $\mathbb{E}_{z_i}g_i(z_1,\cdots,z_n)=0$.
	
	With the stability of the DP model, we have:
	\begin{equation*}
	\left|g_i(z_1,\cdots,z_n)-g_i(z_1,\cdots,z_{j-1},z_j',z_{j+1},\cdots,z_n)\right|\leq\beta\coloneqq2\left(2\sqrt{2}G\sqrt{\frac{R_n(\hat{\theta}_n)-R_n(\theta_n^*)}{\mu}}+\frac{4G^2}{\mu n}\right).
	\end{equation*}
	
	If considering $h_i(z_1,\cdots,z_n)=g_i(z_1,\cdots,z_n)-\mathbb{E}[g_i(z_1,\cdots,z_n)|z_i]$, we have:
	\begin{equation*}
	\mathbb{E}_{z_i}h_i(z_1,\cdots,z_n)=0
	\end{equation*}
	almost surely, and
	\begin{equation*}
	\left|h_i(z_1,\cdots,z_n)-h_i(z_1,\cdots,z_{j-1},z_j',z_{j+1},\cdots,z_n)\right|\leq2\beta=4\left(2\sqrt{2}G\sqrt{\frac{R_n(\hat{\theta}_n)-R_n(\theta_n^*)}{\mu}}+\frac{4G^2}{\mu n}\right).
	\end{equation*}
	
	Via the definition of $h_i$, we observe that $\mathbb{E}\left[h_i|z_i\right]=0$ almost surely, which further implies $K=0$ in Lemma \ref{l3}, so we have for $q\geq2$:
	\begin{equation*}
	\left\Vert\sum_{i=1}^{n}h_i\right\Vert_q=\left\Vert\sum_{i=1}^{n}\left(g_i-\mathbb{E}\left[g_i|z_i\right]\right)\right\Vert_q\leq192Gqn\log(n)\left(\sqrt{\frac{R_n(\hat{\theta}_n)-R_n(\theta_n^*)}{2\mu}}+\frac{G}{\mu n}\right).
	\end{equation*}
	
	Via Lemma \ref{l8}, we have:
	\begin{equation*}
	\left|n\left(R_n(\hat{\theta}_n)-R(\hat{\theta}_n)\right)-\sum_{i=1}^{n}g_i\right|\leq2n\left(2\sqrt{2}G\sqrt{\frac{R_n(\hat{\theta}_n)-R_n(\theta_n^*)}{\mu}}+\frac{4G^2}{\mu n}\right).
	\end{equation*}
	
	Noting that
	\begin{equation*}
	\mathbb{E}\left[g_i|z_i\right]=\mathbb{E}\left[\ell(z_i,\hat{\theta}_n')|z_i\right]-\mathbb{E} R(\hat{\theta}_n'),
	\end{equation*}
	we have:
	\begin{equation*}
	\left\Vert R_n(\hat{\theta}_n)-R(\hat{\theta}_n)-\frac{1}{n}\sum_{i=1}^{n}\mathbb{E}\left[\ell(z_i,\hat{\theta}_n')|z_i\right]+\mathbb{E}R(\hat{\theta}_n)\right\Vert_q\lesssim Gq\log(n)\left(\sqrt{\frac{R_n(\hat{\theta}_n)-R_n(\theta_n^*)}{\mu}}+\frac{G}{\mu n}\right).
	\end{equation*}
	
	The result follows.
	
\end{proof}

To get Theorem \ref{t2}, we further need the following lemma given in \cite{boucheron2013concentration}.
\begin{lemma}[\cite{boucheron2013concentration}]\label{l4}
	If $X_1,\cdots,X_n$ are zero mean, independent and bounded $|X_i|\leq M$ almost surely, then for $q\geq2$,
	\begin{equation*}
	\left\Vert X_1+\cdots X_n\right\Vert_q\leq6\sqrt{\left(\sum_{i=1}^{n}\mathbb{E}[X_i^2]\right)q}+4qM.
	\end{equation*}
\end{lemma}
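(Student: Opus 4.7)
The statement is a moment-form Bernstein inequality for a sum of independent bounded zero-mean variables. The plan is to derive it from the classical exponential Bernstein tail bound by integrating tail probabilities against $t^{q-1}$, which is the standard route from exponential concentration to $L^q$ moment bounds.

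First I would invoke (or briefly re-derive) Bernstein's exponential inequality for $S_n := X_1 + \cdots + X_n$. Writing $V := \sum_{i=1}^n \mathbb{E}[X_i^2]$, the boundedness $|X_i| \leq M$ gives the moment bound $\mathbb{E}[|X_i|^k] \leq M^{k-2}\mathbb{E}[X_i^2]$ for $k\geq 2$; combining this with independence and the standard Cramér–Chernoff argument on $\mathbb{E} e^{\lambda S_n}$ yields
$$\mathbb{P}(|S_n| \geq t) \leq 2\exp\!\left(-\frac{t^2}{2V + 2Mt/3}\right), \qquad t > 0.$$

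Next I would apply the layer-cake identity $\|S_n\|_q^q = q\int_0^\infty t^{q-1}\mathbb{P}(|S_n| \geq t)\,dt$, splitting the integral at the natural crossover $t^{\star} := 3V/M$ between the Gaussian and the sub-exponential regimes. On $t \leq t^{\star}$ the denominator $2V + 2Mt/3 \leq 4V$, so the tail is dominated by $2e^{-t^2/(4V)}$; on $t \geq t^{\star}$ the denominator is $\leq 4Mt/3$, so the tail is dominated by $2e^{-3t/(4M)}$. Extending each regime to the whole half-line only worsens the bound, so I can evaluate two clean Gamma-function integrals: the Gaussian piece contributes a term of order $\Gamma(q/2)(4V)^{q/2}$, and the sub-exponential piece contributes a term of order $\Gamma(q)(4M/3)^q$.

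Taking $q$-th roots, using subadditivity $(a+b)^{1/q} \leq a^{1/q} + b^{1/q}$ (valid since $q \geq 1$), and applying Stirling bounds in the forms $\Gamma(q/2+1)^{1/q} \lesssim \sqrt{q}$ and $\Gamma(q+1)^{1/q} \lesssim q$, yields a bound of the shape $\|S_n\|_q \lesssim \sqrt{qV} + qM$, which is the claimed result up to absolute constants.

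The main obstacle is pinning down the explicit constants $6$ and $4$; this requires tracking factors through Stirling carefully (e.g.\ $\Gamma(q/2+1) \leq (q/2)^{q/2}$ for $q \geq 2$) and choosing the splitting threshold and the intermediate inequalities so that no slack accumulates. An alternative route, followed by Boucheron–Lugosi–Massart, avoids the Stirling juggling entirely: apply a modified log-Sobolev inequality on the product space to control the $\psi_2$ and $\psi_1$ Orlicz norms of $S_n$ simultaneously, and then convert those Orlicz norms to $L^q$ norms via the standard equivalence between tail decay and moment growth. Either route produces the stated bound; the tail-integration route is shorter to describe, while the entropy route gives the cleanest constants.
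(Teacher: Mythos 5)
The paper does not prove this lemma at all: it is imported verbatim from Boucheron--Lugosi--Massart as a black-box moment version of Bernstein's inequality, so there is no in-paper argument to compare against. Your tail-integration derivation is nevertheless correct and complete in outline, and the one place you flag as the ``main obstacle'' --- recovering the explicit constants $6$ and $4$ --- is in fact not an obstacle with your choice of split. Writing $V=\sum_i\mathbb{E}[X_i^2]$ and splitting at $t^\star=3V/M$, the Gaussian piece of the layer-cake integral is $q\int_0^\infty t^{q-1}\,2e^{-t^2/(4V)}\,dt=2\Gamma(q/2+1)(4V)^{q/2}$, whose $q$-th root is at most $2^{1/q}\sqrt{q/2}\cdot 2\sqrt{V}\le 2\sqrt{qV}$ using $\Gamma(x+1)\le x^x$ for $x\ge 1$; the sub-exponential piece is $2\Gamma(q+1)(4M/3)^q$, whose $q$-th root is at most $2^{1/q}q\cdot 4M/3\le \sqrt{2}\cdot\tfrac{4}{3}\,qM<4qM$. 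Combined with $(a+b)^{1/q}\le a^{1/q}+b^{1/q}$ this gives $\|S_n\|_q\le 2\sqrt{qV}+4qM$, which is even slightly stronger than the stated bound, so the constants $6$ and $4$ are comfortably achievable by your route. The only presentational caveat is that your alternative ``entropy method'' paragraph is closer to how Boucheron--Lugosi--Massart actually organize such moment inequalities (via $\phi$-Sobolev/modified log-Sobolev arguments), but since the paper cites rather than proves the lemma, either of your two routes is an acceptable self-contained justification.
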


Then, we can start our proof.
\begin{theorem}
	If Assumptions \ref{a1}, \ref{a2}, and \ref{a5} hold, the loss function is bounded, i.e. $0\leq\ell(\cdot,\cdot)\leq M_\ell$, taking $\sigma$ given by Lemma \ref{l1}, $T=\mathcal{O}\left(\log(n)\right)$, $\eta_1=\cdots=\eta_T=\frac{1}{L}$, if $\zeta\in(\exp(-p/8),1)$, then with probability at least $1-\zeta$:
	\begin{equation*}
	\begin{aligned}
	R(\hat{\theta}_n)-R(\theta^*)&\leq c_1\frac{G^2p\log(n)\log(1/\delta)}{n^2\epsilon^2}\left(1+\left(\frac{8\log(T/\zeta)}{p}\right)^{1/4}\right)^2 \\
	&\quad+c_2\frac{G\log^{1.5}(n)\sqrt{p\log(1/\delta)}}{n\epsilon}\left(1+\left(\frac{8\log(T/\zeta)}{p}\right)^{1/4}\right)+c_3\frac{\log(n)}{n}.
	\end{aligned}
	\end{equation*}
	for some constants $c_1,c_2,c_3>0$.
\end{theorem}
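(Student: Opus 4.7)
\textbf{Proof plan for Theorem \ref{t2}.}
The plan is to use the decomposition (\ref{EPR}) of the excess population risk into three pieces: the generalization gap $A := R(\hat\theta_n) - R_n(\hat\theta_n)$, the optimization error $B := R_n(\hat\theta_n) - R_n(\theta_n^*)$, and the empirical deviation at the optimum $C := R_n(\theta^*) - R(\theta^*)$. The optimization term $B$ will be handled in one stroke by Lemma \ref{l6}, which produces exactly the $c_2$-term of the claimed bound. The novelty is in how $A$ and $C$ are treated jointly: under Assumptions \ref{a1} and \ref{a5}, Remark \ref{r7} shows that the loss satisfies the Generalized Bernstein condition with $B=2G^2/\mu$, and it is this variance-to-risk coupling that lets me avoid the standard $\mathcal{O}(1/\sqrt{n})$ term that arises when $A$ and $C$ are handled separately.

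For the $A+C$ part, I would first invoke Lemma \ref{l2}: with an independent copy $\hat\theta_n'$, the gap $R_n(\hat\theta_n)-R(\hat\theta_n)$ equals, up to a residual of order $Gq\log(n)\bigl(\sqrt{(R_n(\hat\theta_n)-R_n(\theta_n^*))/\mu} + G/(\mu n)\bigr)$ in $L_q$-norm, the centered empirical average $\frac{1}{n}\sum_i\mathbb{E}[\ell(z_i,\hat\theta_n')\mid z_i] - \mathbb{E}R(\hat\theta_n')$. Subtracting the analogous quantity for $\theta^*$ (which is the content of $C$) gives a zero-mean sum of terms $\mathbb{E}[\ell(z_i,\hat\theta_n')-\ell(z_i,\theta^*)\mid z_i] - \mathbb{E}[R(\hat\theta_n')-R(\theta^*)]$, each bounded by $2M_\ell$ and, by Assumption \ref{a4}, with second moment $\leq B(R(\hat\theta_n')-R(\theta^*))$. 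The moment Bernstein inequality (Lemma \ref{l4}) then yields, for any $q\geq 2$,
$$\|A+C\|_q \;\lesssim\; \sqrt{\frac{q\,(R(\hat\theta_n)-R(\theta^*))}{n}} \;+\; \frac{qM_\ell}{n} \;+\; Gq\log(n)\!\left(\sqrt{\tfrac{R_n(\hat\theta_n)-R_n(\theta_n^*)}{\mu}}+\tfrac{G}{\mu n}\right).$$

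I then convert from $L_q$-moment to probability by choosing $q \asymp \log(1/\zeta)$ and applying Markov's inequality. Combined with the bound on $B$ from Lemma \ref{l6} (which uses Lemma \ref{l5} to bound the Gaussian noise terms uniformly across $T=\mathcal{O}(\log n)$ iterations), this produces a quadratic inequality in $\sqrt{R(\hat\theta_n)-R(\theta^*)}$ of the form $R(\hat\theta_n)-R(\theta^*)\lesssim B + \sqrt{q(R(\hat\theta_n)-R(\theta^*))/n} + \tfrac{q\log n}{n} + G\log(n)\sqrt{q\,B/\mu}$. Solving via the elementary trick $\sqrt{xy}\leq x/2 + y/2$ absorbs the self-bounded term, and substituting $B \lesssim G^2 p\log(n)\log(1/\delta)/(n^2\epsilon^2) \cdot (1+(\log(T/\zeta)/p)^{1/4})^2$ produces the three advertised terms: the $c_2$-term from $B$ itself, the $c_1$-term from the stability-times-$\log n\sqrt{q}$ contribution (which scales like $G\sqrt{B}\log(n)/\sqrt{\mu}\sim G\log^{1.5}(n)\sqrt{p\log(1/\delta)}/(n\epsilon)$), and the $c_3\log(n)/n$ residual from the $qM_\ell/n$ piece.

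The main obstacle is the triply-coupled nature of the argument: Lemma \ref{l2}'s stability parameter depends on $B$, Lemma \ref{l4}'s variance factor depends on the very excess risk being bounded, and Lemma \ref{l2} must be established in the noisy DP setting where the independent copy $\hat\theta_n'$ uses a fresh draw of Gaussian perturbations rather than the same noise as $\hat\theta_n$ (the subtlety highlighted in the paper's discussion). One must therefore first freeze $B$ via Lemma \ref{l6}, then verify that Lemma \ref{l9}'s argument-stability bound survives the independent-noise coupling, then apply Lemma \ref{l4} with the Bernstein-inflated variance, and finally untangle the quadratic self-bound — this is precisely the step where Assumption \ref{a4} is crucial, since without the linear variance-to-risk control the only available estimate $\mathbb{E}(\ell-\ell)^2\leq M_\ell^2$ would reintroduce the $1/\sqrt{n}$ term and destroy the improvement.
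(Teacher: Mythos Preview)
Your plan is essentially the paper's own argument: Lemma~\ref{l2} to trade the generalization gap for a centered empirical sum plus a stability residual, Lemma~\ref{l4} with the Bernstein variance control from Assumption~\ref{a4}, a self-bounding step, and Lemma~\ref{l6} for the optimization error. One small point of imprecision: the variance factor that Lemma~\ref{l4} produces is $B\bigl(\mathbb{E}[R(\hat\theta_n)]-R(\theta^*)\bigr)$, i.e.\ the \emph{expected} excess risk, not the random quantity $R(\hat\theta_n)-R(\theta^*)$; the paper resolves the resulting implicit inequality by first taking $q=2$ to close the loop on the expectation and only then converting the moment bound to a high-probability statement, rather than solving a random quadratic as you describe.
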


\begin{proof}
	
	Via Lemma \ref{l2}, we have:
	\begin{equation*}
	R_n(\hat{\theta}_n)-R(\hat{\theta}_n)=\rho+\frac{1}{n}\sum_{i=1}^{n}\mathbb{E}'\ell(z_i,\hat{\theta}_n')-\mathbb{E}R(\hat{\theta}_n),
	\end{equation*}
	where $\|\rho\|_q\lesssim Gq\log(n)\left(\sqrt{\frac{R_n(\hat{\theta}_n)-R_n(\theta_n^*)}{\mu}}+\frac{G}{\mu n}\right)$ for $q\geq2$ and $\mathbb{E}'$ denotes the expectation taken over the independent copy.
	
	Plugging this back to (\ref{EPR}), we have:
	\begin{equation*}
	R(\hat{\theta}_n)-R(\theta^*)\leq\left(R_n(\hat{\theta}_n)-R_n(\theta_n^*)\right)+\left(R_n(\theta^*)-R(\theta^*)\right)-\rho-\frac{1}{n}\sum_{i=1}^{n}\mathbb{E}'\ell(z_i,\hat{\theta}_n')+\mathbb{E}R(\hat{\theta}_n).
	\end{equation*}
	
	Noting that $R_n(\theta^*)=\frac{1}{n}\sum_{i=1}^{n}\ell(z_i,\theta^*)$, we have:
	\begin{equation}\label{EPRE'}
	R(\hat{\theta}_n)-R(\theta^*)\leq\left(R_n(\hat{\theta}_n)-R_n(\theta_n^*)\right)+\left(\mathbb{E}R(\hat{\theta}_n)-R(\theta^*)\right)-\rho-\frac{1}{n}\sum_{i=1}^{n}\left(\mathbb{E}'\ell(z_i,\hat{\theta}_n')-\ell(z_i,\theta^*)\right).
	\end{equation}
	
	Based on the definition of $R(\theta)$, Assumption \ref{a4} is equivalent to:
	\begin{equation}\label{equGBC}
	\mathbb{E}\left[\left(\ell(z,\theta)-\ell(z,\theta^*)\right)^2\right]\leq B\left(\mathbb{E}\ell(z,\theta)-\mathbb{E}\ell(z,\theta^*)\right).
	\end{equation}
	
	With $G$-Lipschitz and PL inequality with parameter $\mu$, we have $B=2G^2/\mu$.
	
	So, via (\ref{equGBC}),
	\begin{equation}\label{B}
	\begin{aligned}
	\mathbb{E}\left[\left(\mathbb{E}'\ell(z_i,\hat{\theta}_n')-\ell(z_i,\theta^*)\right)^2\right]&\leq \frac{2G^2}{\mu}\left(\mathbb{E}\mathbb{E}'\ell(z_i,\hat{\theta}_n')-\mathbb{E}\ell(z_i,\theta^*)\right) \\
	&=\frac{2G^2}{\mu}\left(\mathbb{E}[R(\hat{\theta}_n')]-R(\theta^*)\right),
	\end{aligned}
	\end{equation}
	where the last equation holds because $\mathbb{E}\mathbb{E}'\ell(z_i,\hat{\theta}_n')=\mathbb{E}[R(\hat{\theta}_n')]$.
	
	Note that term $\mathbb{E}'\ell(z_i,\hat{\theta}_n')-\ell(z_i,\theta^*)$ can be decomposed as the following:
	\begin{equation*}
	\underbrace{\mathbb{E}'\ell(z_i,\hat{\theta}_n')-\ell(z_i,\theta^*)}_{X_i}=\underbrace{\mathbb{E}'\ell(z_i,\hat{\theta}_n')-\mathbb{E}'\ell(z_i,\theta_n')}_{X_i'}+\underbrace{\mathbb{E}'\ell(z_i,\theta_n')-\ell(z_i,\theta^*)}_{X_i''}.
	\end{equation*}
	
	Via triangle inequality,
	\begin{equation*}
	\left\Vert X_i\right\Vert_q\leq\left\Vert X_i'\right\Vert_q+\left\Vert X_i''\right\Vert_q.
	\end{equation*}
	
	Recalling the definition of $R_n(\hat{\theta}_n)-R_n(\theta_n^*)$, we have:
	\begin{equation}\label{Xi'}
	\left\Vert\frac{1}{n}\sum_{i=1}^{n}X_i'\right\Vert_q=R_n(\hat{\theta}_n)-R_n(\theta_n^*).
	\end{equation}
	
	Via Lemma \ref{l4}, since $\mathbb{E}[R(\theta_n')]-R(\theta^*)$ is exactly the expectation of each $X_i''$, we have for $q\geq2$,
	\begin{equation}\label{Xi''}
	\begin{aligned}
	\left\Vert\frac{1}{n}\sum_{i=1}^{n}\mathbb{E}'\left[\ell(z_i,\theta_n')\right]-\ell(z_i,\theta^*)-\mathbb{E}[R(\theta_n')]+R(\theta^*)\right\Vert_q&\lesssim\sqrt{\mathbb{E}\left[\left(\mathbb{E}'\ell(z_i,\hat{\theta}_n')-\ell(z_i,\theta^*)\right)^2\right]\frac{q}{n}} \\
	&\leq\sqrt{\frac{2G^2}{\mu}\left(\mathbb{E}[R(\theta_n)]-R(\theta^*)\right)\frac{q}{n}}+\frac{qM_\ell}{n},
	\end{aligned}
	\end{equation}
	where the last inequality holds because of (\ref{B}) and $\mathbb{E}[R(\theta_n)]=\mathbb{E}[R(\hat{\theta}_n')]$.
	
	Plugging (\ref{Xi'}) and (\ref{Xi''}) back into (\ref{EPRE'}), we obtain for each $q\geq2$ and some constant $C>0$,
	\begin{equation}\label{1n}
	\begin{aligned}
	&\left\Vert R(\hat{\theta}_n)-R(\theta^*)-\left(R_n(\hat{\theta}_n)-R_n(\theta_n^*)\right)\right\Vert_q \\
	&\leq C\left(Gq\log(n)\left(\sqrt{\frac{R_n(\hat{\theta}_n)-R_n(\theta_n^*)}{\mu}}+\frac{G}{\mu n}\right)+\left(R_n(\hat{\theta}_n)-R_n(\theta_n^*)\right)+\sqrt{\frac{2G^2}{\mu}\left(\mathbb{E}[R(\theta_n)]-R(\theta^*)\right)\frac{q}{n}}+\frac{qM_\ell}{n}\right) \\
	&\leq\varphi C\left(\mathbb{E}[R(\theta_n)]-R(\theta^*)\right)+C\left(Gq\log(n)\sqrt{\frac{R_n(\hat{\theta}_n)-R_n(\theta_n^*)}{\mu}}+\left(R_n(\hat{\theta}_n)-R_n(\theta_n^*)\right)\right) \\
	&\quad+C\left(\left(\frac{G^2\log(n)}{\mu}+\frac{2G^2}{\mu\varphi}+M_\ell\right)\frac{q}{n}\right),
	\end{aligned}
	\end{equation}
	where the last inequality holds because for $a,b,\varphi>0$, $\sqrt{ab}\leq\varphi a+b/\varphi$.
	
	Taking $q=2$, and via Cauchy-Schwarz inequality,
	\begin{equation*}
	\begin{aligned}
	&\mathbb{E}[R(\hat{\theta}_n)]-R(\theta^*)-\mathbb{E}\left[R_n(\hat{\theta}_n)-R_n(\theta_n^*)\right] \\
	&\leq\left\Vert R(\hat{\theta}_n)-R^*-\left(R_n(\hat{\theta}_n)-R_n(\theta_n^*)\right)\right\Vert_2 \\
	&\leq\varphi C\left(\mathbb{E}[R(\hat{\theta}_n)]-R(\theta^*)\right)+C\left(2G\log(n)\sqrt{\frac{R_n(\hat{\theta}_n)-R_n(\theta_n^*)}{\mu}}+\left(R_n(\hat{\theta}_n)-R_n(\theta_n^*)\right)\right) \\
	&\quad+\frac{2C}{n}\left(\frac{G^2\log(n)}{\mu}+\frac{2G^2}{\mu\varphi}+M_\ell\right).
	\end{aligned}
	\end{equation*}
	
	The inequality above can be rewritten as:
	\begin{equation*}
	\begin{aligned}
	\mathbb{E}[R(\hat{\theta}_n)]-R(\theta^*)&\leq\frac{1}{1-\varphi C}\mathbb{E}[R_n(\hat{\theta}_n)-R_n(\theta_n^*)]+\frac{C}{1-\varphi C}\left(R_n(\hat{\theta}_n)-R_n(\theta_n^*)\right)+\frac{2CG\log(n)}{1-\varphi C}\sqrt{\frac{R_n(\hat{\theta}_n)-R_n(\theta_n^*)}{\mu}} \\
	&\quad+\frac{2C}{(1-\varphi C)n}\left(\frac{G^2\log(n)}{\mu}+\frac{2G^2}{\mu\varphi}+M_\ell\right).
	\end{aligned}
	\end{equation*}
	
	Taking this back to (\ref{1n}), we have:
	\begin{equation}\label{finalhat}
	\begin{aligned}
	R(\hat{\theta}_n)-R(\theta^*)&\leq c_1\left(R_n(\hat{\theta}_n)-R_n(\theta_n^*)\right)+c_2\log(n)\sqrt{R_n(\hat{\theta}_n)-R_n(\theta_n^*)}+c_3\frac{\log(n)}{n},
	\end{aligned}
	\end{equation}
	for some constants $c_1,c_2$ and $c_3$, where we combine $R_n(\hat{\theta}_n)-R_n(\theta_n^*)$ and its expectation together.

	Then via Lemma \ref{l6}, with probability at least $1-\zeta$, we have:
	\begin{equation*}
	\begin{aligned}
	R(\hat{\theta}_n)-R(\theta^*)&\leq c_1\frac{G^2p\log(n)\log(1/\delta)}{n^2\epsilon^2}\left(1+\left(\frac{8\log(T/\zeta)}{p}\right)^{1/4}\right)^2 \\
	&\quad+c_2\frac{G\log^{1.5}(n)\sqrt{p\log(1/\delta)}}{n\epsilon}\left(1+\left(\frac{8\log(T/\zeta)}{p}\right)^{1/4}\right)+c_3\frac{\log(n)}{n}.
	\end{aligned}
	\end{equation*}
	for some constants $c_1,c_2,c_3>0$.
	
	The result follows.
\end{proof}

\subsection{A.4. Proof of Theorem \ref{t3}}

\begin{theorem}
	If the loss function is $\alpha$-H{\"o}lder smooth (Assumption \ref{a3}) with parameter $H$, and satisfies the PL condition with parameter $2\mu$ (Assumption \ref{a5}), the loss function and the parameter space are bounded, i.e. $0\leq\ell(\cdot,\cdot)\leq M_\ell$, $\|\mathcal{C}\|_2\leq M_\mathcal{C}$.
	Taking $\sigma$ given by Lemma \ref{l1}, $T=\mathcal{O}\left(n^{\frac{2}{1+2\alpha}}\right)$, and $\eta_t=\frac{2}{\mu(t+\kappa)}$, where $\kappa\geq\frac{2H^{1/\alpha}}{\mu}$, if $\zeta\in(\exp(-p/8),1)$, then with probability at least $1-\zeta$:
	\begin{equation*}
	\begin{aligned}
	R(\hat{\theta}_n)-R(\theta^*)&\leq c_1\frac{G'^2\sqrt{p\log(1/\delta)}}{n^{\frac{2\alpha}{1+2\alpha}}\epsilon}\left(1+\left(\frac{8\log(T/\zeta)}{p}\right)^{1/4}\right) \\
	&\quad+c_2\frac{G'\log(n)\sqrt[4]{p\log(1/\delta)}}{n^{\frac{\alpha}{1+2\alpha}}\epsilon^{1/2}}\left(1+\left(\frac{8\log(T/\zeta)}{p}\right)^{1/4}\right)^{1/2}+c_3\frac{\log(n)}{n},
	\end{aligned}
	\end{equation*}
	for some constants $c_1,c_2$ and $c_3$, where $G'=\max\{2HM_\mathcal{C},H\}$.
\end{theorem}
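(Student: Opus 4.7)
The plan is to mirror the structure of the proof of Theorem~\ref{t2} while swapping in the tools appropriate to the $\alpha$-H\"older smooth setting. The crucial observation is that on a bounded parameter space, Assumption~\ref{a3} implies $G'$-Lipschitzness with $G'=\max\{2HM_\mathcal{C},H\}$, so every place in the proof of Theorem~\ref{t2} where we invoked $G$-Lipschitzness (for instance, in Lemma~\ref{l9}, in Lemma~\ref{l2}, and in verifying the Generalized Bernstein condition with $B=2G^2/\mu$) continues to work verbatim with $G$ replaced by $G'$. In particular, the QG consequence of PL still yields the argument-stability bound of Lemma~\ref{l9}, and the moment bound of Lemma~\ref{l2} still holds, after substituting $G\mapsto G'$.

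Next, I would reuse the decomposition of the excess population risk that led to the master inequality (\ref{finalhat}) in the proof of Theorem~\ref{t2}. That step only depends on: (i) the decomposition in (\ref{EPR}), (ii) the Generalized Bernstein condition in (\ref{equGBC}) with the PL-derived constant $B=2G'^2/\mu$, (iii) the moment inequality of Lemma~\ref{l4}, and (iv) Lemma~\ref{l2}. None of these ingredients use smoothness, so after the $G\mapsto G'$ substitution I obtain
\begin{equation*}
R(\hat\theta_n)-R(\theta^*)\le c_1\left(R_n(\hat\theta_n)-R_n(\theta_n^*)\right)+c_2\log(n)\sqrt{R_n(\hat\theta_n)-R_n(\theta_n^*)}+c_3\frac{\log(n)}{n}
\end{equation*}
with new constants that depend on $G'$ rather than $G$.

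The only genuinely new step is the optimization-error bound: Lemma~\ref{l6} relied on $L$-smoothness and a constant step size $\eta_t=1/L$, which are no longer available. In its place I would invoke Lemma~\ref{l7}, which is tailored to the $\alpha$-H\"older setting and uses the decaying schedule $\eta_t=2/(\mu(t+\kappa))$ with $\kappa\ge 2H^{1/\alpha}/\mu$, giving
\begin{equation*}
R_n(\hat\theta_n)-R_n(\theta_n^*)\lesssim\frac{G'^2\sqrt{p\log(1/\delta)}}{n^{\frac{2\alpha}{1+2\alpha}}\epsilon}\left(1+\left(\frac{8\log(T/\zeta)}{p}\right)^{1/4}\right)
\end{equation*}
with high probability, provided $T=\mathcal{O}(n^{2/(1+2\alpha)})$. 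Substituting this directly into the master inequality, the linear term contributes the $c_1$-term in the stated theorem, while the square-root term contributes the $c_2$-term with the exponents $\sqrt[4]{p\log(1/\delta)}$, $n^{-\alpha/(1+2\alpha)}$, and $\epsilon^{-1/2}$; the residual $c_3\log(n)/n$ term is unchanged.

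The main obstacle I anticipate is confirming that Lemma~\ref{l7}'s optimization-error bound really is the correct quantity to feed into (\ref{finalhat}), because the choice $T=\mathcal{O}(n^{2/(1+2\alpha)})$ is substantially larger than the $T=\mathcal{O}(\log n)$ used in Theorem~\ref{t2}, which inflates the noise variance $\sigma^2\propto T/n^2\epsilon^2$ in Lemma~\ref{l1}. One must verify that the resulting trade-off between optimization bias ($\propto T^{-\alpha}$) and noise accumulation ($\propto T\sigma^2$) is genuinely optimized at $T=\mathcal{O}(n^{2/(1+2\alpha)})$, which is precisely what is checked in the proof of Lemma~\ref{l7}. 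Once that is in hand, and the high-probability events from Lemmas~\ref{l5}, \ref{l7}, and \ref{l2} are combined via a union bound on $\zeta$, the stated bound follows.
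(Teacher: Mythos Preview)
Your proposal is correct and follows essentially the same route as the paper's proof: reuse the master inequality (\ref{finalhat}) from Theorem~\ref{t2} with $G$ replaced by $G'$ (since $\alpha$-H\"older smoothness on a bounded domain yields $G'$-Lipschitzness, so Lemma~\ref{l9}, Lemma~\ref{l2}, and the Bernstein constant all carry over), and then plug in Lemma~\ref{l7} in place of Lemma~\ref{l6} for the optimization error. The paper's own proof is in fact even terser than yours---it simply states the analogue of (\ref{finalhat}) and substitutes Lemma~\ref{l7}---so your more explicit accounting of which ingredients survive the change of assumptions is, if anything, a slight improvement in exposition.
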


\begin{proof}
	
	Like inequality (\ref{finalhat}) in the proof of Theorem \ref{t2} (Appendix A.3), we have:
	\begin{equation}\label{EPRHS}
	\begin{aligned}
	R(\hat{\theta}_n)-R(\theta^*)&\leq c_1\left(R_n(\hat{\theta}_n)-R_n(\theta_n^*)\right)+c_2\log(n)\sqrt{R_n(\hat{\theta}_n)-R_n(\theta_n^*)}+c_3\frac{\log(n)}{n},
	\end{aligned}
	\end{equation}
	for some constants $c_1,c_2$ and $c_3$.
	
	The differences between inequalities (\ref{finalhat}) and (\ref{EPRHS}) are in constants, for example, Lipschitz constant $G$ discussed in Appendix A.3 comes to $G'=\max\{2HM_\mathcal{C},H\}$ here, as discussed before; and in Appendix A.3, the PL condition is with parameter $\mu$, rather than $2\mu$ here.
	
	Combining the result obtained by Lemma \ref{l7}, taking $T=\mathcal{O}\left(n^{\frac{2}{1+2\alpha}}\right)$ and $\eta_t=\frac{2}{\mu(t+\kappa)}$ with $\kappa\geq\frac{2H^{1/\alpha}}{\mu}$, then with probability at least $1-\zeta$,
	\begin{equation*}
	\begin{aligned}
	R(\hat{\theta}_n)-R(\theta^*)&\leq c_1\frac{G'^2\sqrt{p\log(1/\delta)}}{n^{\frac{2\alpha}{1+2\alpha}}\epsilon}\left(1+\left(\frac{8\log(T/\zeta)}{p}\right)^{1/4}\right) \\
	&\quad+c_2\frac{G'\log(n)\sqrt[4]{p\log(1/\delta)}}{n^{\frac{\alpha}{1+2\alpha}}\epsilon^{1/2}}\left(1+\left(\frac{8\log(T/\zeta)}{p}\right)^{1/4}\right)^{1/2}+c_3\frac{\log(n)}{n},
	\end{aligned}
	\end{equation*}
	for some constants $c_1,c_2$ and $c_3$, where $G'=\max\{2HM_\mathcal{C},H\}$.
	
	The result holds.
\end{proof}

\subsection{A.5. Proof of Theorem \ref{t4}}

\begin{theorem}
	If Assumptions \ref{a3}, \ref{a5} hold, the loss function and the parameter space are bounded, i.e. $0\leq\ell(\cdot,\cdot)\leq M_\ell$, $\|\mathcal{C}\|_2\leq M_\mathcal{C}$.
	Taking $\sigma$ given by Lemma \ref{l1}, $T=\mathcal{O}\left(\log(n)\right)$, and $\eta_1=\cdots=\eta_T=\eta$, where $\eta=\left(\frac{1}{H}\right)^{1/\alpha}$, if $\zeta\in(\exp(-p/8),1)$, then with probability at least $1-\zeta$,
	\begin{equation*}
	\begin{aligned}
	R(\hat{\theta}_n)-R(\theta^*)
	&\leq c_1\frac{G'^2\log(n)p\log(1/\delta)}{n^2\epsilon^2}\left(1+\left(\frac{8\log(T/\zeta)}{p}\right)^{1/4}\right)^2 \\
	&\quad+c_2\frac{G'\log^{1.5}(n)\sqrt{p\log(1/\delta)}}{n\epsilon}\left(1+\left(\frac{8\log(T/\zeta)}{p}\right)^{1/4}\right)+c_3\frac{\log(n)}{n},
	\end{aligned}
	\end{equation*}
	for some constants $c_1,c_2,c_3>0$, where $G'=\max\{2HM_\mathcal{C},H\}$.
\end{theorem}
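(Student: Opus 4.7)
The plan is to mirror the two-stage strategy used for Theorems \ref{t2} and \ref{t3}: first reduce the excess population risk to the empirical risk gap through the Generalized Bernstein machinery, then bound the empirical risk gap (optimization error) for the m-NGP iterates under $\alpha$-Hölder smoothness, this time achieving a \emph{linear} (rather than sublinear, as in Lemma \ref{l7}) convergence rate.

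For the first stage I would reuse the structural inequality obtained in the proof of Theorem \ref{t2}, namely
\begin{equation*}
R(\hat{\theta}_n) - R(\theta^*) \leq c_1\bigl(R_n(\hat{\theta}_n) - R_n(\theta_n^*)\bigr) + c_2\log(n)\sqrt{R_n(\hat{\theta}_n) - R_n(\theta_n^*)} + c_3\frac{\log(n)}{n}.
\end{equation*}
This inequality was derived using only: (i) Lemma \ref{l9}, whose proof depends on PL/QG and Lipschitzness and is oblivious to the update rule; (ii) $G'$-Lipschitzness of $\ell$, which follows from Assumption \ref{a3} together with $\|\mathcal{C}\|_2 \leq M_\mathcal{C}$ for $G'=\max\{2HM_\mathcal{C},H\}$; and (iii) Assumption \ref{a4}, which by Remark \ref{r7} is implied by $G'$-Lipschitzness and Assumption \ref{a5}. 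None of these depend on whether the update is vanilla gradient perturbation or m-NGP, so the same structural bound transfers verbatim to the m-NGP iterates. The theorem therefore reduces to producing an optimization error bound of order $R_n(\hat{\theta}_n)-R_n(\theta_n^*)\lesssim G'^2 p\log(1/\delta)/(n^2\epsilon^2)$ up to logarithmic factors.

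For the second stage I would prove the m-NGP analogue of Lemma \ref{l6}. Starting from the Hölder descent inequality in Lemma \ref{t1} applied to consecutive iterates gives $R_n(\hat{\theta}_{t+1}) - R_n(\hat{\theta}_t) \leq \langle \nabla R_n(\hat{\theta}_t),\Delta_t\rangle + \tfrac{H}{\alpha+1}\|\Delta_t\|_2^{\alpha+1}$ with $\Delta_t=\hat{\theta}_{t+1}-\hat{\theta}_t$. With the choice $\eta=(1/H)^{1/\alpha}$ one has $H\eta^{\alpha+1}=\eta$. On the regime $\|G_t\|_2\geq 1$, the step is $\Delta_t=-\eta G_t$ and since $\alpha+1\leq 2$ one has $\|G_t\|_2^{\alpha+1}\leq\|G_t\|_2^2$, so $\tfrac{H}{\alpha+1}\|\Delta_t\|_2^{\alpha+1}\leq \tfrac{\eta}{\alpha+1}\|G_t\|_2^2$. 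This is the point where normalization pays off: the bound sidesteps the Young's inequality step employed in the proof of Lemma \ref{l7} and drops the spurious bias term $H\eta_t^{\alpha+1}(1-\alpha)/(2(\alpha+1))$ that forced the sublinear rate there. Combining with the inner-product expansion, Cauchy--Schwarz on the gradient--noise cross term, and the PL condition then yields a geometric recursion $R_n(\hat{\theta}_{t+1}) - R_n(\theta_n^*) \leq (1-c\mu\eta)\bigl(R_n(\hat{\theta}_t)-R_n(\theta_n^*)\bigr) + c'\eta\|b_{t+1}\|_2^2$, exactly matching the smooth-case recursion in (\ref{E1}).

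The main technical obstacle is the complementary regime $\|G_t\|_2<1$, where the normalization kicks in and $\Delta_t=-\eta G_t/\|G_t\|_2$ has constant norm $\eta$, so the quadratic-in-gradient progress is lost. I would resolve this by arguing that on the high-probability event of Lemma \ref{l5} where $\|b_{t+1}\|_2\lesssim\sigma\sqrt{p}$, the inequality $\|G_t\|_2<1$ forces $\|\nabla R_n(\hat{\theta}_t)\|_2\lesssim 1+\sigma\sqrt{p}$, and the PL condition then places $R_n(\hat{\theta}_t)-R_n(\theta_n^*)$ already at the target noise floor $\mathcal{O}(\sigma^2 p/\mu)$, so iterations falling in this regime cannot degrade the rate. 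Combining the two regimes, summing the geometric recursion over $T=\mathcal{O}(\log n)$ iterations, applying Lemma \ref{l5} with a union bound, and substituting $\sigma^2=\mathcal{O}(G'^2 T\log(1/\delta)/(n^2\epsilon^2))$ from Lemma \ref{l1} produces the optimization error bound; inserting it into the first-stage structural inequality gives exactly the three terms in the theorem, with $c_2\log(n)\sqrt{\cdot}$ yielding the dominant $\mathcal{O}(\sqrt{p}/(n\epsilon))$ contribution and $c_1$ times the optimization error yielding the $\mathcal{O}(p/(n\epsilon)^2)$ contribution.
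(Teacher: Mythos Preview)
Your overall strategy matches the paper's: the same structural inequality from the proof of Theorem~\ref{t2} is reused for the first stage, and the second stage derives a geometric recursion by exploiting the normalization to replace $\|G_t\|_2^{\alpha+1}$ by $\|G_t\|_2^{2}$ (valid precisely when $\|G_t\|_2\geq 1$) instead of invoking Young's inequality, with the same step size $\eta=(1/H)^{1/\alpha}$ ensuring $H\eta^{\alpha+1}=\eta$.

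Your handling of the regime $\|G_t\|_2<1$, however, has a genuine gap. From $\|\nabla R_n(\hat\theta_t)+b_{t+1}\|_2<1$ you only obtain $\|\nabla R_n(\hat\theta_t)\|_2<1+\|b_{t+1}\|_2$, and the PL condition then gives $R_n(\hat\theta_t)-R_n(\theta_n^*)\leq\frac{1}{2\mu}(1+\|b_{t+1}\|_2)^2$. This is $\mathcal{O}(1/\mu)$, a constant independent of $n$; it is \emph{not} at the noise floor $\mathcal{O}(\sigma^2 p/\mu)$ as you claim, because the additive $1$ coming from the normalization threshold does not shrink with $n$. So this branch of your argument does not deliver the required rate, and since the normalized branch can in principle occur at any iteration, you cannot conclude the overall $\mathcal{O}(\sigma^2 p/\mu)$ optimization-error bound from your case split as written.

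For comparison, the paper's proof does not perform any case split at all: it writes the update uniformly as $\hat\theta_{t+1}-\hat\theta_t=-\eta_t(\nabla R_n(\hat\theta_t)+b_t)$ and invokes the normalization solely to justify the exponent swap $\|\nabla R_n(\hat\theta_t)+b_t\|_2^{\alpha+1}\leq(\|\nabla R_n(\hat\theta_t)\|_2+\|b_t\|_2)^2$, effectively treating every iteration as if it lives in the $\|G_t\|_2\geq 1$ branch. You are being more scrupulous than the paper in flagging the normalized branch as a separate case, but your proposed resolution of it is incorrect; either a different argument is needed for that branch, or you may simply follow the paper in suppressing it.
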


\begin{proof}
	The proof is similar to Theorems \ref{t2} and \ref{t3}, we first analyze the optimization error $R_n(\hat{\theta}_n)-R_n(\theta_n^*)$.
	
	For algorithm \ref{alg1}, with normalization, if taking $\eta_t=H^{-1/\alpha}$ we have:
	\begin{equation*}
	\begin{aligned}
	R_n(\hat{\theta}_{t+1})-R_n(\hat{\theta}_t)&\overset{(\alpha)}{\leq}\langle\nabla_\theta R_n(\hat{\theta}_t),\hat{\theta}_{t+1}-\hat{\theta}_t\rangle+\frac{H}{2}\left\Vert\hat{\theta}_{t+1}-\hat{\theta}_t\right\Vert_2^{\alpha+1} \\
	&=-\eta_t\langle\nabla_\theta R_n(\hat{\theta}_t),\nabla_\theta R_n(\hat{\theta}_t)+b_t\rangle+\frac{H\eta_t^{\alpha+1}}{2}\left(\left\Vert\nabla_\theta R_n(\hat{\theta}_t)+b_t\right\Vert_2\right)^{\alpha+1} \\
	&\leq-\eta_t\left\Vert\nabla_\theta R_n(\hat{\theta}_t)\right\Vert_2^2+\frac{H\eta_t^{\alpha+1}}{2}\left(\left\Vert\nabla_\theta R_n(\hat{\theta}_t)\right\Vert_2+\left\Vert b\right\Vert_2\right)^{2}+\left(H\eta_t^{\alpha+1}-\eta_t\right)\langle\nabla_\theta R_n(\hat{\theta}_t),b_t\rangle \\
	&\leq\frac{-\eta_t}{2}\left\Vert\nabla_\theta R_n(\hat{\theta}_t)\right\Vert_2^2+\frac{H\eta_t^{\alpha+1}}{2}\|b\|_2^2 \\
	&\overset{(PL)}{\leq}-\mu\eta_t\left(R_n(\hat{\theta}_t)-R_n(\theta_n^*)\right)+\frac{\eta_t}{2}\|b\|_2^2,
	\end{aligned}
	\end{equation*}
	where the second inequality holds because by normalization, and the third inequality holds because $\eta_t=\left(\frac{1}{H}\right)^{1/\alpha}$.
	
	Summing $R_n(\theta_t)-R_n(\theta_n^*)$ to both sides, we have:
	\begin{equation*}
	R_n(\hat{\theta}_{t+1})-R_n(\theta_n^*)\leq\left(1-\mu\eta_t\right)\left(R_n(\hat{\theta}_t)-R_n(\theta_n^*)\right)+\frac{\eta_t}{2}\|b\|_2^2.
	\end{equation*}
	
	Summing over $T$ iterations,
	\begin{equation*}
	\begin{aligned}
	R_n(\hat{\theta}_T)-R_n(\theta_n^*)&\leq\left(1-\mu\eta\right)^T\left(R_n(\hat{\theta}_0)-R_n(\theta_n^*)\right)+\frac{\eta}{2}\sum_{t=0}^{T-1}\left(1-\mu\eta\right)^t\|b_t\|_2^2,
	\end{aligned}
	\end{equation*}
	where $\eta=\left(\frac{1}{H}\right)^{1/\alpha}$.
	
	With Lemma \ref{l5}, with probability at least $1-\xi$,
	\begin{equation*}
	\begin{aligned}
	R_n(\hat{\theta}_n)-R_n(\theta_n^*)&\leq\left(1-\mu\eta\right)^TM_\ell+\frac{\sigma^2p}{2\mu}\left(1+\left(\frac{8\log(T/\zeta)}{p}\right)^{1/4}\right)^2.
	\end{aligned}
	\end{equation*}
	
	Noting that by definition, $\mu\eta\leq1$, so if taking $T=\mathcal{O}\left(\log(n)\right)$, with probability at least $1-\zeta$, we have:
	\begin{equation*}
	R_n(\hat{\theta}_n)-R_n(\theta_n^*)\leq c\frac{G'^2\log(n)p\log(1/\delta)}{n^2\epsilon^2}\left(1+\left(\frac{8\log(T/\zeta)}{p}\right)^{1/4}\right)^2.
	\end{equation*}
	
	Then, like in (\ref{EPRHS}), we have:
	\begin{equation*}
	\begin{aligned}
	R(\hat{\theta}_n)-R(\theta^*)&\leq c_1\left(R_n(\hat{\theta}_n)-R_n(\theta_n^*)\right)+c_2\log(n)\sqrt{R_n(\hat{\theta}_n)-R_n(\theta_n^*)}+c_3\frac{\log(n)}{n} \\
	&\leq c_1\frac{G'^2\log(n)p\log(1/\delta)}{n^2\epsilon^2}\left(1+\left(\frac{8\log(T/\zeta)}{p}\right)^{1/4}\right)^2 \\
	&\quad+c_2\frac{G'\log^{1.5}(n)\sqrt{p\log(1/\delta)}}{n\epsilon}\left(1+\left(\frac{8\log(T/\zeta)}{p}\right)^{1/4}\right)+c_3\frac{\log(n)}{n},
	\end{aligned}
	\end{equation*}
	for some constants $c_1,c_2,c_3>0$.
	
	The result follows.
	
\end{proof}

\section{B. More Experimental Results}

\subsection{B.1. Accuracies on More Datasets}

In this section, we show the experimental results on datasets Breast Cancer, Credit Card Fraud, and Bank.
Details are shown in Figure \ref{fig4}.

The results are similar to which given by Figure \ref{fig3} in Section 5: although there are some fluctuations over some datasets (such as Bank), the performance of our proposed m-NGP method is similar to or better than traditional method on most datasets.

\begin{figure}[ht]
	\vskip 0.2in
	\begin{center}
		\centering{
			\subfigure[Breast Cancer]{\includegraphics[width=0.3\textwidth]{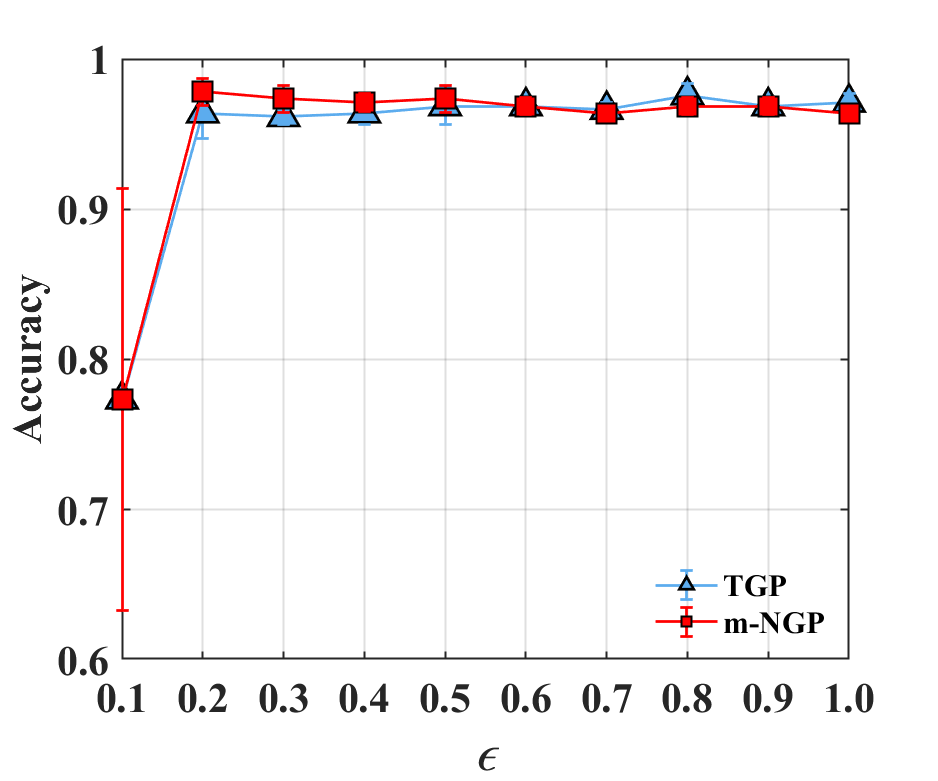}}
			\subfigure[Credit Card Fraud]{\includegraphics[width=0.3\textwidth]{CC}}
			\subfigure[Bank]{\includegraphics[width=0.3\textwidth]{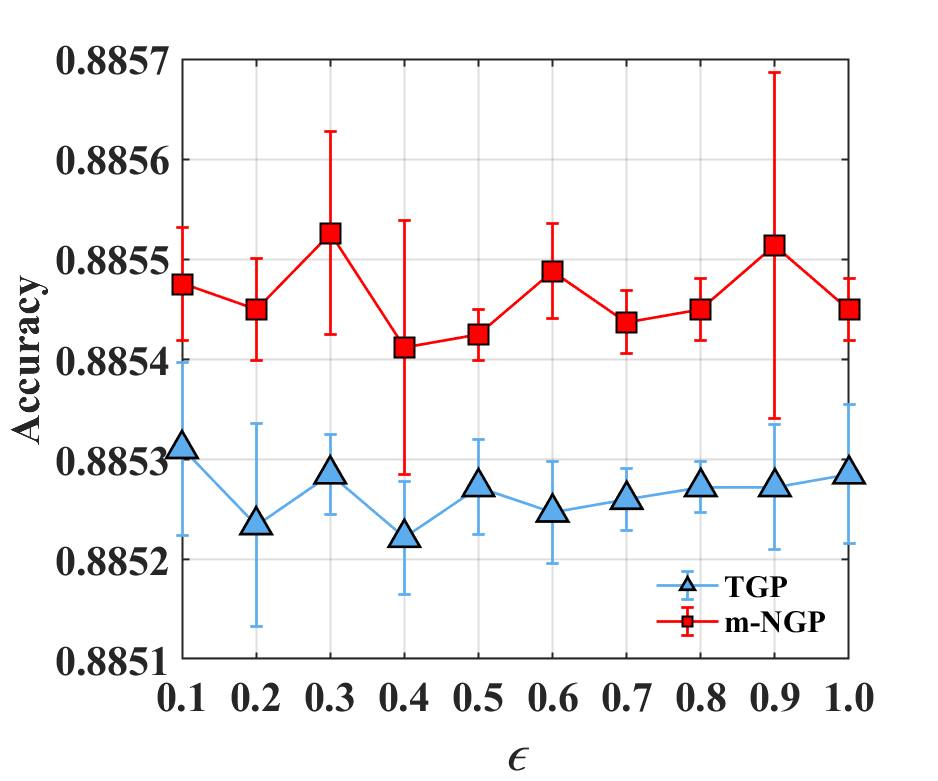}}
		}
	\end{center}
	\caption{Comparisons between Traditional Gradient Perturbation (TGP) method and max$\bf\{1,g\}$-Normalized Gradient Perturbation (m-NGP) method.}
	\label{fig4}
\end{figure}

\subsection{B.2. Convergence Rate and Normalization}

In this section, we perform experiments to demonstrate the effects on the convergence rate caused by normalization when applying m-NGP.
The privacy budget $\epsilon$ is set 0.5.
Detailed results are shown in Figure \ref{fig1}.

In Figure \ref{fig1}, the lines with dark color and light color correspond to m-NGP and TGP, respectively, and the shadow area represents the maximum and minimum loss over mutiple experiments, reflecting the variance.
And the horizontal axis is iterations and the ordinate is the loss.
The experimental results show that over most datasets, m-NGP (normalization) achieves faster convergence rate, comparing with TGP, which is in line with the theoretical analysis.

\begin{figure}[ht]
	\vskip 0.2in
	\begin{center}
		\centering{
			\subfigure[Breast Cancer]{\includegraphics[width=0.3\textwidth]{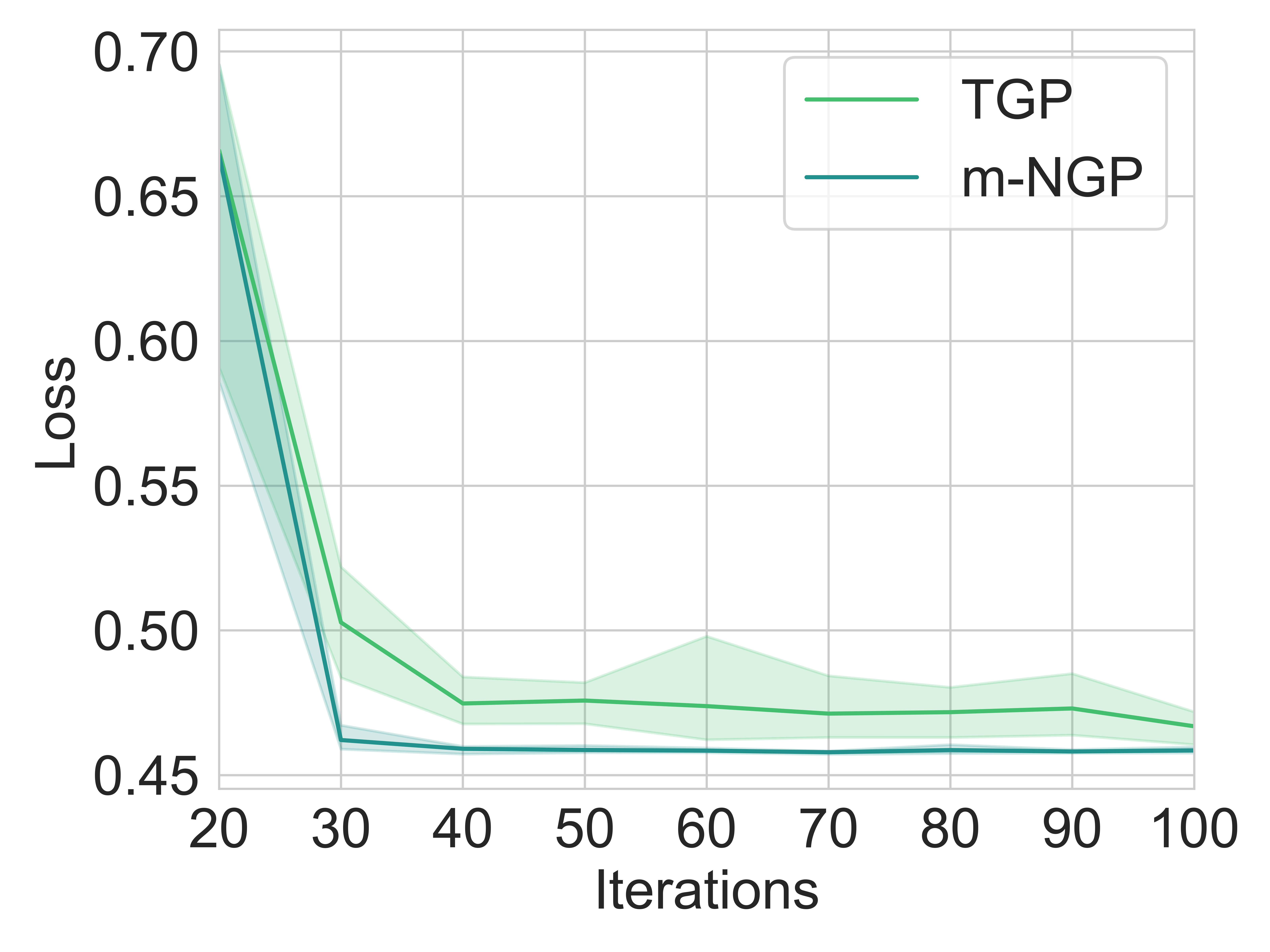}}
			\subfigure[Credit Card Fraud]{\includegraphics[width=0.3\textwidth]{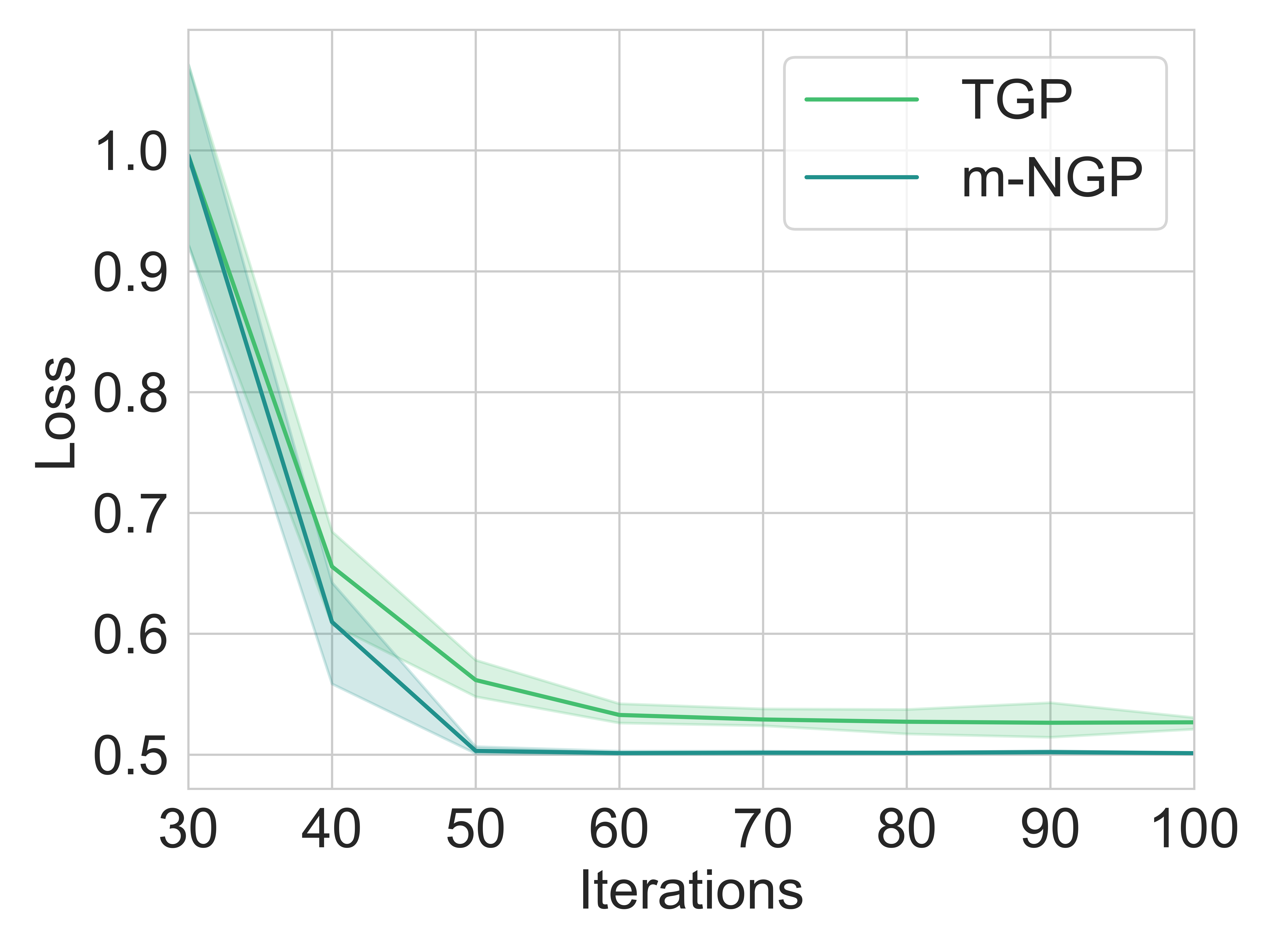}}
			\subfigure[Bank]{\includegraphics[width=0.3\textwidth]{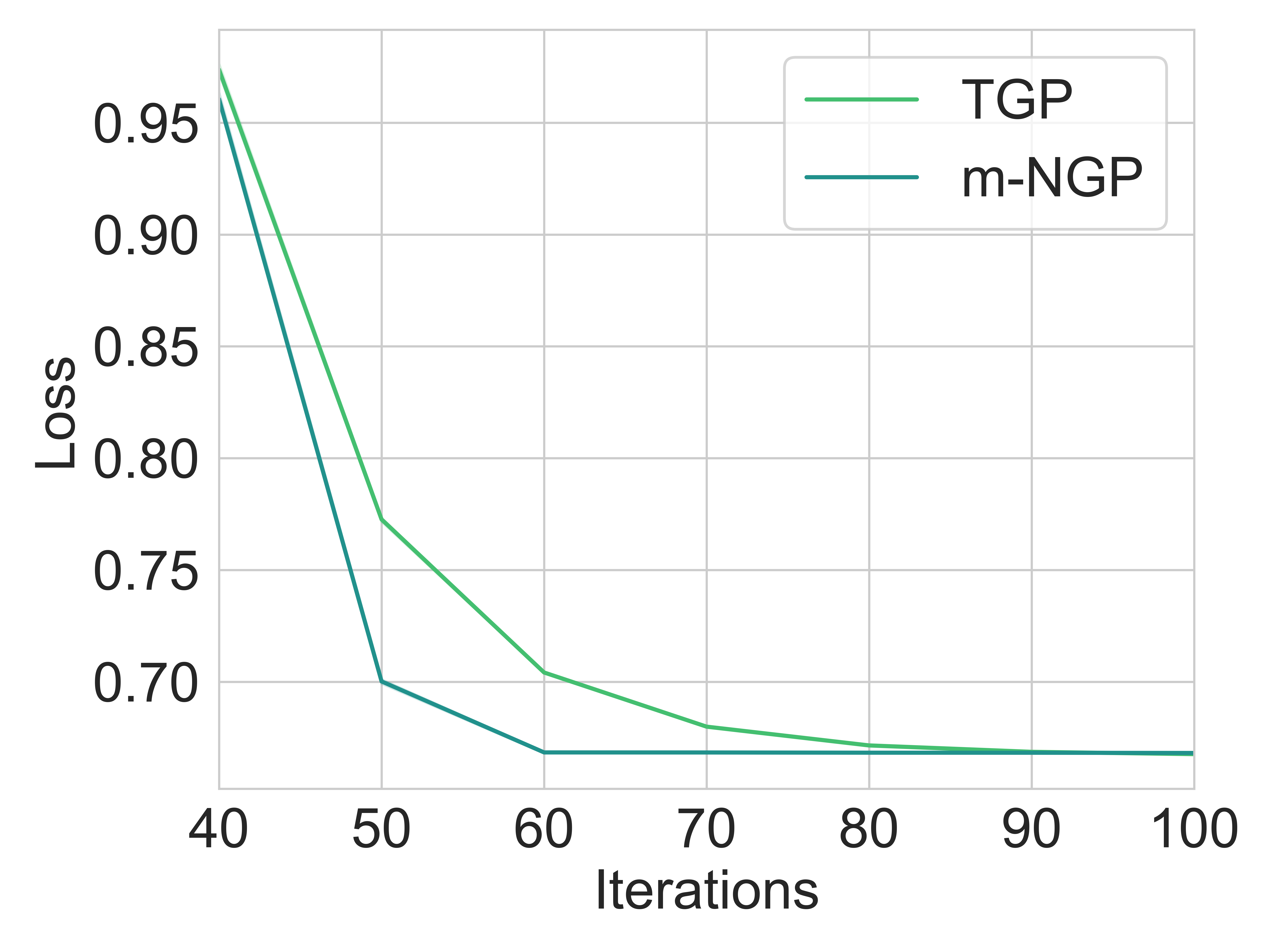}}
		}
	\end{center}
	\caption{Convergence Rates of TGP and m-NGP.}
	\label{fig1}
\end{figure}

\subsection{B.3. Accuracy and Dimension $p$}

In this section, we perform experiments to demonstrate the effects on the accuracy brought by the dimensions of data instances.
The experiments are performed on datasets Credit Card Fraud, Bank, and Adult, whose dimensions are 29, 48, and 104, respectively.
And the privacy budget $\epsilon$ is set 0.5.
The results are shown in Figure \ref{fig2}.

For abscissa, the first dimensions of parts (a), (b), and (c) are set $p=29,48,104$, they are original features given by the datasets.
And the dimensions more than them are all set $0$, to evaluate the effects brought by the magnitude of $p$, without introducing new information.

\begin{figure}[ht]
	\vskip 0.2in
	\begin{center}
		\centering{
			\subfigure[Credit Card Fraud]{\includegraphics[width=0.3\textwidth]{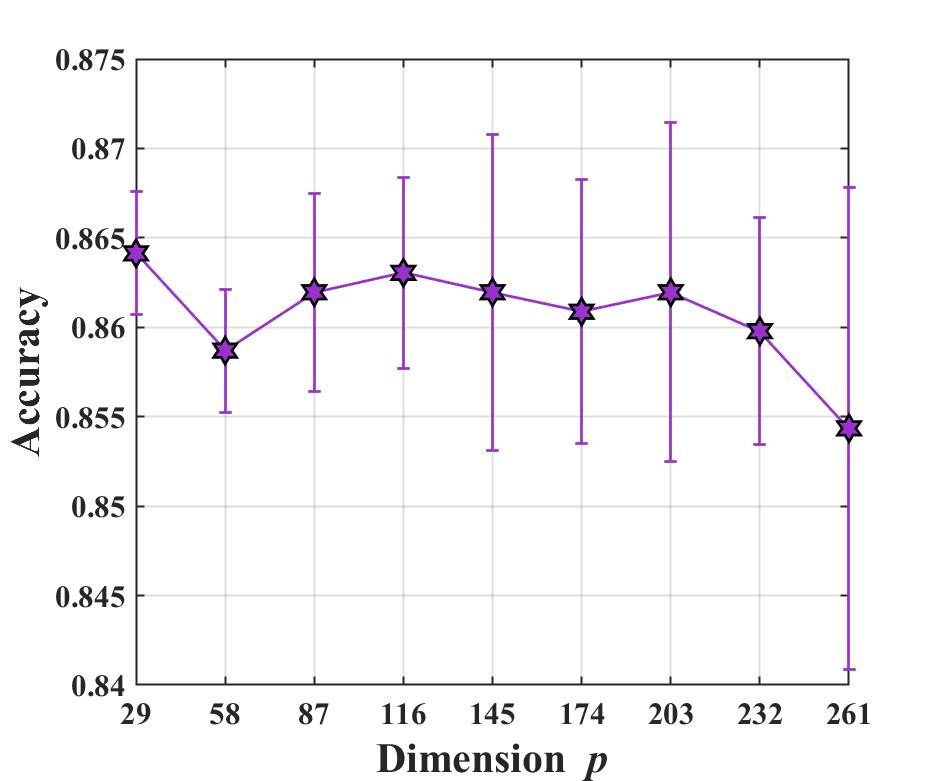}}
			\subfigure[Bank]{\includegraphics[width=0.3\textwidth]{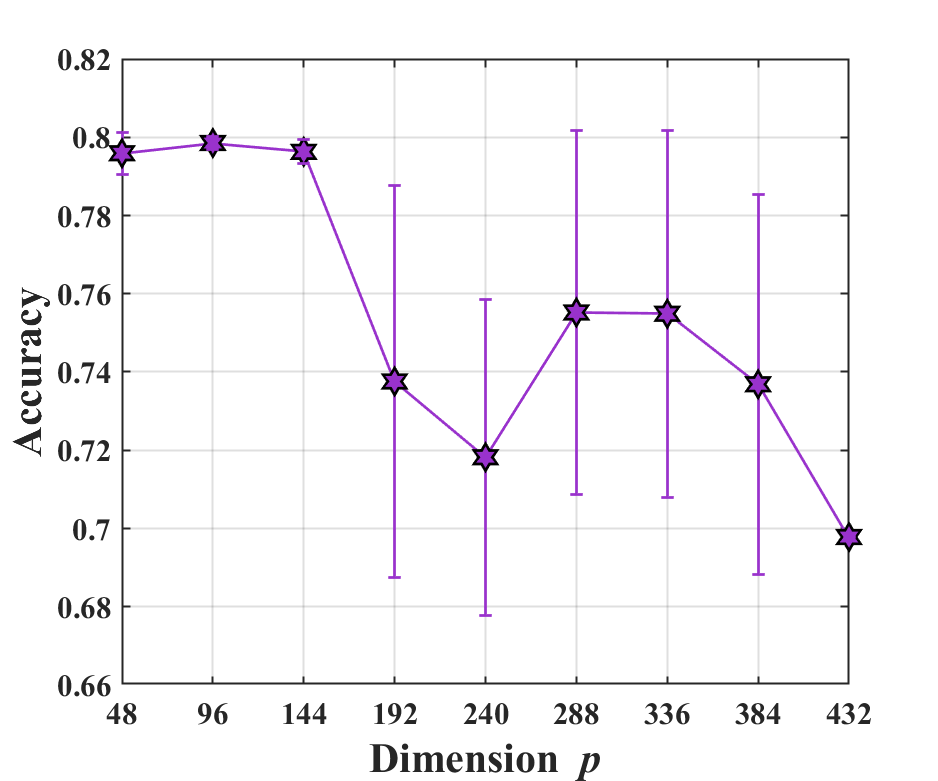}}
			\subfigure[Adult]{\includegraphics[width=0.3\textwidth]{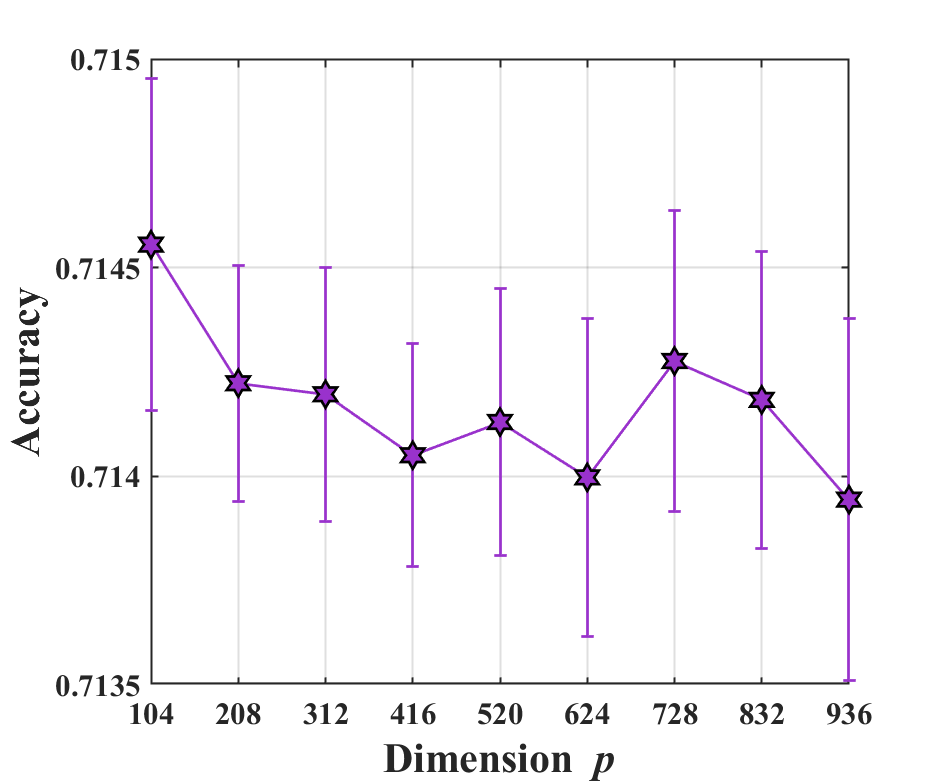}}
		}
	\end{center}
	\caption{Effects of dimension $p$ on m-NGP.}
	\label{fig2}
\end{figure}

Experimental results show that although there may exist some fluctuations caused by the injected random noise, the accuracy decreases with the increasing of $p$ overall, which is in line with the theoretical analysis given in Section 4.

\end{appendix}

\end{document}